\title{Online Convex Optimization with Unbounded Memory}
\author{%
  Raunak Kumar \\
  Department of Computer Science \\
  Cornell University \\
  Ithaca, NY 14853 \\
  \texttt{raunak@cs.cornell.edu}
  \And
  Sarah Dean \\
  Department of Computer Science \\
  Cornell University \\
  Ithaca, NY 14853 \\
  \texttt{sdean@cornell.edu}
  \And
  Robert Kleinberg \\
  Department of Computer Science \\
  Cornell University \\
  Ithaca, NY 14853 \\
  \texttt{rdk@cs.cornell.edu}
  % examples of more authors
  % \And
  % Coauthor \\
  % Affiliation \\
  % Address \\
  % \texttt{email} \\
  % \AND
  % Coauthor \\
  % Affiliation \\
  % Address \\
  % \texttt{email} \\
  % \And
  % Coauthor \\
  % Affiliation \\
  % Address \\
  % \texttt{email} \\
  % \And
  % Coauthor \\
  % Affiliation \\
  % Address \\
  % \texttt{email} \\
}
\newcommand{\circfn}[1]{\tilde{#1}}
\begin{document}

\maketitle

\begin{abstract}
  Online convex optimization (OCO) is a widely used framework in online
  learning. In each round, the learner chooses a decision in a convex set and an
  adversary chooses a convex loss function, and then the learner suffers the
  loss associated with their current decision. However, in many applications the
  learner's loss depends not only on the current decision but on the entire
  history of decisions until that point. The OCO framework and its existing
  generalizations do not capture this, and they can only be applied to many
  settings of interest after a long series of approximation arguments. They also
  leave open the question of whether the dependence on memory is tight because
  there are no non-trivial lower bounds. In this work we introduce a
  generalization of the OCO framework, ``Online Convex Optimization with
  Unbounded Memory'', that captures long-term dependence on past decisions. We
  introduce the notion of $p$-effective memory capacity, $H_p$, that quantifies
  the maximum influence of past decisions on present losses. We prove an
  $O(\sqrt{H_p T})$ upper bound on the policy regret and a matching (worst-case)
  lower bound. As a special case, we prove the first non-trivial lower bound for
  OCO with finite memory~\citep{anavaHM2015online}, which could be of
  independent interest, and also improve existing upper bounds. We demonstrate
  the broad applicability of our framework by using it to derive regret bounds,
  and to improve and simplify existing regret bound derivations, for a variety
  of online learning problems including online linear control and an online
  variant of performative prediction.
\end{abstract}

% ------------------------------------------

% Main paper.
\section{Introduction}
\label{sec:intro}

% --------------------------------

Numerous applications are characterized by multiple rounds of sequential
interactions with an environment, e.g., prediction from expert
advice~\citep{littlestoneW1989weighted,littlestoneW1994weighted}, portoflio
selection~\citep{cover1991universal}, routing~\citep{awerbuchK2008online}, etc.
One of the most popular frameworks for modelling such sequential decision-making
problems is online convex optimization (OCO)~\citep{zinkevich2003online}. The
OCO framework is as follows. In each round, the learner chooses a decision in a
convex set and an adversary chooses a convex loss function, and then the learner
suffers the loss associated with their current decision. The performance of an
algorithm is measured by regret: the difference between the algorithm's total
loss and that of the best fixed decision. We refer the reader
to~\citet{shalevshwartz2012online,hazan2019introduction,orabona2019modern} for
surveys on this topic.

However, in many applications the loss of the learner depends not only on the
current decisions but on the entire history of decisions until that point. For
example, in online linear control~\citep{agarwalBHKS2019online}, in each round
the learner chooses a ``control policy'' (i.e., decision), suffers a loss that
is a function of the action taken by this policy and the current state of the
system, and the system's state evolves according to linear dynamics. The current
state depends on the entire history of actions and, therefore, the current loss
depends not only on the current decision but the entire history of decisions.
The OCO framework cannot capture such long-term dependence of the current loss
on the past decisions and neither can existing generalizations that allow the
loss to depend on a \emph{constant} number of past
decisions~\citep{anavaHM2015online}. Although a series of approximation
arguments can be used to apply finite memory generalizations of OCO to the
online linear control problem, there is no OCO framework that captures the
complete long-term dependence of current losses on past decisions. Furthermore,
there are no non-trivial lower bounds for OCO in the memory
setting,\footnote{The trivial lower bound refers to the $\Omega(\sqrt{T})$ lower
bound for OCO in the memoryless setting.} which leaves open the question whether
the dependence on memory is tight.

\paragraph{Contributions.}
In this paper we introduce a generalization of the OCO framework, ``Online
Convex Optimization with Unbounded Memory''~(\cref{sec:framework}), that allows
the loss in the current round to depend on the entire history of decisions until
that point. We introduce the notion of $p$-effective memory capacity, $H_p$,
that quantifies the maximum influence of past decisions on present losses. We
prove an $O(\sqrt{H_p T})$ upper bound on the policy
regret~(\cref{thm:regret_upper_bound}) and a matching (worst-case) lower
bound~(\cref{thm:regret_lower_bound}). As a special case, we prove the first
non-trivial lower bound for OCO with finite
memory~(\cref{thm:regret_lower_bound_finite}), which could be of independent
interest, and also improve existing upper
bounds~(\cref{thm:regret_upper_bound_finite}). Our lower bound technique extends
existing techniques developed for memoryless settings. We design novel
adversarial loss functions that exploit the fact that an algorithm cannot
overwrite its history.
We illustrate the power of our framework by bringing together the regret
analysis of two seemingly disparate problems under the same umbrella. First, we
show how our framework improves and simplifies existing regret bounds for the
online linear control problem~\citep{agarwalBHKS2019online}
in~\cref{thm:regret_upper_bound_olc}. Second, we show how our framework can be
used to derive regret bounds for an online variant of performative
prediction~\citep{perdomoZMH2020performative}
in~\cref{thm:regret_upper_bound_opp}. This demonstrates the broad applicability
of our framework for deriving regret bounds for a variety of online learning
problems, particularly those that exhibit long-term dependence of current losses
on past decisions.

\paragraph{Related work.}
The most closely related work to ours is the OCO with finite memory
framework~\citep{anavaHM2015online}. They consider a generalization of the OCO
framework that allows the current loss to depend on a \emph{constant} number of
past decisions. There have been a number of follow-up works that extend the
framework in a variety of other ways, such as
non-stationarity~\citep{zhaoWZ2022nonstationary}, incorporating switching
costs~\citep{shiLCYW2020online}, etc. However, none of these existing works go
beyond a constant memory length and do not prove a non-trivial lower bound with
a dependence on the memory length.  In a different line of
work,~\citet{bhatiaS2020online} consider a much more general online learning
framework that goes beyond a constant memory length, but they only provide
\emph{non-constructive} upper bounds on regret. In contrast, our OCO with
unbounded memory framework allows the current loss to depend on an
\emph{unbounded} number of past decisions, provides \emph{constructive} upper
bounds on regret, and lower bounds for a broad class of problems that includes
OCO with finite memory with a general memory length $m$.

A different framework for sequential decision-making is multi-armed
bandits~\citep{bubeckC2012regret,slivkins2019introduction}.~\citet{qinLPFO2023stochastic}
study a variant of contextual stochastic bandits where the current loss can
depend on a sparse subset of all prior contexts. This setting differs from ours
due to the feedback model, stochasticity, and decision space.
Reinforcement learning~\citep{suttonB2018reinforcement} is yet another popular
framework for sequential decision-making that considers very general
state-action models of feedback and dynamics. In reinforcement learning one
typically measures regret with respect to the best state-action policy from some
policy class, rather than the best fixed decision as in online learning and OCO.
In the special case of linear control, policies can be reformulated as decisions
while preserving convexity; we discuss this application
in~\cref{sec:applications}. Considering the general framework is an active area
of research.

We defer discussion of related work for specific applications
to~\cref{sec:applications}.

% --------------------------------
\section{Framework}
\label{sec:framework}

% --------------------------------

%The system state is the electrical and mechanical properties of the turbine system; the control inputs are generator torque and blade pitch angle, and these influence the turbine speed and the future system state; and the objective is to design a controller to maximize power generation and minimize system load. 
% \sd{for the sake of space, we could probably abbreviate the wind turbine example to a couple of lines rather than a paragraph}

We begin with some motivation for the formalism used in our
framework~(\cref{subsec:framework_setup}). Many real-world applications involve
controlling a physical dynamical system, for example, variable-speed wind
turbines in wind energy electric power
production~\citep{boukhezzar2010comparison}.
The typical solution for these problems has been to model them as offline
control problems with linear time-invariant dynamics and use classical methods
such as LQR and LQG~\citep{boukhezzar2010comparison}. Instead of optimizing over
the space of control inputs, the typical feedback control approach optimizes
over the space of controllers, i.e., policies that choose a control input as a
function of the system state. The standard controllers considered in the
literature are linear controllers. Even when the losses are convex in the state
and input, they are nonconvex in the linear controller. In the special case of
quadratic losses in terms of the state and input, there is a closed-form
solution for the optimal solution using the algebraic Riccati
equations~\citep{lancasterR1995algebraic}. But this does not hold for general
convex losses resulting in convex reparameterizations such as
Youla~\citep{youlaJB1976modern,kuvcera1975stability} and
SLS~\citep{wangMD2019system,andersonDLM2019system}. The resulting
parameterization represents an infinite dimensional system response and is
characterized by a sequence of matrices. Recent work has studied an online
approach for some of these control theory problems, where a sequence of
controllers is chosen adaptively rather than choosing one
offline~\citep{abbasi-yadkoriS2011regret,deanMMRT2018regret,simchowitzF2020naive,agarwalBHKS2019online}.

% \sd{The quadratic objective is nonconvex in the controller (but satisfies the PL inequality---a fact that was only recently shown); however there is a closed form for the optimal solution (Ricatti equations)}.
% \sd{should be clear above "losses convex in the state and input" being "nonconvex in the linear controller"}
% \sd{At this point it is relevant to bring up the idea of a ``policy'' as the decision variable rather than each action since all cited papers consider policies}
% \sd{might be quibble, but the idea of convex reparametrization is much older than the `non stochastic control problem'. I think the story goes like: fixed quadratic costs is a special case, general convex costs are problematic because of nonconvexity in policy variable, so there are strategies for reparametrization including Youla and SLS. this changes the decision variable from a finite dimensional matrix to an infinite dimensional system response, i.e. sequence of matrices. Then you can cite Agarwal as the first people to really join this controls world with the online learning world}

The takeaway from the above is that there are online learning problems in which
(i) the current loss depends on the entire history of decisions; and (ii) the
decision space can be more complicated than just a subset of $\R^d$, e.g., it
can be an unbounded sequence of matrices. This motivates us to model the
decision space as a Hilbert space and the history space as a Banach space in the
formal problem setup below, and this subsumes the special cases of OCO and OCO
with finite memory. This formalism not only lets us consider a wide range of
spaces, such as $\R^d$, unbounded sequences of matrices, etc., but also lets us
define appropriate norms on these spaces. This latter feature is crucial for
deriving strong regret bounds for some applications such as online linear
control. For this problem we derive improved regret
bounds~(\cref{thm:regret_upper_bound_olc}) by defining weighted norms on the
decision and history spaces, where the weights are chosen to leverage the
problem structure.

\paragraph{Notation.}
We use $\| \cdot \|_\calU$ to denote the norm associated with a space $\calU$.
The operator norm for a linear $L$ operator from space $\calU \to \calV$ is
defined as $\|L\|_{\calU \to \calV} = \max_{u : \|u\|_\calU \leq 1}
\|Lu\|_\calV$. For convenience, sometimes we simply use $\| \cdot \|$ when the
meaning is clear from the context. For a finite-dimensional matrix we use $\|
\cdot \|_F$ and $\| \cdot \|_2$ to denote its Frobenius norm and operator norm
respectively.

% -----------------

\subsection{Setup}
\label{subsec:framework_setup}

Let the decision space $\calX$ be a closed and convex subset of a Hilbert space
$\calW$ with norm $\| \cdot \|_\calX$ and the history space $\calH$ be a Banach
space with norm $\| \cdot \|_\calH$. Let $A : \calH \to \calH$ and $B : \calW
\to \calH$ be linear operators. The game between the learner and an oblivious
adversary proceeds as follows. Let $T$ denote the time horizon and $f_t : \calH
\to \R$ be loss functions chosen by the adversary. The initial history is $h_0 =
0$. In each round $t \in [T]$, the learner chooses $x_t \in \calX$, the history
is updated to $h_t = A h_{t-1} + B x_t$, and the learner suffers loss
$f_t(h_t)$. An instance of an online convex optimization with unbounded memory
problem is specified by the tuple $(\calX, \calH, A, B)$.

We use the notion of policy regret~\citep{dekelTA2012online} as the performance
measure in our framework. The policy regret of a learner is the difference
between its total loss and the total loss of a strategy that plays the best
fixed decision in every round. The history after round $t$ for a strategy that
chooses $x$ in every round is described by $h_t = \sum_{k=0}^{t-1} A^k B x$,
which motivates the following definition.

\begin{definition}\label{def:circfn}
  Given $f_t : \calH \to \R$, the function $\circfn{f}_t : \calX \to \R$ is
  defined by $\circfn{f}_t(x) = f_t(\sum_{k=0}^{t-1} A^k B x)$.
\end{definition}

\begin{definition}[Policy Regret]\label{def:regret}
  The policy regret of an algorithm $\calA$ is defined as $R_T(\calA) =
  \sum_{t=1}^T f_t(h_t) - \min_{x \in \calX} \sum_{t=1}^T \circfn{f_t}(x)$.
\end{definition}

In many motivating examples such as online linear
control~(\cref{subsec:applications_olc}), the history at the end of a round is a
sequence of linear transformations of past decisions. The following definition
captures this formally and we leverage this structure to prove stronger regret
bounds~(\cref{thm:regret_upper_bound}).

\begin{definition}[Linear Sequence Dynamics]\label{def:sequence}
  Consider an online convex optimization with unbounded memory problem specified
  by $(\calX, \calH, A, B)$. Let $(\xi_k)_{k=0}^\infty$ be a sequence of
  nonnegative real numbers satisfying $\xi_0 = 1$. We say that $(\calX, \calH,
  A, B)$ follows linear sequence dynamics with the $\xi$-weighted $p$-norm for
  $p \geq 1$ if
  \begin{enumerate}
    \item $\calH$ is the $\xi$-weighted $\ell^p$-direct sum of a finite or
    countably infinite number of copies of $\calW$: every element $y \in \calH$
    is a sequence $y = (y_i)_{i \in \calI}$, where % $\calI$ is an index set, 
    $\calI = \N$ or $\calI = \{0,\ldots,n\}$ for some $n \in N$, and $\| y
    \|_\calH = \left(\sum_{i \in \calI} (\xi_i \| y_i \|)^p
    \right)^{\nicefrac{1}{p}} < \infty$.
    
    \item We have $A(y_0, y_1, \dots) = (0, A_0 y_0, A_1 y_1, \dots)$, where
    $A_i : \calW \to \calW$ are linear operators.
    
    \item The operator $B$ satisfies $B(x) = (x, 0, \dots)$.
    
  \end{enumerate} 
\end{definition}

Note that since the norm on $\calH$ depends on the weights $\xi$, the operator
norm $\| A^k \|$ also depends on $\xi$. If the weights are all equal to $1$,
then we simply say $p$-norm instead of $\xi$-weighted $p$-norm.

% -----------------

\subsection{Assumptions}
\label{subsec:framework_assumptions}

We make the following assumptions about the feedback model and the loss functions.

\begin{enumerate}[label=\textbf{A\arabic*}]

  \item The learner knows the operators $A$ and $B$, and observes $f_t$ at the end of each round $t$. 
  \label{ass:feedback}
  
  \item The operator norm of $B$ is at most $1$, i.e., $\| B \| \leq 1$. 
  \label{ass:norm_B}
  
  \item The functions $f_t$ are convex. 
  \label{ass:convexity}
  
  \item The functions $f_t$ are $L$-Lipschitz continuous: $\forall \  h,
  \tilde{h} \in \calH$ and $t \in [T]$, we have $|f_t(h) - f_t(\tilde{h})| \leq
  L \| h - \tilde{h} \|_\calH$.
  \label{ass:lipschitz}

\end{enumerate}

Regarding Assumption~\ref{ass:feedback}, our results easily extend to the case
where instead of observing $f_t$, the learner receives a gradient $\nabla
\circfn{f}_t(x_t)$ from a gradient oracle, which can be implemented using
knowledge of $f_t$ and the dynamics $A$ and $B$. Handling the cases when the
operators $A$ and $B$ are unknown and/or the learner observes bandit feedback
(i.e., only $f_t(h_t)$) are important problems and we leave them as future work.
Note that our assumption that $A$ and $B$ are known is no more restrictive than
in the existing literature on OCO with finite memory~\citep{anavaHM2015online}
where it is assumed that the learner knows the constant memory length. In fact,
our assumption is more general because our framework not only captures constant
memory length as a special case but allows for richer dynamics as we illustrate
in~\cref{sec:applications}. Assumption~\ref{ass:norm_B} is made for convenience,
and it amounts to a rescaling of the problem.  Assumption~\ref{ass:convexity}
can be replaced by the \emph{weaker} assumption that $\circfn{f}_t$ are convex
(similar to the literature on OCO with finite memory~\citep{anavaHM2015online})
and this is what we use in the rest of the paper.

Assumptions~\ref{ass:feedback} and~\ref{ass:lipschitz} imply that $\circfn{f}_t$
are $\circfn{L}$-Lipschitz continuous for the following $\circfn{L}$.

\begin{restatable}{theorem}{thmlcirc}\label{thm:lcirc}
  Consider an online convex optimization with unbounded memory problem specified
  by $(\calX, \calH, A, B)$. If $f_t$ is $L$-Lipschitz continuous, then
  $\circfn{f}_t$ is $\circfn{L}$-Lipschitz continuous for $\circfn{L} \leq L
  \sum_{k=0}^\infty \| A^k \|$. If $(\calX, \calH, A, B)$ follows linear
  sequence dynamics with the $\xi$-weighted $p$-norm for $p \geq 1$, then $
  \circfn{L} \leq L \left( \sum_{k=0}^\infty \| A^k \|^p \right)^{\frac{1}{p}}
  $.
  % \begin{equation*}
  %   \circfn{L} \leq L \sum_{k=0}^\infty \| A^k \|.
  % \end{equation*}
  % If $(\calX, \calH, A, B)$ follows linear sequence dynamics with the
  % $\xi$-weighted $p$-norm for $p \geq 1$, then
  % \begin{equation*}
  %   \circfn{L} \leq L \left( \sum_{k=0}^\infty \| A^k \|^p \right)^{\frac{1}{p}}.
  % \end{equation*}
\end{restatable}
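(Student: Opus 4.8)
The plan is to bound the Lipschitz constant of $\circfn{f}_t$ directly from the definition $\circfn{f}_t(x) = f_t\bigl(\sum_{k=0}^{t-1} A^k B x\bigr)$. Given two decisions $x, \tilde x \in \calX$, apply $L$-Lipschitzness of $f_t$ to get
\[
  |\circfn{f}_t(x) - \circfn{f}_t(\tilde x)| \leq L \Bigl\| \sum_{k=0}^{t-1} A^k B (x - \tilde x) \Bigr\|_\calH .
\]
For the general bound, I would apply the triangle inequality in $\calH$, then submultiplicativity of the operator norm together with Assumption~\ref{ass:norm_B} ($\|B\| \leq 1$):
\[
  \Bigl\| \sum_{k=0}^{t-1} A^k B (x - \tilde x) \Bigr\|_\calH
  \leq \sum_{k=0}^{t-1} \| A^k \| \, \| B \| \, \| x - \tilde x \|_\calX
  \leq \Bigl( \sum_{k=0}^\infty \| A^k \| \Bigr) \| x - \tilde x \|_\calX ,
\]
which gives $\circfn{L} \leq L \sum_{k=0}^\infty \|A^k\|$ after noting the partial sum is dominated by the (possibly infinite) full sum.

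For the refined bound under linear sequence dynamics, the key observation is that the vectors $A^k B (x - \tilde x)$ for different $k$ are supported on disjoint coordinates of $\calH$: by properties 2 and 3 of Definition~\ref{def:sequence}, $B(x-\tilde x) = (x - \tilde x, 0, 0, \dots)$ is supported on coordinate $0$, and applying $A$ shifts the support up by one index while composing with the operators $A_i$, so $A^k B(x - \tilde x)$ is supported exactly on coordinate $k$. Hence $\sum_{k=0}^{t-1} A^k B(x-\tilde x)$ has its $k$-th coordinate equal to the $k$-th coordinate of $A^k B(x - \tilde x)$, with no overlap. Using the $\xi$-weighted $\ell^p$ structure of $\|\cdot\|_\calH$ (property 1), the $p$-th power of the norm of the sum is exactly the sum of the $p$-th powers of the norms of the individual terms, so
\[
  \Bigl\| \sum_{k=0}^{t-1} A^k B (x-\tilde x) \Bigr\|_\calH
  = \Bigl( \sum_{k=0}^{t-1} \| A^k B(x-\tilde x) \|_\calH^p \Bigr)^{1/p}
  \leq \Bigl( \sum_{k=0}^{t-1} \| A^k \|^p \Bigr)^{1/p} \| x - \tilde x \|_\calX ,
\]
where the inequality again uses $\| A^k B(x - \tilde x)\|_\calH \le \|A^k\| \|B\| \|x - \tilde x\|_\calX \le \|A^k\| \|x-\tilde x\|_\calX$. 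Bounding the partial sum by the full series yields $\circfn{L} \leq L \bigl( \sum_{k=0}^\infty \|A^k\|^p \bigr)^{1/p}$.

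The only delicate point is making the disjoint-support claim rigorous: I need to verify by induction on $k$ that for $y$ supported on coordinate $0$, $A^k y$ is supported on coordinate $k$ (with value $A_{k-1} A_{k-2} \cdots A_0 y_0$), which is immediate from $A(y_0, y_1, \dots) = (0, A_0 y_0, A_1 y_1, \dots)$. Everything else is a routine application of the triangle inequality, operator-norm submultiplicativity, and Assumption~\ref{ass:norm_B}; I should also note that both bounds are vacuous (but still valid) when the relevant series diverges, so no convergence hypothesis is needed for the statement itself.
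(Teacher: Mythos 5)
Your proposal is correct and follows essentially the same route as the paper: $L$-Lipschitzness of $f_t$, then triangle inequality plus $\|B\|\leq 1$ for the general bound, and the disjoint-coordinate-support observation for the $\xi$-weighted $p$-norm bound. The only cosmetic difference is that the paper factors the key estimate into a separate lemma stating $\xi_k \| A_{k-1}\cdots A_0\| \leq \|A^k\|$ (proved by exactly your computation $\|A^k(x,0,\dots)\|_\calH \leq \|A^k\|\,\|x\|_\calX$ with $\xi_0=1$), whereas you inline it as $\|A^k B(x-\tilde{x})\|_\calH \leq \|A^k\|\,\|B\|\,\|x-\tilde{x}\|_\calX$.
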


The proof follows from the definitions of $\circfn{f}_t$ and $\| \cdot
\|_\calH$, and we defer it to \cref{sec:appendix_framework}. The above bound is
tighter than similar results in the literature on OCO with finite memory and
online linear control. This theorem is a key ingredient, amongst others, in
improving existing upper bounds on regret for OCO with finite
memory~(\cref{thm:regret_upper_bound_finite}) and for online linear
control~(\cref{thm:regret_upper_bound_olc}). Before presenting our final
assumption we introduce the notion of $p$-effective memory capacity that
quantifies the maximum influence of past decisions on present losses.

\begin{definition}[$p$-Effective Memory Capacity]\label{def:Hp}
  Consider an online convex optimization with unbounded memory problem specified
  by $(\calX, \calH, A, B)$. For $p \geq 1$, the $p$-effective memory capacity
  is defined as
  \begin{equation}
    H_p(\calX, \calH, A, B) = \left( \sum_{k=0}^\infty k^p \| A^k \|^p \right)^{\frac{1}{p}}.
  \end{equation}
\end{definition}

When the meaning is clear from the context we simply use $H_p$ instead. The
$p$-effective memory capacity is an upper bound on the difference in histories
for two sequences of decisions whose difference grows at most linearly with
time. To see this, consider two sequences of decisions, $(x_k)$ and
$(\tilde{x}_k$), whose elements differ by no more than $k$ at time $k$: $\| x_k
- \tilde{x}_k \| \leq k$. Then the histories generated by the two sequences have
difference between bounded as $\| h - \tilde{h} \| = \| \sum_k A^k B (x_k -
\tilde{x}_k) \| \leq \sum_k k \| A^k B \| \leq \sum_k k \| A^k \| = H_1$, where
the last inequality follows from Assumption~\ref{ass:norm_B}. A similar bound
holds with $H_p$ instead when $(\calX, \calH, A, B)$ follows linear sequence
dynamics with the $\xi$-weighted $p$-norm.

\begin{enumerate}[label=\textbf{A\arabic*}]
  \setcounter{enumi}{4}
  
  \item The $1$-effective memory capacity is finite, i.e., $H_1 < \infty$.
  \label{ass:h1}
  
\end{enumerate}

Since $H_p$ is decreasing in $p$, $H_1 < \infty$ implies $H_p < \infty$ for all
$p \geq 1$. For the case of linear sequence dynamics with the $\xi$-weighted
$p$-norm it suffices to make the \emph{weaker} assumption that $H_p < \infty$.
However, for simplicity of exposition, we assume that $H_1 < \infty$.

% -----------------

\subsection{Special Cases}
\label{subsec:framework_special_cases}

\paragraph{OCO with Finite Memory.}
Consider the OCO with finite memory problem with constant memory length $m$. It
can be specified in our framework by $(\calX, \calH, A_{\text{finite},m},
B_{\text{finite},m})$, where $\calH$ is the $\ell^2$-direct sum of $m$ copies of
$\calX$, $A_{\text{finite},m}(x^{[m]}, \dots, x^{[1]}) = (0, x^{[m]}, \dots,
x^{[2]})$, and $B_{\text{finite},m}(x) = (x, 0, \dots, 0)$. Note that $(\calX,
\calH, A_{\text{finite},m}, B_{\text{finite},m})$ follows linear sequence
dynamics with the $2$-norm. Our framework can even model an extension where the
problem follows linear sequence dynamics with the $p$-norm for $p \geq 1$ by
simply defining $\calH$ to be the $\ell^p$-direct sum of $m$ copies of $\calX$.

\paragraph{OCO with $\rho$-discounted Infinite Memory.}
Our framework can also model OCO with infinite memory problems that are not
modelled by existing OCO frameworks. Let $\rho \in (0, 1)$ be the discount
factor and $p \geq 1$. An OCO with $\rho$-discounted infinite memory problem is
specified by $(\calX, \calH, A_{\text{infinite},\rho},
B_{\text{infinite},\rho})$, where $\calH$ is the $\ell^p$-direct sum of
countably many copies of $\calX$,
% $\calH \subseteq \calX^\N$ consists of $h = (y_0, y_1, \dots)$ satisfying
% $\sum_{k=0}^\infty \| y_k \|_\calX^p < \infty$ with $\| h \|_\calH =
% (\sum_{k=0}^\infty \| y_k \|_\calX^p)^{\frac{1}{p}}$,
$A_{\text{infinite},\rho}((y_0, y_1, \dots)) = (0, \rho y_0, \rho y_1, \dots)$,
and $B_{\text{infinite},\rho}(x) = (x, 0, \dots)$. Note that $(\calX, \calH,
A_{\text{infinite},\rho}, B_{\text{infinite},\rho})$ follows linear sequence
dynamics with the $p$-norm. Due to space constraints we defer proofs of regret
bounds for this problem to the appendix.

% -----------------

% --------------------------------

\section{Regret Analysis}
\label{sec:regret_analysis}

% --------------------------------

We present two algorithms for choosing the decisions $x_t$.~\cref{alg:ftrl} uses
follow-the-regularized-leader
(FTRL)~\citep{shalevshwartzS2006online,abernethyHR2008competing} on the loss
functions $\circfn{f}_t$. Due to space constraints, we discuss how to implement
it efficiently in~\cref{sec:appendix_implementation} and present simple
simulation experiments
in~\cref{sec:appendix_experiments}.~\cref{alg:minibatch_ftrl}, which we only
present in~\cref{sec:appendix_minibatchftrl}, combines FTRL with a mini-batching
approach~\citep{dekelTA2012online,altschulerT2018online,chenYLK2020minimax} to
additionally guarantee that the decisions switch at most $O(\nicefrac{T
\circfn{L} }{L H_1})$ times. We defer the proofs of the following upper and
lower bounds
to~\cref{sec:appendix_regret_upper_bounds,sec:appendix_regret_lower_bounds}
respectively.
%, but we include a proof sketch of most results in the main paper. -- we don't provide much of a proof sketch

\begin{algorithm}[ht]
  \caption{\texttt{FTRL}}
  \label{alg:ftrl}
  \DontPrintSemicolon
  \SetKwInOut{Input}{Input}
  \SetKwInOut{Output}{Output}

  \Input{ Time horizon $T$, step size $\eta$, $\alpha$-strongly-convex regularizer $R : \calX \to \R$.}

  Initialize history $h_0 = 0$. \\

  \For{$t = 1, 2, \dots, T$}{
    Learner chooses $x_t \in \argmin_{x \in \calX} \sum_{s=1}^{t-1} \circfn{f}_s(x) + \frac{R(x)}{\eta}$. \\
    Set $h_t = A h_{t-1} + B x_t$. \\
    Learner suffers loss $f_t(h_t)$ and observes $f_t$.
  }
\end{algorithm}

\begin{restatable}{theorem}{thmregretupperbound}\label{thm:regret_upper_bound}
  Consider an online convex optimization with unbounded memory problem specified
  by $(\calX, \calH, A, B)$. Let the regularizer $R : \calX \to \R$ be
  $\alpha$-strongly-convex and satisfy $|R(x) - R(\tilde{x})| \leq D$ for all
  $x, \tilde{x} \in \calX$.~\cref{alg:ftrl} with step-size $\eta$ satisfies
  $R_T(\texttt{FTRL}) \leq \frac{D}{\eta} + \eta \frac{T \circfn{L}^2}{\alpha} +
  \eta \frac{T L \circfn{L} H_1}{\alpha}$.
  % \begin{equation*}
  %   R_T(\texttt{FTRL}) \leq \frac{D}{\eta} + \eta \frac{T \circfn{L}^2}{\alpha} + \eta \frac{T L \circfn{L} H_1}{\alpha}.
  % \end{equation*}
  If $\eta = \sqrt{\frac{\alpha D}{T \circfn{L} (L H_1 + \circfn{L})}}$, then
  \begin{equation*}
    R_T(\texttt{FTRL}) \leq O \left( \sqrt{\frac{D}{\alpha} T L \circfn{L} H_1} \right).
  \end{equation*}
  When $(\calX, \calH, A, B)$ follows linear sequence dynamics with the
  $\xi$-weighted $p$-norm, then all of the above hold with $H_p$ instead of
  $H_1$.
\end{restatable}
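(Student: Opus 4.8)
The plan is to bound the policy regret of \texttt{FTRL} by decomposing it into the standard FTRL regret on the surrogate losses $\circfn{f}_t$ plus an error term that accounts for the fact that the true history $h_t$ differs from the ``stationary'' history $\sum_{k=0}^{t-1} A^k B x_t$ that $\circfn{f}_t$ evaluates. Concretely, write
\begin{align*}
  R_T(\texttt{FTRL})
  &= \sum_{t=1}^T f_t(h_t) - \min_{x \in \calX} \sum_{t=1}^T \circfn{f}_t(x) \\
  &= \underbrace{\sum_{t=1}^T \left( f_t(h_t) - \circfn{f}_t(x_t) \right)}_{\text{(movement / memory error)}}
     + \underbrace{\sum_{t=1}^T \circfn{f}_t(x_t) - \min_{x \in \calX} \sum_{t=1}^T \circfn{f}_t(x)}_{\text{(FTRL regret on } \circfn{f}_t)}.
\end{align*}
For the second term I would invoke the standard FTRL regret bound for an $\alpha$-strongly-convex regularizer with range $D$ applied to $\circfn{L}$-Lipschitz convex losses (using \cref{thm:lcirc} for the value of $\circfn{L}$, and the weakened version of Assumption~\ref{ass:convexity} that only $\circfn{f}_t$ need be convex), which gives $\frac{D}{\eta} + \eta \frac{T \circfn{L}^2}{\alpha}$; this is textbook and I would cite it rather than reprove it.

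The heart of the argument is the first term. Here I would use $L$-Lipschitzness of $f_t$ (Assumption~\ref{ass:lipschitz}) to write $f_t(h_t) - \circfn{f}_t(x_t) \leq L \| h_t - \sum_{k=0}^{t-1} A^k B x_t \|_\calH$, and then control that history discrepancy. The key observation is that $h_t = \sum_{k=0}^{t-1} A^k B x_{t-k}$, so
\[
  h_t - \sum_{k=0}^{t-1} A^k B x_t = \sum_{k=0}^{t-1} A^k B \left( x_{t-k} - x_t \right),
\]
and thus $\| h_t - \sum_{k=0}^{t-1} A^k B x_t \|_\calH \leq \sum_{k=0}^{t-1} \| A^k \| \, \| B \| \, \| x_{t-k} - x_t \|_\calX \leq \sum_{k=0}^{t-1} k \| A^k \| \max_{0 \le j < k} \| x_{t-j} - x_{t-j-1} \|_\calX$ after telescoping consecutive differences and using $\| B \| \le 1$ (Assumption~\ref{ass:norm_B}). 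So I need a bound on the per-round movement $\| x_{t+1} - x_t \|_\calX$ of the FTRL iterates. This follows from strong convexity of the regularized objective: the iterate $x_t$ minimizes an $\frac{\alpha}{\eta}$-strongly-convex function, and adding one more $\circfn{L}$-Lipschitz convex term perturbs the minimizer by at most $\frac{\eta \circfn{L}}{\alpha}$; this is the standard FTRL stability lemma. Plugging $\| x_{t-j} - x_{t-j-1} \| \le \frac{\eta \circfn{L}}{\alpha}$ into the display gives $\| h_t - \sum_k A^k B x_t \| \le \frac{\eta \circfn{L}}{\alpha} \sum_{k=0}^{\infty} k \| A^k \| = \frac{\eta \circfn{L}}{\alpha} H_1$, and summing the $L$ times this over $t \in [T]$ yields the $\eta \frac{T L \circfn{L} H_1}{\alpha}$ term. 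Combining the two pieces gives the stated bound, and optimizing $\eta = \sqrt{\frac{\alpha D}{T \circfn{L}(L H_1 + \circfn{L})}}$ gives $R_T(\texttt{FTRL}) \le O\!\left(\sqrt{\frac{D}{\alpha} T L \circfn{L} H_1}\right)$ after noting $\circfn{L} \le L H_1$ (or absorbing it) so that $L H_1 + \circfn{L} = O(L H_1)$.

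The main obstacle is getting the history-discrepancy bound to come out with $H_1 = \sum_k k \| A^k \|$ rather than a cruder $\sum_k \| A^k \| \cdot \sum_k \|\text{movement}\|$ type bound: the trick is to express $x_{t-k} - x_t$ as a telescoping sum of $k$ consecutive one-step movements, each bounded by the FTRL stability estimate, which produces exactly the extra factor of $k$ inside the sum and hence the $p$-effective memory capacity. For the linear sequence dynamics case with the $\xi$-weighted $p$-norm, the only change is that the triangle-inequality step $\| \sum_k A^k B (x_{t-k}-x_t) \|_\calH \le \sum_k \| A^k \| \| x_{t-k} - x_t \|$ is replaced by the sharper estimate that holds in that norm — as already noted in the discussion following \cref{def:Hp} — giving $\sum_k k \| A^k \|^p$ raised to the $1/p$, i.e. $H_p$; and \cref{thm:lcirc} already supplies the matching $\circfn{L} \le L (\sum_k \|A^k\|^p)^{1/p}$, so every $H_1$ in the derivation becomes $H_p$ with no other change. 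I would also double-check the edge case that the movement bound requires the regularized objective through round $t-1$ to be minimized, which \texttt{FTRL} guarantees by construction.
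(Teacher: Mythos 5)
Your proposal is correct and follows essentially the same route as the paper: the same decomposition into the FTRL regret on $\circfn{f}_t$ plus the history-discrepancy term, the same telescoping of $x_{t-k}-x_t$ into $k$ one-step FTRL movements each bounded by $\eta\circfn{L}/\alpha$ to produce $H_1$ (and the same sharper norm estimate to get $H_p$ under linear sequence dynamics), and the same simplification $L H_1 + \circfn{L} = O(L H_1)$ when optimizing $\eta$. The only cosmetic difference is that the paper packages the discrepancy bound as a standalone lemma (\cref{lemma:dist_actual_ideal_history}), but the content is identical.
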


The proof of this theorem involves writing the regret as $\sum_t f_t(h_t) -
\circfn{f}_t(x_t)$ + $\sum_t \circfn{f}_t(x_t) - \circfn{f}_t(x^*)$, and
bounding the first term using the $p$-effective memory capacity and the second
term using FTRL. We defer the full proof
to~\cref{sec:appendix_regret_upper_bounds}. The following lower bound shows that
this is tight in the worst-case.

\begin{restatable}{theorem}{thmregretlowerbound}\label{thm:regret_lower_bound}
  There exists an instance of the online convex optimization with unbounded
  memory problem, $(\calX, \calH, A, B)$, that follows linear sequence dynamics
  with the $\xi$-weighted $p$-norm and there exist $L$-Lipschitz continuous loss
  functions $\{ f_t : \calH \to \R \}_{t=1}^T$ such that the regret of any
  algorithm $\calA$ satisfies
  \begin{equation*}
    R_T(\calA) \geq \Omega \left( \sqrt{T L \circfn{L} H_p}\right).
  \end{equation*}
\end{restatable}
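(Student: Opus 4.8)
The plan is to construct an explicit instance of linear sequence dynamics together with an adversarial loss sequence, and then to reduce the regret lower bound to the classical $\Omega(\sqrt{T})$ lower bound for memoryless OCO, amplified by the memory structure. For the construction, I would take the decision space to be a one-dimensional interval $\calX = [-r, r] \subseteq \R$ (with $r$ chosen at the end), and let $\calH$ be the $\xi$-weighted $\ell^p$-direct sum of countably many copies of $\R$, with $A, B$ as in \cref{def:sequence} so that $A^k B x$ places (a scaled copy of) $x$ in coordinate $k$; the weights $\xi_k$ and the operators $A_k$ are chosen so that $\|A^k\|$ decays at whatever rate makes $H_p$ and $\circfn{L}$ take prescribed values. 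The key point exploited is that, because the learner cannot overwrite history, the history $h_t$ after round $t$ always contains a weighted record of \emph{all} past decisions $x_1, \ldots, x_t$, and the coefficient with which $x_s$ appears in $h_t$ is governed by $\|A^{t-s}\|$. I would then pick loss functions $f_t$ that are linear in $h_t$, with a sign $\epsilon_t \in \{-1, +1\}$ chosen by the adversary (e.g. Rademacher, as in the standard lower bound for OCO), so that $\circfn{f}_t(x)$ is a linear function of $x$ with slope of order $L\circfn{L}$ after accounting for the fact that a fixed decision $x$ accumulates across all $k$ with weight $\sum_k \|A^k\| = \Theta(\circfn{L}/L)$.

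First I would set up the instance precisely: fix a geometric-type weight profile so that $\sum_{k} \|A^k\|^p$ and $\sum_k k^p \|A^k\|^p$ both behave like constants times the targets $(\circfn{L}/L)^p$ and $H_p^p$ respectively, giving a tunable ratio $H_p / (\circfn{L}/L)$. Second, I would define $f_t(h) = \epsilon_t L \langle w_t, h\rangle$ for an appropriate fixed vector $w_t$ of unit dual norm supported on a carefully chosen coordinate (or block of coordinates), so that $f_t$ is $L$-Lipschitz by construction and $\circfn{f}_t(x) = \epsilon_t c\, x$ where the effective coefficient $c = \Theta(L\circfn{L}/r)$ — here I must be careful that the \emph{present} loss $f_t(h_t)$ depends on the current decision with the full Lipschitz weight $L$ while the \emph{comparator} term $\circfn{f}_t(x)$ picks up the amplified coefficient, which is exactly what makes the memory show up in the bound. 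Third, I would run the standard probabilistic argument: taking $\epsilon_t$ i.i.d. Rademacher, $\mathbb{E}[\sum_t f_t(h_t)] \le 0$ for any algorithm (since $h_t$ is determined by $x_1, \ldots, x_t$, which are independent of $\epsilon_t$ under an oblivious adversary — more precisely one argues $\mathbb{E}[f_t(h_t)] = 0$ because the contribution of $x_t$ conditioned on the past is zero in expectation over $\epsilon_t$, or one uses the be-the-leader/exchange trick), while the comparator satisfies $\mathbb{E}[\min_{x} \sum_t \circfn{f}_t(x)] = -c\, r\, \mathbb{E}|\sum_t \epsilon_t| = -\Theta(L\circfn{L}\sqrt{T})$. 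Combining, $\mathbb{E}[R_T(\calA)] \ge \Omega(L\circfn{L}\sqrt{T})$, and then I would tune $r$ so that the decision space radius is compatible with the normalizations (I expect the natural scaling makes the bound come out as $\sqrt{T L \circfn{L} H_p}$ after balancing the contribution of the present-loss term, which is where $H_p$ rather than just $\circfn{L}$ enters — the present loss $f_t(h_t)$ can be as negative as $-\Theta(L \cdot \|h_t\|)$ and bounding how negative $\sum_t f_t(h_t)$ can be made requires controlling $\|h_t\|$ in terms of $H_p$ and $r$).

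The main obstacle I anticipate is getting the \emph{present-loss} term to cooperate: unlike memoryless OCO, here $\sum_t f_t(h_t)$ is not automatically non-positive in expectation, because a clever algorithm could try to drive the history $h_t$ into a region where future adversarial losses are small, or the adversary's choice of $\epsilon_t$ interacts with accumulated history from earlier rounds. The fix is to choose the loss vectors $w_t$ so that round $t$'s loss depends \emph{only} on the coordinate(s) freshly written by $x_t$ (i.e. coordinate $0$ of $h_t$, which is exactly $B x_t = x_t$), making $f_t(h_t) = \epsilon_t L x_t$ depend on the past only through $x_t$ — this decouples the present-loss analysis and recovers the memoryless argument for that term, while the comparator term is evaluated on $\circfn{f}_t$, which by \cref{def:circfn} sums $A^k B x$ over all $k$ and hence carries the memory amplification. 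Making this decoupling consistent with \emph{also} forcing $H_p$ (and not merely $\circfn{L}$) into the final bound is the delicate part: I expect one needs a two-scale construction — one block of coordinates that generates the $\circfn{L}$ factor through the comparator, calibrated against a choice of decision-space radius $r \asymp 1/H_p$ (or $r$ chosen so that the linear-growth interpretation of $H_p$ from the paragraph after \cref{def:Hp} is saturated), so that the product $L \circfn{L} \cdot r \cdot \sqrt{T}$ rearranges into $\sqrt{T L \circfn{L} H_p}$. Once the instance parameters are pinned down, the remaining steps (verifying $L$-Lipschitzness, verifying it follows linear sequence dynamics, computing $\circfn{L}$ and $H_p$ for the instance, and the Khintchine-type lower bound $\mathbb{E}|\sum_t \epsilon_t| = \Theta(\sqrt{T})$) are routine.
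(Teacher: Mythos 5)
There is a genuine gap, and it sits exactly at the point you flag as ``the delicate part.'' Your construction uses a fresh Rademacher sign $\eps_t$ in every round, and your proposed fix makes $f_t$ depend only on coordinate $0$ of $h_t$. But coordinate $0$ of $\sum_{k=0}^{t-1} A^k B x$ is just $x$ itself, so with that choice $\circfn{f}_t(x) = \eps_t L x$ carries \emph{no} memory amplification and the argument collapses to the memoryless bound $\Omega(L\sqrt{T})$. If you instead keep $w_t$ supported on a block of $\approx m$ coordinates (your pre-``fix'' variant), the Lipschitz normalization forces the coefficient down to $L m^{(1-p)/p}$, the comparator term becomes $-\Theta(\circfn{L}\sqrt{T})$, and you are still short of the target $\Omega(\sqrt{T L \circfn{L} H_p})$ by a factor of $\sqrt{L H_p/\circfn{L}}$ (a factor $\sqrt{m}$ in the finite-memory case); the paper explicitly notes that this per-round-sign construction does not suffice. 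No choice of the radius $r$ rescues this: the losses are linear, so the comparator's advantage scales linearly in $r$, but so does the diameter $D$ in the matching upper bound, so the instance must have $r = O(1)$. Your intuition that $H_p$ enters by controlling how negative the present-loss term $\sum_t f_t(h_t)$ can be made is also not how the bound arises; in the correct construction that term is exactly zero in expectation.

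The missing idea is a \emph{block} structure for the random signs. The paper proves the bound via the finite-memory special case: divide time into $N = T/m$ blocks of length $m$, sample one sign $\eps_n$ per block, and set $f_t(h_t) \propto \eps_{\ceil{\nicefrac{t}{m}}}\, m^{\frac{1-p}{p}} \bigl( x_{t-m+1} + \dots + x_{m \floor{\nicefrac{t}{m}} + 1} \bigr)$, i.e.\ the loss in round $t$ depends only on the decisions committed up to and including the \emph{first} round of the current block --- all made before $\eps_n$ was revealed. This keeps the algorithm's expected loss at zero (the penalized decisions are independent of $\eps_n$, and the algorithm cannot overwrite them after learning $\eps_n$), while the comparator, playing a fixed $x$, accumulates coefficients $m, m-1, \dots, 1$ across the rounds of a block, for $\Theta(m^2)$ per block; Khintchine's inequality over the $N = T/m$ block signs then gives $-\Theta\bigl(m^{\frac{1-p}{p}} m^2 \sqrt{T/m}\bigr) = -\Theta\bigl(\sqrt{T L \circfn{L} H_p}\bigr)$. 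A per-round sign can never reproduce this because each sign only ``sees'' one round's worth of comparator mass; the per-block sign is precisely what converts the memory into the extra $\sqrt{m}$ (more generally $\sqrt{L H_p/\circfn{L}}$) factor. The remaining scaffolding of your write-up (linear losses, Khintchine, verifying Lipschitzness and linear sequence dynamics) is fine, but without the block device the stated bound is not reachable.
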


The proof of this theorem follows from our lower bound for the special case of
OCO with finite memory~(\cref{thm:regret_lower_bound_finite}), which we discuss in detail
below. However, as we show in~\cref{sec:appendix_regret_lower_bounds}, the lower bound
holds for a much broader class of problems.

\paragraph{Specialization to OCO with finite memory.}
Now we show how our bounds specialize to the special case of OCO with finite
memory~(\cref{subsec:framework_special_cases}). Due to space constraints, we
defer the specialization of the upper and lower bounds for OCO with
$\rho$-discounted infinite memory~(\cref{subsec:framework_special_cases})
to~\cref{sec:appendix_regret_upper_bounds}
and~\cref{sec:appendix_regret_lower_bounds} respectively.

\begin{restatable}{theorem}{thmregretupperboundfinite}\label{thm:regret_upper_bound_finite}
  Consider an online convex optimization with finite memory problem with
  constant memory length $m$ specified by $(\calX, \calH = \calX^m,
  A_{\text{finite},m}, B_{\text{finite},m})$. Let the regularizer $R : \calX \to
  \R$ be $\alpha$-strongly-convex and satisfy $|R(x) - R(\tilde{x})| \leq D$ for
  all $x, \tilde{x} \in \calX$.~\cref{alg:ftrl} with step-size $\eta =
  \sqrt{\frac{\alpha D}{T \circfn{L} (L m^\frac32 + \circfn{L})}}$ satisfies
  \begin{equation*}
    R_T(\texttt{FTRL}) \leq O \left( \sqrt{\frac{D}{\alpha} T L \circfn{L} m^{\frac{3}{2}}} \right) \leq O \left( m \sqrt{\frac{D}{\alpha} T L^2} \right).
  \end{equation*}
\end{restatable}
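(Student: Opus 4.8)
The plan is to obtain this bound as a direct consequence of \cref{thm:regret_upper_bound} and \cref{thm:lcirc}, specialized to the linear sequence dynamics $(\calX, \calH = \calX^m, A_{\text{finite},m}, B_{\text{finite},m})$ with the (unweighted) $2$-norm. Since that instance follows linear sequence dynamics with the $2$-norm, both cited theorems apply with $p = 2$, and the only real work is to evaluate the two instance-dependent quantities $H_2$ and $\circfn{L}$ that appear in them.

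First I would compute the operator norms $\| A_{\text{finite},m}^k \|$. Since $A_{\text{finite},m}$ is the truncated shift $(x^{[m]}, \dots, x^{[1]}) \mapsto (0, x^{[m]}, \dots, x^{[2]})$, its $k$-th power prepends $k$ zero blocks and drops the last $k$ blocks; with respect to the $\ell^2$-norm this is a partial isometry, so $\| A_{\text{finite},m}^k \| = 1$ for $0 \le k \le m-1$ and $\| A_{\text{finite},m}^k \| = 0$ for $k \ge m$. Substituting into \cref{def:Hp},
\[
  H_2 = \left( \sum_{k=0}^{m-1} k^2 \right)^{1/2} = \left( \frac{(m-1)m(2m-1)}{6} \right)^{1/2} \le m^{3/2},
\]
and substituting into the linear-sequence-dynamics estimate of \cref{thm:lcirc} gives $\circfn{L} \le L\left( \sum_{k=0}^{m-1} 1 \right)^{1/2} = L\sqrt{m}$.

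Next I would invoke \cref{thm:regret_upper_bound} for linear sequence dynamics with $p = 2$. Its un-optimized guarantee, together with $H_2 \le m^{3/2}$, reads $R_T(\texttt{FTRL}) \le \frac{D}{\eta} + \eta \frac{T\circfn{L}}{\alpha}\left( \circfn{L} + L m^{3/2} \right)$; plugging in the stated step size $\eta = \sqrt{\alpha D / \bigl(T\circfn{L}(L m^{3/2} + \circfn{L})\bigr)}$ balances the two terms and yields $R_T(\texttt{FTRL}) \le 2\sqrt{\frac{D}{\alpha} T\circfn{L}(L m^{3/2} + \circfn{L})}$. Since $m \ge 1$ implies $\circfn{L} \le L\sqrt{m} \le L m^{3/2}$, we have $L m^{3/2} + \circfn{L} \le 2 L m^{3/2}$, which gives the first claimed bound $O\!\left( \sqrt{\frac{D}{\alpha} T L \circfn{L} m^{3/2}} \right)$; substituting $\circfn{L} \le L\sqrt{m}$ one last time gives $O\!\left( \sqrt{\frac{D}{\alpha} T L^2 m^2} \right) = O\!\left( m\sqrt{\frac{D}{\alpha} T L^2} \right)$.

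There is no genuine obstacle; the proof is a substitution. The one place requiring care is the operator-norm computation — in particular the cutoff $\| A_{\text{finite},m}^k \| = 0$ for $k \ge m$, which is precisely the finite-memory phenomenon — together with bookkeeping of which factor ($H_2$ or $\circfn{L}$) absorbs the extra powers of $m$ relative to the memoryless $O(\sqrt{T})$ rate. The improvement over prior finite-memory regret bounds comes from using the sharper estimate $\circfn{L} \le L\sqrt{m}$ from \cref{thm:lcirc} (and the $\ell^2$-structured $H_2 = O(m^{3/2})$) in place of the crude $\circfn{L} \le L\sum_{k=0}^{m-1}\| A^k \| = Lm$, and from carrying $\circfn{L}$ symbolically until the final step.
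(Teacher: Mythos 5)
Your proposal is correct and follows essentially the same route as the paper: the paper proves a $p$-norm generalization (\cref{thm:regret_upper_bound_finite_p}) by computing $\| A_{\text{finite},m}^k \|$, bounding $H_p \le O(m^{(p+1)/p})$ and $\circfn{L} \le L m^{1/p}$, and plugging into \cref{thm:regret_upper_bound}; your argument is the $p=2$ specialization of exactly that. (Your cutoff $\| A_{\text{finite},m}^k \| = 0$ for $k \ge m$ is in fact the correct indexing; the paper's ``$k \le m$'' is a harmless off-by-one with no asymptotic effect.)
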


This follows from using the definition of $A_{\text{finite},m}$ to bound
$\circfn{L}$ and $H_2$ in~\cref{thm:regret_upper_bound}. This improves existing
results~\citep{anavaHM2015online} by a factor of $m^{\nicefrac{1}{4}}$. Our
bound depends on the Lipschitz continuity constants as $\sqrt{L \circfn{L}}$
whereas existing bounds depend as $\circfn{L}$, and $\circfn{L}$ can be as large
as $\sqrt{m} L$~(\cref{thm:lcirc}). We defer the full proof
to~\cref{sec:appendix_regret_upper_bounds} and a detailed comparison with
existing results to~\cref{subsec:appendix_oco_finite_existing}.
\footnote{A similar bound was independently obtained by~\citet[Theorem 20, Appendix B.8]{zhaoWZ2022nonstationary}.}
The following lower bound shows that this is tight in the worst-case.

\begin{restatable}{theorem}{thmregretlowerboundfinite}\label{thm:regret_lower_bound_finite}
  There exists an instance of the online convex optimization with finite memory
  problem with constant memory length $m$, $(\calX, \calH = \calX^m,
  A_{\text{finite},m}, B_{\text{finite},m})$, and there exist $L$-Lipschitz
  continuous loss functions $\{ f_t : \calH \to \R \}_{t=1}^T$ such that the
  regret of any algorithm $\calA$ satisfies
  \begin{equation*}
    R_T(\calA) \geq \Omega \left( m \sqrt{T L^2} \right).
  \end{equation*}
\end{restatable}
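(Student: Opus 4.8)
The plan is to construct an explicit adversarial sequence of loss functions for the OCO with finite memory instance $(\calX, \calH = \calX^m, A_{\text{finite},m}, B_{\text{finite},m})$ and lower-bound the policy regret of an arbitrary algorithm by a standard probabilistic (random-sign) argument. I would take $\calX = [-1/\sqrt{m}, 1/\sqrt{m}]$ (a one-dimensional decision space, rescaled so that the structural assumptions are met and $H_2$, $\circfn{L}$ have the right magnitudes) and use loss functions of the form $f_t(h) = L \, \epsilon_t \cdot \langle \mathbf{1}_m, h \rangle / \sqrt{m}$, where $\epsilon_t \in \{-1, +1\}$ are i.i.d.\ Rademacher signs chosen by the oblivious adversary in advance, $\mathbf{1}_m$ is the all-ones vector in the $m$ coordinates of $\calH$, and the normalization is chosen so that each $f_t$ is $L$-Lipschitz with respect to $\| \cdot \|_{\calH}$ (the $\ell^2$-direct-sum norm). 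The key point — which is what distinguishes the memory setting from the memoryless one — is that when the learner plays $x_t$, the history is $h_t = (x_t, x_{t-1}, \dots, x_{t-m+1})$, so $f_t(h_t) = (L \epsilon_t / \sqrt{m}) \sum_{j=0}^{m-1} x_{t-j}$, i.e.\ each decision $x_t$ is ``charged'' against $m$ consecutive sign patterns $\epsilon_t, \epsilon_{t+1}, \dots, \epsilon_{t+m-1}$, and the learner cannot retroactively adjust a past decision once it is baked into the history.

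The main steps are as follows. First, compute the comparator term: for a fixed decision $x$, $\circfn{f}_t(x) = f_t(\sum_{k=0}^{m-1} A_{\text{finite},m}^k B_{\text{finite},m} x) = (L \epsilon_t m / \sqrt{m}) x = L \sqrt{m} \, \epsilon_t x$, so $\min_{x \in \calX} \sum_t \circfn{f}_t(x) = -L \sqrt{m} \cdot (1/\sqrt{m}) \cdot \big| \sum_t \epsilon_t \big| = -L \big| \sum_{t=1}^T \epsilon_t \big|$, whose expectation is $-\Theta(L\sqrt{T})$ by a Khintchine/central-limit estimate. Second, bound the algorithm's loss in expectation: I would group the rounds into $\Theta(T/m)$ blocks of $m$ consecutive rounds and argue that within each block the $m$ decisions $x_t, \dots, x_{t+m-1}$ must be committed ``far enough in advance'' that the algorithm gains essentially no information advantage about the relevant future signs — more carefully, since the adversary is oblivious and the signs are independent, $\E[\sum_t f_t(h_t)] = \E\big[\sum_t (L\epsilon_t/\sqrt m)\sum_{j=0}^{m-1} x_{t-j}\big]$, and using that $x_s$ is a (possibly randomized) function of $\epsilon_1, \dots, \epsilon_{s-1}$ only, $\E[\epsilon_t x_s] = 0$ whenever $s < t$; for the ``diagonal'' terms $s = t$ one gets $\E[\epsilon_t x_t] \le 0$ is not automatic, but $|\E[\epsilon_t x_t]| \le \E|x_t| \le 1/\sqrt m$, so $\E[\sum_t f_t(h_t)] \ge -(L/\sqrt m)\sqrt m \cdot (1/\sqrt m)\cdot T \cdot \tfrac{?}{}$ — this needs care. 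The cleaner route: shift indices so that $f_t(h_t)$ is written as a sum over future decisions relative to when signs are revealed, i.e.\ consider $\sum_{t=1}^{T} (L\epsilon_t/\sqrt m)\sum_{j=0}^{m-1} x_{t-j} = \sum_{s} x_s \cdot (L/\sqrt m)\sum_{j=0}^{m-1}\epsilon_{s+j}$, and since $x_s$ depends only on $\epsilon_{<s}$, each term has $\E[x_s \sum_{j=0}^{m-1}\epsilon_{s+j}] = \E[x_s]\cdot 0 = 0$ — so the algorithm's expected loss is $0$ (up to $O(Lm/\sqrt m \cdot m) = O(L m^{3/2})$ boundary corrections from the first/last $m$ rounds). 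Third, combine: $\E[R_T(\calA)] \ge 0 - (- \Theta(L\sqrt T) \cdot \sqrt m \text{ scaling})$. Let me restate the scaling so the arithmetic lands on $\Omega(m\sqrt{TL^2})$: with $\calX = [-1,1]$ in one dimension and $f_t(h) = (L/m)\epsilon_t \sum_i h_i$ (which is $L/\sqrt m$-Lipschitz in $\ell^2$, not $L$; so instead take $f_t(h) = (L/\sqrt m)\epsilon_t\sum_i h_i$, which is $L$-Lipschitz), then $\circfn f_t(x) = L\sqrt m\,\epsilon_t x$, the comparator is $-L\sqrt m\,|\sum_t \epsilon_t| = -\Theta(L\sqrt{mT})$, and the algorithm's expected loss is $\ge -O(Lm^{3/2})$ boundary terms; so $\E[R_T] \ge \Theta(L\sqrt{mT}) - O(Lm^{3/2}) = \Omega(L\sqrt{mT})$ for $T \gg m^2$. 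This gives $\Omega(\sqrt{mTL^2})$, not $\Omega(m\sqrt{TL^2})$. To get the stated bound one needs a construction where the comparator's advantage scales like $m\sqrt T$ rather than $\sqrt{mT}$, which suggests making the decision space $m$-dimensional (one ``independent coordinate per memory slot'') so that the $m$ coordinates each contribute an independent $\sqrt T$ swing and they add up to $m\sqrt T$ while the Lipschitz constant stays $L$; I would set $\calX \subseteq \R^m$ appropriately, with $\calH = (\R^m)^m$, and use loss functions that pair the $i$-th memory slot with an independent random sign stream, so the comparator gains $\Theta(L\sqrt T)$ per coordinate across $m$ coordinates while the algorithm, committing each coordinate $m$ rounds early, gains nothing in expectation.

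The hardest part will be the second step — showing the algorithm's expected loss is essentially nonnegative despite the ``diagonal'' coupling between $x_t$ and $\epsilon_t$. The index-shifting trick above is the clean resolution: rewriting $\sum_t f_t(h_t)$ as a sum $\sum_s x_s \cdot (L/\sqrt m)(\epsilon_s + \dots + \epsilon_{s+m-1})$ exposes that each decision is multiplied by a sum of signs \emph{none of which it has seen} (all indices $\ge s$), so by the oblivious-adversary independence and the tower property each summand vanishes in expectation, leaving only $O(m)$ boundary rounds where the sum of signs is truncated; these contribute at most $O(L m^{3/2})$ in absolute value, which is lower-order once $T = \Omega(m^2)$. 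I would also need to verify that the constructed instance genuinely satisfies Definition~\ref{def:sequence} (linear sequence dynamics with the $2$-norm) and Assumptions~\ref{ass:norm_B}--\ref{ass:lipschitz}, and handle the regime $T = O(m^2)$ separately (there the trivial $\Omega(\sqrt T)$ bound or a coarser argument suffices, or one simply states the bound for $T \ge m^2$). Finally, the in-expectation lower bound on $R_T(\calA)$ implies the existence of a single deterministic sign sequence achieving it, which is the stated form of the theorem. I expect this to recover the matching $\Omega(m\sqrt{TL^2})$ and, via the embedding of the finite-memory instance into the unbounded-memory framework, Theorem~\ref{thm:regret_lower_bound} as well.
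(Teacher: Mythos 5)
There is a genuine gap. Your first construction (per-round i.i.d.\ signs, $f_t(h)=\tfrac{L}{\sqrt m}\eps_t\sum_i h_i$) is exactly the ``first attempt'' the paper explicitly rules out, and you correctly diagnose why it falls short: the comparator term is $-\Theta(L\sqrt{mT})$, not $-\Theta(Lm\sqrt T)$, because with $T$ independent signs each weighted by at most $\circfn{L}\le L\sqrt m$ the benchmark's swing is capped at $\circfn{L}\sqrt T=L\sqrt{mT}$. But the fix you propose --- lifting to an $m$-dimensional decision space with one independent sign stream per coordinate/memory slot --- does not close the $\sqrt m$ gap. If the sign streams are aggregated into a single loss, $L$-Lipschitzness with respect to the $\ell^2$-direct-sum norm on $\calH=\calX^m$ forces a $1/\sqrt m$ normalization, so the comparator gains $\tfrac{L}{\sqrt m}\cdot m\cdot\Theta(\sqrt T)=\Theta(L\sqrt{mT})$ again; if instead each coordinate is given unnormalized weight $L$, the loss becomes $L\sqrt m$-Lipschitz. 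In either case you land back at $\sqrt{mT}$ (or violate the Lipschitz constraint), and enlarging the diameter of $\calX$ to $\sqrt m$ only makes the bound trivially non-matching against the upper bound, which scales with $\sqrt D$. The obstruction is information-theoretic: with signs that are fresh every round, $T$ independent coin flips of weight $\le\circfn L$ can never produce a comparator advantage exceeding $\circfn L\sqrt T$.

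What the paper actually does is reduce the number of independent coin flips to $N=T/m$ by sampling \emph{one sign per block of $m$ rounds}, so each flip carries total weight $\Theta(m^{-1/2}\cdot m^2)=\Theta(m^{3/2})$ for the comparator, giving $m^{3/2}\sqrt{T/m}=m\sqrt T$. The price of block-constant signs is that the algorithm observes $\eps_n$ after the first round of block $n$ and could exploit it for the remaining $m-1$ rounds; the second key idea is therefore to make $f_t$ depend only on the decisions up to and including the \emph{first} round of the current block (i.e., $x_{t-m+1},\dots,x_{m\lfloor t/m\rfloor+1}$, truncating the more recent entries of $h_t$), so that every decision appearing in the loss was committed before the relevant sign was revealed and the algorithm's expected loss remains $0$. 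This is precisely the ``cannot overwrite history'' mechanism; your index-shifting argument for the algorithm's side is fine, but without the block structure on the signs and the truncation of the loss to pre-block decisions, no variant of your construction reaches $\Omega(m\sqrt{TL^2})$.
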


To the best of our knowledge, this is the first non-trivial lower bound for OCO
with finite memory with an explicit dependence on the memory length $m$.
\citet{zhaoWZ2022nonstationary} show a lower bound on an OCO with finite memory problem with \emph{memory size 2}. They do so by constructing loss functions whose \emph{Lipschitz constant depends on $m$}.
~\cref{thm:regret_lower_bound_finite} is the first lower bound for OCO with finite memory with \emph{memory size $m$ for general $m$} and \emph{Lipschitz constant independent of $m$}.
Our construction involves three main steps, the first two of which are loosely
based on~\citet{altschulerT2018online}. First, divide time into $N =
\nicefrac{T}{m}$ blocks of size $m$. (For simplicity, assume $T$ is a multiple
of $m$.) Second, sample a random sign $\eps_n$ for each block $n \in [N]$.
Third, for $t > m$ choose 
\begin{equation*}
  f_t(h_t) = \underbrace{\eps_{\ceil{\frac{t}{m}}}}_{(a)} \  \underbrace{L m^{-\frac12}}_{(b)} \  \underbrace{\left( x_{t-m+1} + \dots + x_{m \floor{\frac{t}{m}} + 1} \right)}_{(c)},
\end{equation*}
where term (a) is the random sign $\eps_n$ sampled for the block $n =
\ceil{\nicefrac{t}{m}}$ that $t$ belongs to, term (b) is a scaling factor chosen
while respecting the Lipschitz continuity constraint, and term (c) is a sum over
a \emph{subset} of past decisions. Two important features of this construction
are: (i) a random sign is sampled for each block rather than each round; and
(ii) the loss in round $t$ depends on the history of decisions until and
including the first round of the block that $t$ belongs to. These exploit the
fact that an algorithm cannot overwrite its history and penalize it for its past
decisions even after it observes the random sign $\eps_n$ for the current block.
(See~\cref{fig:lower_bound_oco_finite} for an illustration.) Existing lower
bound proofs for OCO sample a random sign in each round and choose $f_t(x_t)
\propto \eps_t x_t$. A first attempt at extending this for the OCO with finite
memory setting would be to choose $f_t(h_t) \propto \eps_t \sum_{k=0}^{m-1}
x_{t-k}$. However, in constrast to our approach, this does not exploit the fact
that an algorithm cannot overwrite its history and does not suffice for
obtaining a matching lower bound.

% \begin{figure}[htb]
%   \centering
%   \includegraphics[scale=0.35]{figs/lower_bound_oco_finite.pdf}
%   \caption{An illustration of the loss functions $f_t$ for the OCO with finite memory lower bound. \sd{this is an annoying request, but the fact that the font in this figure doesn't match the latex math font catches my eye. Maybe you could use tikz? this example is very close: \url{https://tex.stackexchange.com/questions/436259/timeline-in-latex}}}
%   \label{fig:lower_bound_oco_finite}
% \end{figure}

% tikz from https://tex.stackexchange.com/questions/436259/timeline-in-latex

\begin{figure}[htb]
  \centering
  \begin{tikzpicture}
    
    % draw horizontal line   
    \draw[thick, -Triangle] (0,0) -- (13cm,0) node[font=\scriptsize,below left=3pt and -8pt]{Time};
    
    % draw vertical lines
    \foreach \x in {0,1,...,11}
    \draw (\x cm, 3pt) -- (\x cm, -3pt);

    \foreach \x in {0,3,...,11}
    \draw[very thick] (\x cm, 3pt) -- (\x cm, -3pt);

    % write decisions and Rademacher RVs
    \foreach \x/\descr in {0/x_1,1/x_2,2/x_3,3/x_4,4/x_5,5/x_6,6/x_7,7/x_8,8/x_9,9/x_{10},10/x_{11},11/x_{12}}
    \node[font=\scriptsize, text height=1.75ex, text depth=.5ex] at (\x+.5, -.3) {$\descr$};

    \foreach \x/\descr in {0/\eps_1,3/\eps_2,6/\eps_3,9/\eps_4}
    \node[font=\scriptsize, text height=1.75ex, text depth=.5ex] at (\x+.5, .3) {$\descr$};
    
    % loss functions
    \foreach \x in {1,2,3}
    \filldraw[draw=black,fill=SeaGreen] (\x,-.7) rectangle (\x+1, -.9);
    \node[font=\scriptsize, text height=1.75ex, text depth=.5ex] at (6, -.8) {$f_4(h_4) = \eps_2 \  3^{-\frac12} \  (x_2 + x_3 + x_4)$};

    \foreach \x in {2,3}
    \filldraw[draw=black,fill=SeaGreen] (\x,-1.1) rectangle (\x+1, -1.3);
    \foreach \x in {4}
    \filldraw[draw=black,fill=Orange] (\x,-1.1) rectangle (\x+1, -1.3);
    \node[font=\scriptsize, text height=1.75ex, text depth=.5ex] at (6.7, -1.2) {$f_5(h_5) = \eps_2 \  3^{-\frac12} \  (x_3 + x_4)$};

    \foreach \x in {3}
    \filldraw[draw=black,fill=SeaGreen] (\x,-1.5) rectangle (\x+1, -1.7);
    \foreach \x in {4,5}
    \filldraw[draw=black,fill=Orange] (\x,-1.5) rectangle (\x+1, -1.7);
    \node[font=\scriptsize, text height=1.75ex, text depth=.5ex] at (7.4, -1.6) {$f_6(h_6) = \eps_2 \  3^{-\frac12} \  (x_4)$};

    % descriptions
    \node[font=\scriptsize, text height=1.75ex, text depth=.5ex] at (1, 1) {$T = 12, m = 3, L = 1$.};
    \node[font=\scriptsize, text height=1.75ex, text depth=.5ex] at (5.5, 1) {Resample every $m$ rounds.};
    \draw (3.5, 1) -- (4, 1);
    \draw (3.5, .6) -- (3.5, 1);
    \draw[-Triangle] (3.5, .6) -- (3.5, 0.5);
    
  \end{tikzpicture}
  \caption{An illustration of the loss functions $f_t$ for the OCO with finite memory lower bound.}
  \label{fig:lower_bound_oco_finite}
\end{figure}
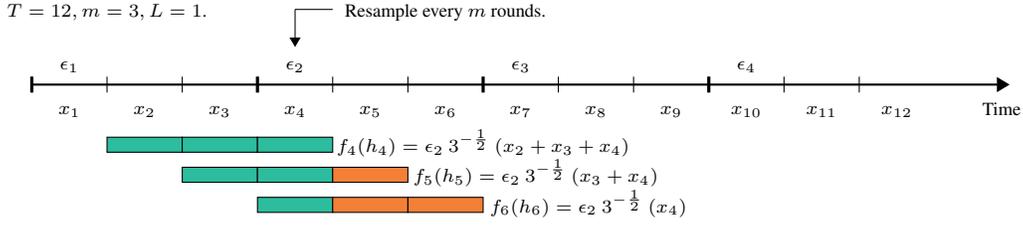

\paragraph{Comparison of upper bound with prior work.}
The algorithmic ideas and analysis for our regret upper bound are influenced
by~\citet{anavaHM2015online}. However, an important innovation in our work is
the use of weighted norms in the case of linear sequence dynamics. This is a
simple but powerful way of encoding prior knowledge about a problem, and allows
us to derive non-trivial regret bounds in the case of unbounded-length
histories. The technical complications that arise are captured in bounding the
relevant quantities of interest, e.g., the Lipschitz constant $\circfn{L}$, the
operator norm $\| A^k \|$, etc. Furthermore, using weighted norms even leads to
improved regret bounds for some applications. Indeed, consider the application
to online linear control with adversarial
disturbances~(\cref{subsec:applications_olc}). Our framework and upper bound
applied to this problem~(\cref{thm:regret_upper_bound_olc}) improve upon the
existing upper bound, which used a finite memory approximation.
See~\cref{lemma:appendix_olc_Ak,lemma:appendix_olc_L} for an illustration of the
technical details involved when using weighted norms.

% --------------------------------

\section{Applications}
\label{sec:applications}

% --------------------------------

In this section we apply our framework to online linear
control~(\cref{subsec:applications_olc}) and online performative
prediction~(\cref{subsec:applications_opp}). We defer expanded details and
proofs to~\cref{sec:appendix_olc,sec:appendix_opp} respectively.

% -----------------

\subsection{Online Linear Control}
\label{subsec:applications_olc}

% ---------

\paragraph{Background.}
Online linear control (OLC) is the problem of controlling a system with linear
dynamics, adversarial disturbances, and adversarial and convex losses. It
combines aspects from control theory and online learning. We refer the reader
to~\citet{agarwalBHKS2019online} for more details. Here, we introduce the basic
mathematical setup of the problem.

Let $\calS \subseteq \R^{d_s}$ and $\calU \subseteq \R^{d_u}$ denote the state
and control spaces. Let $s_t$ and $u_t$ denote the state and control at time $t$
with $s_0$ being the inital state. The system evolves according to the linear
dynamics $s_{t+1} = F s_t + G u_t + w_t$, where $F \in \R^{d_s \times d_s}, G
\in \R^{d_s \times d_u}$ are matrices satisfying $\| F \|_2, \| G \|_2 \leq
\kappa$ and $w_t \in \R^{d_s}$ is an adversarially chosen disturbance with $\| w
\|_2 \leq W$. Without loss of generality, we assume that $\kappa, W \geq 1$,
$d_s = d_u = d$, and also define $w_{-1} = s_0$. For $t = 0, \dots, T-1$, let
$c_t : \calS \times \calU \to [0,1]$ be convex loss functions chosen by an
oblivous adversary. The functions $c_t$ satisfy the following Lipschitz
condition: if $\| s \|_2, \| u \|_2 \leq D_\calX$, then $\| \nabla_s c_t(s, u)
\|, \| \nabla_u c_t(s, u) \| \leq L_0 D_\calX$. The goal in online linear
control is to choose a sequence of policies that yield a sequence of controls
$u_t$ to minimize the regret $ R_T(\Pi) = \sum_{t=0}^{T-1} c_t(s_t, u_t) -
\min_{\pi^* \in \Pi} \sum_{t=0}^{T-1} c_t(s_t^{\pi^*}, u_t^{\pi^*}), $ where
$s_t$ evolves according to linear dynamics stated above, $\Pi$ denotes a
controller class, and $s_t^{\pi^*}, c_t^{\pi^*}$ denote the state and control at
time $t$ when the controls are chosen according to $\pi^*$.
% \sd{this is a good place to mention constant input -- comment that this would be a very simple ``constant policy''. maybe cite one or more of the papers I shared. And say that if we were to focus on the constant policy, the history could be represented by exactly the state $s_t$ and $A=F$ and $G=B$, maybe reference more details in the appendix or just leave it at that}

A very simple controller class is constant input, i.e., $\Pi = \{ \pi_u : \pi(s)
= u \in \calU \}$. In this case, the history $h_t$ can be represented by the
finite-dimensional state $s_t$, and the operators can be set to $A = F$ and $B =
\nicefrac{G}{\| G \|}$.
However, like previous work~\citep{agarwalBHKS2019online} we focus on the class
of $(\kappa, \rho)$-strongly stable linear controllers, $\calK$, where $K \in
\calK$ satisfies $F - GK = H L H^{-1}$ with $\| K \|_2, \| H \|_2, \| H^{-1}
\|_2 \leq \kappa$ and $\| L \|_2 = \rho < 1$. Given such a controller, the
inputs are chosen as linear functions of the current state, i.e., $u_t = - K
s_t$. Unfortunately, parameterizing $u_t$ directly with a linear controller as
$u_t = -K s_t$ leads to a non-convex problem because $s_t$ is a non-linear
function of $K$, e.g., if disturbances are $0$, then $s_t = (F-GK)^t s_0$. An
alternative parameterization is the disturbance-action controller (DAC).

\begin{definition}\label{def:disturbance_action_controller}
  Let $K \in \calK$ be fixed. The class of disturbance-action controllers (DACs) $\calM_K$ is $\{ (K, M) : M = (M^{[s]})_{s=0}^\infty \}$, where $M^{[s]} \in \R^{d \times d}$ satisfies $\| M^{[s]} \|_2 \leq \kappa^4 \rho^s$. The control in round $k$ is chosen as $u_k = -K s_k + \sum_{s=1}^{k+1} M^{[s]} w_{k-s}$.
\end{definition}
The class of such DACs has two important properties. First, it acts on the
entire history of past disturbances. Consequently, given an arbitrary $K \in
\calK$, every $K^* \in \calK$ can be expressed as a DAC $(K, M) \in \calM_K$
with $M = (M^{[1]}, \dots, M^{[T+1]}, 0, \dots)$~\citep[Section
16.5]{agarwalJKS2019reinforcement}. That is, $\calK \subseteq \calM_K$ and it
suffices to compute regret against $\calM_K$ instead of $\calK$. For the rest of
this paper we fix $K \in \calK$ and denote $\wt{F} = F - GK$. Second, suppose
$M_t = (M_t^{[s]})_{s=0}^\infty$ is the parameter chosen in round $t$ and the
control $u_t$ is chosen according to the DAC $(K, M_t)$. Then, $s_t$ and $u_t$
are \emph{linear} functions of the parameters, which implies that $c_t$ is
convex in the parameters. (See the next paragraph on ``Formulation as OCO with
Unbounded Memory'' for a formula.) A similar parameterization was first
considered for online linear control by~\citet{agarwalBHKS2019online} and is
based on similar ideas in control theory, e.g.,
Youla~\citep{youlaJB1976modern,kuvcera1975stability} and
SLS~\citep{wangMD2019system,andersonDLM2019system}.

% ---------

\paragraph{Formulation as OCO with Unbounded Memory.}
% \sd{here or just after the definitions is a good place to talk about change of variables. The DAC is step 1 of change of variables wrt the control inputs; the definition of the history can be motivated as the corresponding change of variables for the state}
The first step is a change of variables with respect to the control inputs from
linear controllers to DACs and the second is a corresponding change of variables
for the state.
Define the decision space $\calX = \{ M = (M^{[s]}) : M^{[s]} \in \R^{d \times
d}, \| M^{[s]} \|_2 \leq \kappa^4 \rho^s \}$. Define the history space $\calH$
to be the set consisting of sequences $h = (Y_k)$, where $Y_0 \in \calX$ and
$Y_k = \wt{F}^{k-1} G X_k$ for $X_k \in \calX, k \geq 1$. (Recall $\wt{F} = F -
GK$.) Define weighted norms $\| M \|_\calX^2 = \sum_{s=1}^\infty \rho^{-s} \|
M^{[s]} \|_F^2$ and $\| h \|_\calH^2 = \sum_{k=0}^\infty \xi_k^2 \| Y_k
\|_\calX^2$,
% GK$.) Define weighted norms $\| M \|_\calX = \sqrt{\sum_{s=1}^\infty \rho^{-s}
% \| M^{[s]} \|_F^2}$ and $\| h \|_\calH = \sqrt{ \sum_{k=0}^\infty \xi_k^2 \| Y_k
% \|_\calX^2}$,
% \begin{equation}
%   \| M \|_\calX = \sqrt{\sum_{s=1}^\infty \rho^{-s} \| M^{[s]} \|_F^2}
%   \quad
%   \text{ and }
%   \quad
%   \| h \|_\calH = \sqrt{ \sum_{k=0}^\infty \xi_k^2 \| Y_k \|_\calX^2},
% \end{equation}
where the weights $(\xi_k)$ are defined as $\xi = (1, 1, 1, \rho^{-\frac12},
\rho^{-1}, \rho^{-\frac32}, \dots)$. Define the linear operators $A : \calH \to
\calH$ and $B : \calW \to \calH$ as $A((Y_0, Y_1, \dots)) = (0, G Y_0, \wt{F}
Y_1, \wt{F} Y_2, \dots)$ and $B(M) = (M, 0, 0, \dots)$. Note that the problem
follows linear sequence dynamics with the $\xi$-weighted
$2$-norm~(\cref{def:sequence}). The weights in the norms on $\calX$ and $\calH$
increase exponentially. However, the norms $\| M^{[s]} \|_F^2$ and $\|
\wt{F}^{k-1} G \|_F^2$ decrease exponentially as well: by definition of $\calX$
and the assumption on $\wt{F} = F - GK$ for $K \in \calK$. Leveraging these
exponential decays to define exponentially increasing weights is crucial for 
deriving our regret bounds that are stronger than existing results.

Construct the functions $f_t : \calH \to \R$ that correspond to $c_t(s_t, u_t)$
as follows. Given a sequence of decisions $(M_0, \dots, M_t)$, the history at
the end of round $t$ is given by $h_t = (M_t, G M_{t-1}, \wt{F} G M_{t-2},
\dots, \wt{F}^{t-1} G M_0, 0, \dots)$. A simple inductive argument shows that
the state and control in round $t$ can be written as functions of $h_t$ as $s_t
= \wt{F}^t s_0 + \sum_{k=0}^{t-1} \sum_{s=1}^{k+1} \wt{F}^{t-k-1} G M_k^{[s]}
w_{k-s} + w_{t-1}$ and $u_t = -K s_t + \sum_{s=1}^{t+1} M_t^{[s]} w_{t-s}$.
Define the functions $f_t : \calH \to \R$ by $f_t(h) = c_t(s, u)$, where $s$ and
$u$ are the state and control determined by the history as above. Note that
$f_t$ is parameterized by the past disturbances. Since the state and control are
linear functions of the history and $c_t$ is convex, this implies that $f_t$ is
convex. Now, given the above formulation and the fact that the class of
disturbance-action controllers is a superset of the class of
$(\kappa,\rho)$-strongly-stable linear controllers, we have that the policy
regret for the online linear control problem is at most the policy regret,
$\sum_{t=0}^{T-1} f_t(h_t) - \min_{M \in \calX} \sum_{t=0}^{T-1}
\circfn{f}_t(M)$. The following is our main result for online linear control and
it improves existing results~\citep{agarwalBHKS2019online} by a factor of $O( d
\log(T)^{3.5} \kappa^5 (1-\rho)^{-1})$. See~\cref{subsec:appendix_olc_existing}
for a detailed comparison.

\begin{restatable}{theorem}{thmregretupperboundolc}\label{thm:regret_upper_bound_olc}
  Consider the online linear control problem as defined
  in~\cref{subsec:applications_olc}. Suppose the decisions in round $t$ are
  chosen using~\cref{alg:ftrl}. Then, the upper bound on the policy regret is
  \begin{equation}
    O \left( L_0 W^2 \sqrt{T} d^\frac12 \kappa^{17} (1-\rho)^{-4.5} \right).
  \end{equation}
\end{restatable}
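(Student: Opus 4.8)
The plan is to instantiate Theorem~\ref{thm:regret_upper_bound} with the online linear control formulation described above and carefully bound each quantity appearing in the general regret bound $O\!\left(\sqrt{\tfrac{D}{\alpha} T L \circfn{L} H_p}\right)$ with $p=2$ (since the problem follows linear sequence dynamics with the $\xi$-weighted $2$-norm). Concretely, I would proceed by computing, in order: (i) the diameter $D$ and strong-convexity constant $\alpha$ of a suitable regularizer on $\calX$; (ii) the Lipschitz constant $L$ of the functions $f_t$ on $\calH$; (iii) the operator norms $\|A^k\|$ and hence $\circfn{L}$ via Theorem~\ref{thm:lcirc} and $H_2$ via Definition~\ref{def:Hp}; and finally (iv) plug these into Theorem~\ref{thm:regret_upper_bound} and simplify.

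For step (i), the natural regularizer is $R(M) = \tfrac12 \|M\|_\calX^2 = \tfrac12 \sum_{s\ge 1}\rho^{-s}\|M^{[s]}\|_F^2$, which is $1$-strongly convex in the $\|\cdot\|_\calX$ norm, so $\alpha = 1$; the diameter $D$ follows by summing the bound $\|M^{[s]}\|_2 \le \kappa^4\rho^s$ over $s$ against the weights $\rho^{-s}$, using $\|M^{[s]}\|_F^2 \le d\|M^{[s]}\|_2^2 \le d\kappa^8\rho^{2s}$, giving $\|M\|_\calX^2 \le d\kappa^8\sum_{s\ge1}\rho^s = O(d\kappa^8(1-\rho)^{-1})$, so $D = O(d\kappa^8(1-\rho)^{-1})$. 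For step (ii), I need the Lipschitz constant of $f_t(h) = c_t(s,u)$ where $s,u$ are the affine functions of $h$ written above; using the chain rule, the $c_t$-Lipschitz bound $L_0 D_\calX$, and bounding the operator norms of the linear maps $h\mapsto s$ and $h\mapsto u$ together with the disturbance bound $\|w\|_2\le W$, one gets $L = O(L_0 W^2 \cdot \mathrm{poly}(\kappa, (1-\rho)^{-1}, d))$ — tracking the exact powers here is the bookkeeping-heavy part. For step (iii), the key observation is that $A$ acts by shifting and multiplying the $k$-th coordinate by $\wt{F}$ (after the first by $G$), and because $\|\wt F^{k}\|_2 \le \kappa^2\rho^{k}$ while the weights $\xi_k$ grow only like $\rho^{-(k-2)/2}$, the weighted operator norm satisfies $\|A^k\| = O(\kappa^{O(1)}\rho^{k/2})$; this geometric decay makes both $\circfn{L} \le L(\sum_k \|A^k\|^2)^{1/2}$ and $H_2 = (\sum_k k^2\|A^k\|^2)^{1/2}$ finite and polynomial in $\kappa$ and $(1-\rho)^{-1}$ — this is precisely where the weighted norm pays off and where the improvement over existing $\log(T)$-dependent bounds comes from, since no truncation of the memory is needed.

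The main obstacle is step (iii) combined with the final accounting: correctly defining the weighted operator norm of $A^k$ so that the exponential decay of $\|\wt F^{k-1}G\|$ dominates the exponential growth of the weights $\xi_k$ (one must check the weights were chosen so that the ratio $\xi_{k+1}/\xi_k = \rho^{-1/2}$ is beaten by the per-step contraction $\rho$ of $\wt F$), and then chasing all the constants $\kappa$, $(1-\rho)^{-1}$, $d$, $W$, $L_0$ through $D$, $\alpha$, $L$, $\circfn{L}$, $H_2$ into the bound $O(\sqrt{\tfrac{D}{\alpha}TL\circfn{L}H_2})$ to land exactly on $O(L_0 W^2\sqrt T\, d^{1/2}\kappa^{17}(1-\rho)^{-4.5})$. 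I would organize this as a table of the five quantities with their bounds, then multiply. The conceptual content is entirely in Theorems~\ref{thm:regret_upper_bound} and~\ref{thm:lcirc} plus the convex reparameterization; the rest is careful estimation, and the detailed version of this argument is deferred to \cref{sec:appendix_olc}.
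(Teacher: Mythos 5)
Your plan is essentially identical to the paper's proof in \cref{sec:appendix_olc}: the same regularizer $R(M)=\tfrac12\|M\|_\calX^2$ with $D=O(d\kappa^8(1-\rho)^{-1})$, the same decomposition into bounds on $L$, $\|A^k\|$, $\circfn{L}$, and $H_2$, and the same key observation that the weight growth $\xi_{k+1}/\xi_k=\rho^{-1/2}$ is dominated by the contraction of $\wt{F}$, giving $\|A^k\|=O(\kappa^4\rho^{k/2})$. The constants you leave unchecked do work out as the paper computes them ($L=O(L_0W^2\kappa^9(1-\rho)^{-3})$ with no $d$ factor, $\circfn{L}=O(L\kappa^4(1-\rho)^{-1/2})$, $H_2=O(\kappa^4(1-\rho)^{-3/2})$), landing exactly on the stated bound.
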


% ---------

\paragraph{Comparison with prior and concurrent work.}
Existing works solve OLC (and its extensions) by making multiple finite memory
approximations. First, they formulate the problem as OCO with finite memory.
This requires bounding numerous error terms because the problem is inherently an
OCO with unbounded memory problem. We bypass these error analysis steps entirely
because the problem fits into our framework naturally. Second, existing works
use the parameterization from~\citet{agarwalBHKS2019online} that only acts on a
fixed, constant number of past disturbances. In particular, existing works use a
``truncated'' DAC policy that is a sequence of $d \times d$ matrices of length
$2 \kappa^4 (1-\rho)^{-1} \log T$. Our DAC policy acts on the entire history of
disturbances and is a sequence of $d \times d$ matrices of unbounded length.
Yet, we capture the dimension of this infinite-dimensional space in a way that
still improves the overall bound, including completely eliminating the
dependence on $\log T$, and improving the dependence on $d, \kappa$, and
$(1-\rho)$. This improvement comes from our novel use of weighted norms on
the history and decision spaces. These norms allow us to give tighter bounds on the relevant quantities
in the regret upper bound, e.g., $\| A^k \|$~(\cref{lemma:appendix_olc_Ak}) and
$\circfn{L}$~(\cref{lemma:appendix_olc_L}).

In complementary concurrent work,~\citet{lin2022online} focus on a more general
online control problem. They improve regret bounds for this general version by a
factor of $\log T$ compared to existing reductions to OCO with finite memory.
They do so by using that the impact of a past policy decays geometrically with
time. On the other hand, the primary focus of our work is studying the complete
dependence of present losses on the entire history in OCO. Applying our
resulting OCO with unbounded memory framework to OLC, we improve upon existing
results for OLC by removing all $\log T$ factors and improving the dependence on
$d, \kappa$, and $(1-\rho)$.
% \sd{this paragraph reads like something from a rebuttal (which it was) - let's make it more focused to the section at hand: comparing our approach to deriving tighter bounds to this other work. basically, we can say that this work accounted for decay and shaved a log T factor, but we did a more complete accounting and removed all log T factors (among other constants). The remarks about partial derivatives probably aren't necessary, we can say that they consider a more gneral control setting so the comparison isn't perfect, especially for constants, so I think we can mainly make the point about log T factors}

% ---------

% -----------------

\subsection{Online Performative Prediction}
\label{subsec:applications_opp}

% ---------

\paragraph{Background.}
In many applications of machine learning the algorithm's decisions influence the
data distribution, e.g., online labor
markets~\citep{anagnostopoulosCFLT2018algorithms,horton2010online}, predictive
policing~\citep{lumI2016predict}, on-street
parking~\citep{dowlingRZ2020modeling,pierceS2018sfpark}, vehicle sharing
markets~\citep{banerjeeRJ2015pricing}, etc. Motivated by such applications,
several works have studied the problem of performative prediction, which models
the data distribution as a function of the decision-maker's
decision~\citep{perdomoZMH2020performative,mendler-dunnerPZH2020stochastic,millerPZ2021outside,brownHK2022performative,rayRDF2022decision,jagadeesanZM2022regret}.
Most of these works view the problem as a stochastic optimization
problem;~\citet{jagadeesanZM2022regret} adopt a regret minimization perspective.
We refer the reader to these citations for more details. As a natural extension
to existing works, we introduce an online learning variant of performative
prediction with geometric decay~\citep{rayRDF2022decision} that differs from the
original formulations in a few key ways.

Let the decision set $\calX \subseteq \R^d$ be closed and convex with $\| x \|_2
\leq D_\calX$. Let $p_1$ denote the initial data distribution over the instance
space $\calZ$. In each round $t \in [T]$, the learner chooses a decision $x_t
\in \calX$ and an oblivious adversary chooses a loss function $l_t : \calX
\times \calZ \to [0,1]$, and then the learner suffers the loss $L_t(x_t) = \E_{z
\sim p_t} \left[ l_t(x_t, z) \right]$, where $p_t = p_t(x_1, \dots, x_t)$ is the
data distribution in round $t$. The goal in our online learning setting is to
minimize the difference between the algorithm's total loss and the total loss of
the best fixed decision, $\sum_{t=1}^T \E_{z \sim p_t} \left[ l_t(x_t, z)
\right] - \min_{x \in \calX} \sum_{t=1}^T \E_{z \sim p_t(x)} \left[ l_t(x, z)
\right]$, where $p_t(x) = p_t(x, \dots, x)$ is the data distribution in round
$t$ had $x$ been chosen in all rounds so far. This measure is similar to
performative regret~\citep{jagadeesanZM2022regret} and is a natural
generalization of performative optimality~\citep{perdomoZMH2020performative} for
an online learning formulation.

We make the following assumptions. First, the loss functions $l_t$ are convex
and $L_0$-Lipschitz continuous. Second, the data distribution satisfies for all
$t \geq 1$, $p_{t+1} = \rho p_t + (1-\rho) \calD(x_t)$, where $\rho \in (0,1)$
and $\calD(x_t)$ is a distribution over $\calZ$ that depends on the decision
$x_t$~\citep{rayRDF2022decision}. Third, $\calD(x)$ is a location-scale
distribution: $z \sim \calD(x)$ iff $z \sim \xi + F x$, where $F \in \R^{d
\times d}$ satisfies $\| F \|_2 < \infty$ and $\xi$ is a random variable with
mean $\mu$ and covariance $\Sigma$~\citep{rayRDF2022decision}.

Our problem formulation differs from existing work in the following ways. First,
we adopt an online learning perspective on performative prediction with
geometric decay, whereas~\citet{rayRDF2022decision} adopt a stochastic
optimization one. So, we assume that the loss functions $l_t$ are adversarially
chosen, whereas~\citet{rayRDF2022decision} assume $l_t = l$ are fixed. Second,
we assume that the dynamics ($\calD$ and $\rho$) are
known~(Assumption~\ref{ass:feedback}), whereas~\citet{rayRDF2022decision} assume
they are unknown and use samples from the data distribution. We believe that an
appropriate extension of our framework that can deal with unknown linear
operators $A$ and $B$ can be applied to this more difficult setting, and we
leave this as future work. Third, even though~\citet{jagadeesanZM2022regret}
also study an online learning variant of performative prediction, they assume
$l_t = l$ are fixed and the data distribution depends only on the current
decisions, whereas we assume the data distribution depends on the entire history
of decisions.

% ---------

\paragraph{Formulation as OCO with Unbounded Memory.}
Let $\rho \in (0, 1)$. Let the decision space $\calX \subseteq \R^d$ be closed
and convex with the $2$-norm. Let the history space $\calH$ be the
$\ell^1$-direct sum of countably infinite number of copies of $\calX$. Define the
linear operators $A : \calH \to \calH$ and $B : \calW \to \calH$ as $A((y_0,
y_1, \dots)) = (0, \rho y_0, \rho y_1, \dots)$ and $B(x) = (x, 0, \dots)$. The
problem is an OCO with $\rho$-discounted infinite memory problem and follows
linear sequence dynamics with the $1$-norm~(\cref{def:sequence}). Given a
sequence of decisions $(x_k)_{k=1}^t$, the history is $h_t = (x_t, \rho x_{t-1},
\dots, \rho^{t-1} x_1, 0, \dots)$ and the data distribution $p_t = p_t(h_t)$
satisfies: $z \sim p_t$ iff $z \sim \sum_{k=1}^{t-1} (1-\rho) \rho^{k-1} (\xi +
F x_{t-k}) + \rho^t p_1$. This follows from the recursive definition of $p_t$
and parametric assumption about $\calD(x)$. Define the functions $f_t : \calH
\to [0,1]$ by $f_t(h_t) = \E_{z \sim p_t} [ l_t(x_t, z) ]$. Now, the original
goal of minimizing the difference between the algorithm's total loss and the
total loss of the best fixed decision is equivalent to minimizing the policy
regret. The following is our main result for online performative prediction.

\begin{restatable}{theorem}{thmregretupperboundopp}\label{thm:regret_upper_bound_opp}
  Consider the online performative prediction problem as defined
  in~\cref{subsec:applications_opp}. Suppose the decisions in round $t$ are
  chosen using~\cref{alg:ftrl}. Then, the upper bound on the policy regret is
  \begin{equation*}
    O \left( D_\calX L_0 \sqrt{T} \| F \|_2 (1-\rho)^{-\frac12} \rho^{-1} \right).
  \end{equation*}
\end{restatable}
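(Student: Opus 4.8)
The plan is to instantiate the general regret bound of Theorem~\ref{thm:regret_upper_bound} for the tuple $(\calX, \calH, A, B)$ constructed in the ``Formulation as OCO with Unbounded Memory'' paragraph, which follows linear sequence dynamics with the $1$-norm. The main work is therefore to compute, or bound, each of the quantities that appear in that theorem: the diameter $D$ of the regularizer, the strong-convexity constant $\alpha$, the Lipschitz constant $L$ of the loss functions $f_t$ on $\calH$, the induced Lipschitz constant $\circfn{L}$ of $\circfn{f}_t$ on $\calX$, and the $1$-effective memory capacity $H_1$. Once these are in hand, plugging into $R_T(\texttt{FTRL}) \leq O(\sqrt{(D/\alpha)\,T L \circfn{L} H_1})$ and simplifying yields the stated bound.

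First I would pin down the operator norms. Since $A((y_0,y_1,\dots)) = (0,\rho y_0, \rho y_1,\dots)$ scales each coordinate by $\rho$ and shifts, on the $1$-normed history space we get $\|A^k\| = \rho^k$. Hence $H_1 = \sum_{k=0}^\infty k\rho^k = \rho/(1-\rho)^2$, and by Theorem~\ref{thm:lcirc} (the linear-sequence-dynamics case with $p=1$) $\circfn{L} \leq L\sum_{k=0}^\infty \|A^k\| = L/(1-\rho)$. Next, the Lipschitz constant $L$ of $f_t(h) = \E_{z\sim p_t}[l_t(x_t,z)]$ as a function of the history $h$: using the parametric form $z \sim \sum_{k=1}^{t-1}(1-\rho)\rho^{k-1}(\xi + F x_{t-k}) + \rho^t p_1$, a perturbation of $h = (x_t,\rho x_{t-1},\dots)$ changes both the evaluation point $x_t$ of $l_t$ and the mean of the distribution $p_t$ through the terms $F x_{t-k}$; since $l_t$ is $L_0$-Lipschitz in both arguments and the coefficient on coordinate $k$ of $h$ is $\rho^{k-1}$ (so perturbing that coordinate by $\delta_k$ perturbs $x_{t-k}$ by $\delta_k/\rho^{k-1}$, which then enters the mean scaled by $(1-\rho)\rho^{k-1}\|F\|_2$), a short calculation gives $L = O(L_0(1 + \|F\|_2))$, which we can absorb into $O(L_0\|F\|_2)$ after noting the leading-order dependence (or, to be safe, carry $L_0 \max(1,\|F\|_2)$ and observe the theorem statement's form). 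Finally, take the regularizer $R(x) = \tfrac12\|x\|_2^2$, which is $1$-strongly convex ($\alpha = 1$) with diameter $D = O(D_\calX^2)$ over the decision set $\{\|x\|_2 \leq D_\calX\}$.

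Assembling these: $\sqrt{(D/\alpha)\,T L \circfn{L} H_1} = O\!\left(\sqrt{D_\calX^2 \cdot T \cdot L_0\|F\|_2 \cdot \tfrac{L_0\|F\|_2}{1-\rho}\cdot\tfrac{\rho}{(1-\rho)^2}}\right) = O\!\left(D_\calX L_0 \|F\|_2 \sqrt{T}\,\rho^{1/2}(1-\rho)^{-3/2}\right)$. This is slightly sharper in the $\rho$-dependence than the claimed $O(D_\calX L_0 \sqrt{T}\|F\|_2(1-\rho)^{-1/2}\rho^{-1})$, so I would recheck which of $H_1$, $\circfn{L}$, or $L$ carries the extra $\rho$ factors in the intended accounting — most likely the paper's statement uses a looser bound $L = O(L_0/\rho)$ coming from bounding the coefficient $\rho^{k-1}$ that appears when converting a history-perturbation back into an $x_{t-k}$-perturbation, and correspondingly a cleaner $H_1 = O((1-\rho)^{-1})$ style estimate rather than the tight $\rho/(1-\rho)^2$. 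The main obstacle, then, is not any deep argument but getting the bookkeeping for $L$ exactly right: precisely how a perturbation of the history vector $h \in \calH$ (measured in the $1$-norm) propagates through the reparameterization $x_t \mapsto$ (evaluation point) and $x_{t-k}\mapsto$ (shift of the mean $\mu + F x_{t-k}$ of $p_t$) into a change in $\E_{z\sim p_t}[l_t(x_t,z)]$, since the $\rho$-powers enter with opposite signs in the two places and the final exponent on $(1-\rho)$ and $\rho$ depends delicately on that cancellation. I would do this computation carefully in the appendix (deferred, per the paper) and then simply invoke Theorem~\ref{thm:regret_upper_bound} as a black box.
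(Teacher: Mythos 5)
Your overall strategy is exactly the paper's: instantiate \cref{thm:regret_upper_bound} and separately bound $D$, $\alpha$, $L$, $\circfn{L}$, and $H_1$ (the paper does this in \cref{lemma:appendix_opp_D,lemma:appendix_opp_L,lemma:appendix_opp_Lcirc,lemma:appendix_opp_H2}, with the Lipschitz step phrased via the $1$-Wasserstein distance between $p_t$ and $\tilde p_t$, which is morally the same as your mean-shift calculation for a location-scale family). The gap is in the one step you yourself flag as delicate and then defer: the Lipschitz constant $L$ of $f_t$ with respect to $\|\cdot\|_\calH$. Your bookkeeping there is off by one power of $\rho$, and as a result your assembled bound $O\bigl(D_\calX L_0 \|F\|_2\sqrt{T}\,\rho^{1/2}(1-\rho)^{-3/2}\bigr)$ does not match the theorem and is \emph{not} sharper than it: as $\rho \to 1$ (the interesting long-memory regime) your exponent $(1-\rho)^{-3/2}$ is strictly worse than the claimed $(1-\rho)^{-1/2}$.

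Concretely: the $k$-th coordinate of the history is $y_k = \rho^{k}x_{t-k}$, while the mixture weight of $\calD(x_{t-k})$ in $p_t$ is $(1-\rho)\rho^{k-1}$. So a unit perturbation of $y_k$ perturbs $x_{t-k}$ by $\rho^{-k}$, which shifts the mean of $p_t$ by $(1-\rho)\rho^{k-1}\cdot\rho^{-k}\,\|F\|_2 = \tfrac{1-\rho}{\rho}\|F\|_2$ --- the powers of $\rho$ do not fully cancel as in your calculation (you used $\rho^{k-1}$ for both the history weight and the mixture weight). Summing over $k$ against the $\ell^1$ history norm and adding the direct $x_t$-dependence of $l_t$ gives $L \leq L_0\bigl(1 + \tfrac{1-\rho}{\rho}\|F\|_2\bigr)$, which the paper records as $L \leq O\bigl(L_0\tfrac{1-\rho}{\rho}\|F\|_2\bigr)$. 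Then $\circfn{L} \leq L(1-\rho)^{-1} \leq O(L_0\rho^{-1}\|F\|_2)$ and, with the paper's bound $H_1 \leq O((1-\rho)^{-2})$, the product under the square root is $L\circfn{L}H_1 = O\bigl(L_0^2\|F\|_2^2\,\rho^{-2}(1-\rho)^{-1}\bigr)$, yielding exactly the stated $O\bigl(D_\calX L_0\sqrt{T}\|F\|_2(1-\rho)^{-1/2}\rho^{-1}\bigr)$. (Using the tight $H_1 = \rho(1-\rho)^{-2}$ would in fact improve $\rho^{-1}$ to $\rho^{-1/2}$, but that is a refinement of the paper's statement, not what is claimed.) So the missing piece is not a new idea but precisely the cancellation you identified as the obstacle; without resolving it the proof does not reach the stated bound.
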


% ---------

% -----------------

% --------------------------------
\section{Conclusion}
\label{sec:conclusion}

% --------------------------------

In this paper we introduced a generalization of the OCO framework, ``Online
Convex Optimization with Unbounded Memory'', that allows the loss in the current
round to depend on the entire history of decisions until that point. We proved
matching upper and lower bounds on the policy regret in terms of the time
horizon, the $p$-effective memory capacity (a quantitative measure of the
influence of past decisions on present losses), and other problem
parameters~(\cref{thm:regret_upper_bound,thm:regret_lower_bound}). As a special
case, we proved the first non-trivial lower bound for OCO with finite
memory~(\cref{thm:regret_lower_bound_finite}), which could be of independent
interest, and also improved existing upper
bounds~(\cref{thm:regret_upper_bound_finite}). We illustrated the power of our
framework by bringing together the regret analysis of two seemingly disparate
problems under the same umbrella: online linear
control~(\cref{thm:regret_upper_bound_olc}), where we improve and simplify
existing regret bounds, and online performative
prediction~(\cref{thm:regret_upper_bound_opp}).

There are a number of directions for future research. A natural follow-up is to
consider unknown dynamics (i.e., when the learner does not know the operators
$A$ and $B$) and/or the case of bandit feedback (i.e., when the learner only
observes $f_t(h_t)$). The extension to bandit feedback has been considered in
the OCO and OCO with finite memory literature
~\citep{hazanL2016optimal,bubeckEL2021kernel,zhaoWZZ2021bandit,graduHH2020nonstochastic,casselK2020bandit}.
% In a similar vein, one can consider other extensions, such as incorporating switching costs~\citep{shiLCYW2020online}, non-stationarity~\citep{zhaoWZ2022nonstationary}, etc. 
It is tempting to think about a version where the history is a \emph{nonlinear},
but decaying, function of the past decisions. The obvious challenge is that the
nonlinearity would lead to non-convex losses. It is unclear how to deal with
such issues, e.g., restricted classes of nonlinearities for which the OCO with
unbounded memory perspective is still relevant~\citep{zhangYJZ2014online},
different problem formulations such as online non-convex
learning~\citep{gaoLZ2018online,suggalaN2020online}, etc.

There is a growing body of work on online linear control and its variants that
rely on OCO with finite
memory~\citep{hazanKS2020nonstochastic,agarwalHS2019logarithmic,fosterS2020logarithmic,casselK2020bandit,graduHH2020nonstochastic,liDL2021online,minasyanGSH2021online}.
In this paper we showed how our framework can be used to improve and simplify
regret bounds for the online linear control problem. Another direction for
future work is to use our framework, perhaps with suitable extensions outlined
above, to derive similar improvements for these other variants of online linear
control.

% Optimization and machine learning are at the core of many automated decision-making systems. In a variety of domains -- such as robotics, personalized search, and social media -- these decisions impact the future, and thus the perspective of online learning and optimization with memory is relevant. While the use of such automated technologies can lead to negative social impact, we do not anticipate that such impacts will arise directly from our theoretical work.
% \sd{does neurips ask for this?}
% \rk{They ask for this in a separate form on OpenReview.}

% --------------------------------

% Acknowledgements.
\begin{ack}
  We thank Sloan Nietert and Victor Sanches Portella for helpful discussions. We
  thank Wei-Yu Chan for pointing out an error in the proof
  of~\cref{lemma:dist_actual_ideal_history}. We also thank anonymous AISTATS and
  NeurIPS reviewers for their helpful comments on an older and the current
  version of the paper respectively. This research was partially supported by
  the NSERC Postgraduate Scholarships-Doctoral Fellowship 545847-2020, NSF
  awards CCF-2312774 and OAC-2311521, and a gift from Wayfair.
  % \sd{maybe just say anonymous reviewers which will also include the neurips reviewers? even if we didnt find them as helpful :P}
\end{ack}

% References.
\bibliographystyle{plainnat}
\bibliography{refs}

% ------------------------------------------

% Appendix.
\newpage
\appendix

\section*{Organization of the Appendix}
The appendix is organized as follows:
\begin{itemize}
  \item \cref{sec:appendix_framework} contains the proofs of the results
  from~\cref{sec:framework}.
  \item \cref{sec:appendix_standard_ftrl} contains the proofs of existing
  results about FTRL for completeness.
  \item \cref{sec:appendix_regret_upper_bounds} contains the proofs of upper
  bounds on regret from~\cref{sec:regret_analysis}.
  \item \cref{sec:appendix_regret_lower_bounds} contains the proofs of lower
  bounds on regret from~\cref{sec:regret_analysis}.
  \item \cref{sec:appendix_olc} contains the proofs of results about online
  linear control from~\cref{subsec:applications_olc}.
  \item \cref{sec:appendix_opp} contains the proofs of results about online performative prediction from~\cref{subsec:applications_opp}.
  \item \cref{sec:appendix_implementation} discusses how to implement~\cref{alg:ftrl} efficiently.
  \item \cref{sec:appendix_minibatchftrl} presents an algorithm for OCO with
  unbounded memory that provides the same upper bound on policy regret
  as~\cref{alg:ftrl} while guaranteeting a small number of
  switches~(\cref{alg:minibatch_ftrl}).
  \item \cref{sec:appendix_experiments} presents simulation experiments.
\end{itemize}

% \tableofcontents
\section{Framework}
\label{sec:appendix_framework}

% --------------------------------

In this section prove~\cref{thm:lcirc}. But first we prove a lemma that we use
for proofs involving linear sequence dynamics with the $\xi$-weighted
$p$-norm~(\cref{def:sequence}). Recall that $\| \cdot \|_\calU$ denotes the norm
associated with a space $\calU$ and the operator norm $\| L \|$ for a linear
operator $L : \calU \to \calV$ is defined as $\| L \| = \max_{u : \| u \|_\calU
\leq 1} \| L u \|_\calV$.

\begin{lemma}\label{lemma:sequence_norm_inequality}
  Consider an online convex optimization with unbounded memory problem specified
  by $(\calX, \calH, A, B)$. If $(\calX, \calH, A, B)$ follows linear sequence
  dynamics with the $\xi$-weighted $p$-norm for $p \geq 1$, then for all $k \geq
  1$
  \begin{equation*}
    \xi_k \| A_{k-1} \cdots A_0 \| \leq \| A^k \|.
  \end{equation*}
\end{lemma}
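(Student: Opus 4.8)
The plan is to exhibit a single unit vector in $\calH$ on which $A^k$ acts exactly like $\xi_k A_{k-1}\cdots A_0$, so that the operator norm $\|A^k\|$ immediately dominates $\xi_k \|A_{k-1}\cdots A_0\|$. The only ingredient needed is an explicit formula for the action of $A^k$ on a sequence, which follows by a short induction from the definition of linear sequence dynamics.

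First I would compute $A^k$ componentwise. By \cref{def:sequence}, $A(y_0, y_1, \dots) = (0, A_0 y_0, A_1 y_1, \dots)$, i.e. $(Ay)_0 = 0$ and $(Ay)_i = A_{i-1} y_{i-1}$ for $i \geq 1$. A straightforward induction on $k$ then gives, for every $k \geq 1$,
\begin{equation*}
  (A^k y)_i = 0 \text{ for } 0 \leq i < k, \qquad (A^k y)_i = A_{i-1} A_{i-2} \cdots A_{i-k}\, y_{i-k} \text{ for } i \geq k .
\end{equation*}
In particular, the $k$-th coordinate is $(A^k y)_k = A_{k-1} A_{k-2} \cdots A_0\, y_0$. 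The inductive step is routine: applying $A$ to $A^{k-1}y$ shifts the (already vanishing) leading coordinates and prepends one more zero, and composes one additional $A_i$ onto each surviving coordinate; I would not belabour the bookkeeping.

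Next I would specialize to test vectors supported on the first coordinate. Fix $x \in \calW$ with $\|x\|_\calW \leq 1$ and set $y = B(x) = (x, 0, 0, \dots) \in \calH$. Since $\xi_0 = 1$, we have $\|y\|_\calH = \xi_0 \|x\|_\calW = \|x\|_\calW \leq 1$. By the formula above, every coordinate of $A^k y$ vanishes except the $k$-th, which equals $A_{k-1}\cdots A_0\, x$ (all other coordinates involve $y_j$ for $j \neq 0$, which are zero). Hence, using the definition of $\|\cdot\|_\calH$,
\begin{equation*}
  \| A^k y \|_\calH = \xi_k \, \| A_{k-1} \cdots A_0\, x \|_\calW .
\end{equation*}
Finally, taking the supremum over all $x$ with $\|x\|_\calW \leq 1$ and using that $\|A^k\| = \max_{\|y\|_\calH \leq 1} \|A^k y\|_\calH \geq \sup_{\|x\|_\calW \leq 1} \|A^k B(x)\|_\calH$ yields $\|A^k\| \geq \xi_k \|A_{k-1}\cdots A_0\|$, as claimed.

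\textbf{Main obstacle.} There is no real obstacle here; the only thing requiring care is getting the index shifts in the induction for $(A^k y)_i$ exactly right (off-by-one errors in which $A_i$'s appear and where the zeros start). Once that formula is in hand, the rest is a one-line evaluation on $B(x)$.
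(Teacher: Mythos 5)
Your proof is correct and follows essentially the same route as the paper: both arguments evaluate $A^k$ on the unit-norm test vector $(x,0,\dots)$, use $\xi_0 = 1$ to bound its $\calH$-norm, and read off $\xi_k \|A_{k-1}\cdots A_0 x\| = \|A^k(x,0,\dots)\|_\calH \leq \|A^k\|$. The explicit componentwise induction for $A^k$ is additional detail the paper leaves implicit, but the core idea is identical.
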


\begin{proof}
  Let $x \in \calX$ with $\| x \|_\calX = 1$. We have
  \begin{equation*}
    \xi_k \| A_{k-1} \cdots A_0 x \|_\calX
    = \| A^k (x, 0, \dots) \|_\calH
    \leq \| A^k \| \| (x, 0, \dots) \|_\calH
    \leq \| A^k \|,
  \end{equation*}
  where the last inequality follows because $\| (x, 0, \dots) \|_\calH = \xi_0
  \| x \|_\calX$ and $\xi_0 = 1$ by~\cref{def:sequence}. Therefore, $\| A_{k-1}
  \cdots A_0 \| \leq \| A^k \|$.
\end{proof}

\thmlcirc*

\begin{proof}
  Let $x, \tilde{x} \in \calX$. For the general case, we have
  \begin{align*}
    \left\vert \circfn{f}_t(x) - \circfn{f}_t(\tilde{x}) \right\vert
    &= \left\vert f_t \left( \sum_{k=0}^{t-1} A^k B x \right) - f_t \left( \sum_{k=0}^{t-1} A^k B \tilde{x} \right) \right\vert & \text{ by~\cref{def:circfn}} \\
    &\leq L \  \left\| \sum_{k=0}^{t-1} A^k B (x - \tilde{x}) \right\|_\calH & f_t \text{ is } L\text{-Lipschitz continuous} \\
    &\leq L \  \sum_{k=0}^{t-1} \| A^k \| \| B \| \| x - \tilde{x} \|_\calX \\
    &\leq L \  \sum_{k=0}^{t-1} \| A^k \| \| x - \tilde{x} \|_\calX & \text{ by Assumption }~\ref{ass:norm_B} \\
    &\leq L \  \sum_{k=0}^{\infty} \| A^k \| \| x - \tilde{x} \|_\calX.
  \end{align*}
  
  If $(\calH, \calX, A, B)$ follows linear sequence dynamics with the
  $\xi$-weighted $p$-norm for $p \geq 1$, then we have
  \begin{align*}
    \left\vert \circfn{f}_t(x) - \circfn{f}_t(\tilde{x}) \right\vert
    &= \left\vert f_t \left( \sum_{k=0}^{t-1} A^k B x \right) - f_t \left( \sum_{k=0}^{t-1} A^k B \tilde{x} \right) \right\vert & \text{ by~\cref{def:circfn}} \\
    &\leq L \  \left\| \sum_{k=0}^{t-1} A^k B (x - \tilde{x}) \right\|_\calH & f_t \text{ is } L\text{-Lipschitz continuous} \\
    &= L \  \left\| (0, A_0 (x - \tilde{x}), A_1 A_0 (x - \tilde{x}), \dots) \right\| & \text{ by~\cref{def:sequence}} \\
    &= L \  \left( \sum_{k=0}^{t-1} \xi_k^p \left\| A_{k-1} \cdots A_0 (x - \tilde{x}) \right\|^p \right)^{\frac{1}{p}} & \text{ by~\cref{def:sequence}} \\
    &\leq L\  \left( \sum_{k=0}^{t-1} \| A^k \|^p \right)^{\frac{1}{p}} \left\| x - \tilde{x} \right\|_\calX & \text{ by~\cref{lemma:sequence_norm_inequality}} \\
    &\leq L\  \left( \sum_{k=0}^{\infty} \| A^k \|^p \right)^{\frac{1}{p}} \left\| x - \tilde{x} \right\|_\calX. \qedhere
  \end{align*}
\end{proof}

% --------------------------------
\section{Standard Analysis of Follow-the-Regularized-Leader}
\label{sec:appendix_standard_ftrl}

% --------------------------------

In this section we state and prove some existing results about the
follow-the-regularized-leader (FTRL)
algorithm~\citep{shalevshwartzS2006online,abernethyHR2008competing}. These
results are well known in the literature, but we prove them here for
completeness and use them in the remainder of the paper. We use the below
results for functions $\circfn{f}_t$ with Lipschitz constants $\circfn{L}$.
However, in this section we use a more general notation, denoting functions by
$g_t$ and their Lipschitz constant by $L_g$.

Consider the following setup for an online convex optimization (OCO) problem.
Let $T$ denote the time horizon. Let the decision space $\calX$ be a closed,
convex subset of a Hilbert space and $g_t : \calX \to \R$ be loss functions
chosen by an oblivious adversary. The functions $g_t$ are convex and
$L_g$-Lipschitz continuous. The game between the learner and the adversary
proceeds as follows. In each round $t \in [T]$, the learner chooses $x_t \in
\calX$ and the learner suffers loss $g_t(x_t)$. The goal of the learner is to
minimize (static) regret,
\begin{equation}
  \label{eq:oco_regret}
  R_T^{\text{static}} = \sum_{t=1}^T g_t(x_t) - \min_{x \in \calX} \sum_{t=1}^T g_t(x).
\end{equation}
Let $R : \calX \to \R$ be an $\alpha$-strongly convex regularizer satisfying
$|R(x) - R(\tilde{x})| \leq D$ for all $x, \tilde{x} \in \calX$. The FTRL
algorithm chooses iterates $x_t$ as
\begin{equation}
  \label{eq:oco_ftrl}
  x_t \in \argmin_{x \in \calX} \sum_{s=1}^{t-1} g_s(x) + \frac{R(x)}{\eta},
\end{equation}
where $\eta$ is a tunable parameter referred to as the step-size. In what
follows, let $g_0 = \frac{R}{\eta}$. The analysis in this section closely
follows~\citet{karlin2017notes}.

\begin{lemma}\label{lemma:oco_ftrl_lemma_1}
  For all $x \in \calX$, FTRL~(\cref{eq:oco_ftrl}) satisfies
  \begin{equation*}
    \sum_{t=0}^T g_t(x) \geq \sum_{t=0}^T g_t(x_{t+1}).
  \end{equation*}
\end{lemma}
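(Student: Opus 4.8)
The plan is to prove this by induction on the horizon $T$, using the standard ``be-the-leader'' argument. The key observation is that, with the convention $g_0 = R/\eta$, the FTRL update~(\cref{eq:oco_ftrl}) can be rewritten as $x_{t} \in \argmin_{x \in \calX} \sum_{s=0}^{t-1} g_s(x)$; equivalently, $x_{t+1}$ is exactly the minimizer of the cumulative loss $\sum_{s=0}^{t} g_s$ over all decisions seen through round $t$ (including the regularizer term). So the iterate $x_{t+1}$ is the ``leader'' after round $t$, and the lemma says that playing the leader one step ahead is at least as good as any fixed comparator.

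For the base case $T = 0$, I need $g_0(x) \geq g_0(x_1)$ for all $x \in \calX$, which is immediate because $x_1 \in \argmin_{x \in \calX} g_0(x)$ by definition. For the inductive step, I assume the inequality at horizon $T-1$, namely $\sum_{t=0}^{T-1} g_t(x) \geq \sum_{t=0}^{T-1} g_t(x_{t+1})$ for all $x \in \calX$. I instantiate this at the particular point $x = x_{T+1}$ and add $g_T(x_{T+1})$ to both sides, which gives $\sum_{t=0}^{T} g_t(x_{T+1}) \geq \sum_{t=0}^{T} g_t(x_{t+1})$. Separately, optimality of $x_{T+1}$ for the cumulative loss through round $T$ yields $\sum_{t=0}^{T} g_t(x) \geq \sum_{t=0}^{T} g_t(x_{T+1})$ for every $x \in \calX$. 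Chaining these two inequalities gives $\sum_{t=0}^{T} g_t(x) \geq \sum_{t=0}^{T} g_t(x_{t+1})$, completing the induction.

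There is no genuine obstacle here; the argument is a few lines. The only thing to be careful about is the index bookkeeping: recognizing that folding the regularizer into $g_0$ makes $x_{t+1}$ the unregularized minimizer of $\sum_{s=0}^{t} g_s$, and that the inductive hypothesis must be applied to the specific comparator $x_{T+1}$ (not a generic $x$) before invoking optimality of $x_{T+1}$ for the generic comparator. This lemma will then be combined with a telescoping/stability estimate on $g_t(x_t) - g_t(x_{t+1})$ in the subsequent lemmas to bound the static regret $R_T^{\text{static}}$ of FTRL.
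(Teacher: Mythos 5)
Your proof is correct and is essentially identical to the paper's: both use induction on $T$ with the base case following from $x_1 \in \argmin_{x \in \calX} R(x)$, and the inductive step chaining the optimality of $x_{T+1}$ for $\sum_{t=0}^{T} g_t$ with the inductive hypothesis applied at the comparator $x_{T+1}$. The only difference is the order in which you write the two inequalities before chaining them, which is immaterial.
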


\begin{proof}
  We use proof by induction on $T$. The base case is $T = 0$. By definition,
  $x_1 \in \argmin_{x \in \calX} R(x)$. Therefore, $R(x) \geq R(x_1)$ for all $x
  \in \calX$. Recalling the notation $g_0 = \frac{R}{\eta}$ proves the base
  case. Now, assume that the lemma is true for $T-1$. That is,
  \begin{equation*}
    \sum_{t=0}^{T-1} g_t(x) \geq \sum_{t=0}^{T-1} g_t(x_{t+1}).
  \end{equation*}
  Let $x \in \calX$ be arbitrary. Since $x_{T+1} \in \argmin_{x \in \calX}
  \sum_{t=0}^T g_t(x)$, we have
  \begin{align*}
    \sum_{t=0}^T g_t(x)
    &\geq \sum_{t=0}^{T} g_t(x_{T+1}) \\
    &= \sum_{t=0}^{T-1} g_t(x_{T+1}) + g_T(x_{T+1}) \\
    &\geq \sum_{t=0}^{T-1} g_t(x_{t+1}) + g_T(x_{T+1}) & \text{ by inductive hypothesis} \\
    &= \sum_{t=0}^T g_t(x_{t+1}).
  \end{align*}
  This completes the proof.
\end{proof}

\begin{lemma}\label{lemma:oco_ftrl_lemma_2}
  For all $x \in \calX$, FTRL~(\cref{eq:oco_ftrl}) satisfies
  \begin{equation*}
    \sum_{t=1}^T g_t(x_t) - \sum_{t=1}^{T} g_t(x) \leq \frac{D}{\eta} + \sum_{t=1}^T g_t(x_t) - g_t(x_{t+1}).
  \end{equation*}
\end{lemma}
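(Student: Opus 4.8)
The plan is to derive this inequality directly from Lemma~\ref{lemma:oco_ftrl_lemma_1} together with the boundedness of the regularizer, via a one-line telescoping rearrangement. Recall the convention $g_0 = \frac{R}{\eta}$ and that, by definition of the FTRL iterates, $x_1 \in \argmin_{x \in \calX} R(x)$.

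First I would apply Lemma~\ref{lemma:oco_ftrl_lemma_1} to the arbitrary fixed $x \in \calX$, obtaining $\sum_{t=0}^T g_t(x) \geq \sum_{t=0}^T g_t(x_{t+1})$. Peeling off the $t = 0$ term on each side turns this into $\frac{R(x)}{\eta} + \sum_{t=1}^T g_t(x) \geq \frac{R(x_1)}{\eta} + \sum_{t=1}^T g_t(x_{t+1})$, and rearranging gives
\begin{equation*}
  \sum_{t=1}^T g_t(x_{t+1}) \;\leq\; \sum_{t=1}^T g_t(x) + \frac{R(x) - R(x_1)}{\eta} \;\leq\; \sum_{t=1}^T g_t(x) + \frac{D}{\eta},
\end{equation*}
where the last step uses the assumption $|R(x) - R(\tilde{x})| \leq D$.

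Then I would add and subtract $\sum_{t=1}^T g_t(x_{t+1})$ in the left-hand side of the claim:
\begin{equation*}
  \sum_{t=1}^T g_t(x_t) - \sum_{t=1}^T g_t(x) = \left( \sum_{t=1}^T g_t(x_t) - g_t(x_{t+1}) \right) + \left( \sum_{t=1}^T g_t(x_{t+1}) - \sum_{t=1}^T g_t(x) \right),
\end{equation*}
and bound the second parenthesized term by $\frac{D}{\eta}$ using the inequality from the previous step. This yields exactly the stated bound.

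There is essentially no hard step here: the only point requiring care is the bookkeeping of the $t = 0$ (regularizer) term when it is peeled off the sum coming from Lemma~\ref{lemma:oco_ftrl_lemma_1}, and checking that the sign works out so that the one-sided bound $R(x) - R(x_1) \leq D$ suffices. Everything else is a routine rearrangement.
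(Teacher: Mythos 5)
Your proof is correct and is essentially the same argument as the paper's: both invoke \cref{lemma:oco_ftrl_lemma_1}, peel off the $t=0$ regularizer term, and bound $\frac{R(x)-R(x_1)}{\eta}$ by $\frac{D}{\eta}$ before rearranging. The only cosmetic difference is that you rearrange after applying the lemma while the paper first adds the nonnegative quantity $g_0(x) - g_0(x_1)$; the content is identical.
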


\begin{proof}
  Note that
  \begin{equation*}
    \sum_{t=1}^T g_t(x_t) - \sum_{t=1}^T g_t(x) \leq \sum_{t=1}^T g_t(x_t) - \sum_{t=1}^T g_t(x) + g_0(x) - g_0(x_1)
  \end{equation*}
  because $x_1 \in \argmin_{x \in \calX} g_0(x)$. The proof of this lemma now
  follows by using the above inequality,~\cref{lemma:oco_ftrl_lemma_1}, the
  definition $g_0 = \frac{R}{\eta}$, and the definition of $D$.
\end{proof}

\begin{theorem}\label{thm:oco_ftrl}
  FTRL~(\cref{eq:oco_ftrl}) satisfies
  \begin{equation*}
    \| x_{t+1} - x_t \|_\calX \leq \eta \frac{L_g}{\alpha}
    \quad
    \text{ and }
    \quad
    R_T^{\text{static}} \leq \frac{D}{\eta} + \eta \frac{T L_g^2}{\alpha}.
  \end{equation*}
  Choosing $\eta = \sqrt{\frac{\alpha D}{T L_g^2}}$ yields
  \begin{equation*}
    R_T^{\text{static}} \leq O \left( \sqrt{ \frac{D}{\alpha} T L_g^2} \right).
  \end{equation*}
\end{theorem}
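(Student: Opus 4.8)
The plan is to first establish the stability bound $\|x_{t+1}-x_t\|_\calX \leq \eta L_g/\alpha$ and then feed it into Lemma~\ref{lemma:oco_ftrl_lemma_2} to obtain the regret bound. Write $\Phi_t(x) = \sum_{s=0}^{t} g_s(x)$; recalling $g_0 = R/\eta$, the FTRL iterate is $x_{t+1} \in \argmin_{x \in \calX} \Phi_t(x)$. Since $R$ is $\alpha$-strongly convex and every $g_s$ with $s \geq 1$ is convex, each $\Phi_t$ is $(\alpha/\eta)$-strongly convex. The one nonroutine ingredient is the standard fact that if $\Phi$ is $\mu$-strongly convex on the convex set $\calX$ with minimizer $x^\star$, then $\Phi(x) \geq \Phi(x^\star) + \tfrac{\mu}{2}\|x-x^\star\|_\calX^2$ for all $x \in \calX$; I would prove this by writing the strong-convexity inequality for the point $\lambda x + (1-\lambda)x^\star \in \calX$, using $\Phi(x^\star) \leq \Phi(\lambda x + (1-\lambda)x^\star)$, dividing by $\lambda$, and letting $\lambda \to 0$. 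Note this argument uses only the minimizing property of the iterates and strong convexity of the cumulative objectives, so it does not require the $g_t$ to be differentiable.

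Next I would apply this fact twice: to $\Phi_{t-1}$ (minimizer $x_t$) evaluated at $x_{t+1}$, and to $\Phi_t = \Phi_{t-1} + g_t$ (minimizer $x_{t+1}$) evaluated at $x_t$. Adding the two inequalities, the $\Phi_{t-1}$ terms cancel and one is left with $g_t(x_t) - g_t(x_{t+1}) \geq \tfrac{\alpha}{\eta}\|x_t - x_{t+1}\|_\calX^2$. Combining this with $L_g$-Lipschitz continuity, $g_t(x_t) - g_t(x_{t+1}) \leq L_g \|x_t - x_{t+1}\|_\calX$, and dividing through by $\|x_t - x_{t+1}\|_\calX$ (the bound is trivial when this is zero) yields $\|x_t - x_{t+1}\|_\calX \leq \eta L_g/\alpha$.

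For the regret bound I would invoke Lemma~\ref{lemma:oco_ftrl_lemma_2} with $x$ a minimizer of $\sum_{t=1}^T g_t$, obtaining $R_T^{\text{static}} \leq \tfrac{D}{\eta} + \sum_{t=1}^T \big(g_t(x_t) - g_t(x_{t+1})\big)$, and then bound each summand by $L_g\|x_t - x_{t+1}\|_\calX \leq \eta L_g^2/\alpha$ using Lipschitz continuity and the stability bound just proved, giving $R_T^{\text{static}} \leq \tfrac{D}{\eta} + \eta \tfrac{T L_g^2}{\alpha}$. Substituting $\eta = \sqrt{\alpha D/(T L_g^2)}$, which balances the two terms, makes each term equal to $\sqrt{D T L_g^2/\alpha}$, so $R_T^{\text{static}} \leq 2\sqrt{D T L_g^2/\alpha} = O\big(\sqrt{(D/\alpha) T L_g^2}\big)$. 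The main (and only mildly delicate) obstacle is the strong-convexity-to-quadratic-lower-bound fact used in the stability step; everything else is bookkeeping with the two lemmas already in hand.
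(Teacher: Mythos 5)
Your proof is correct and follows essentially the same route as the paper: stability of the iterates via the $(\alpha/\eta)$-strong convexity of the cumulative objectives evaluated at consecutive minimizers, combined with Lipschitz continuity, then plugged into Lemma~\ref{lemma:oco_ftrl_lemma_2} and balancing the two terms with the stated $\eta$. The only difference is cosmetic: you explicitly derive the quadratic-growth consequence of strong convexity at a constrained minimizer (which the paper uses without proof), and your non-differentiability remark is a nice touch but not a departure.
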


\begin{proof}
  Let $x^* \in \argmin_{x \in \calX} \sum_{t=1}^T g_t(x)$.
  Using~\cref{lemma:oco_ftrl_lemma_2} we have
  \begin{equation}\label{eq:thm_oco_ftrl_eq_1}
    \sum_{t=1}^T g_t(x_t) - \sum_{t=1}^T g_t(x^*) \leq \frac{D}{\eta} + \sum_{t=1}^T g_t(x_t) - g_t(x_{t+1}).
  \end{equation}
  We can bound the summands in the sum above as follows. Define $G_t(x) =
  \sum_{s=0}^{t-1} g_s(x)$. Then, $x_t \in \argmin_{x \in \calX} G_t(x)$. and
  $x_{t+1} \in \argmin_{x \in \calX} G_{t+1}(x)$. Since $\{g_s\}_{s=1}^T$ are
  convex, $R$ is $\alpha$-strongly-convex, and $g_0 = \frac{R}{\eta}$, we have
  that $G_t$ is $\frac{\alpha}{\eta}$-strongly-convex. So,
  \begin{align*}
    G_t(x_{t+1}) &\geq G_t(x_t) + \frac{\alpha}{2 \eta} \| x_{t+1} - x_t \|_\calX^2, \\
    G_{t+1}(x_t) &\geq G_{t+1}(x_{t+1}) + \frac{\alpha}{2 \eta} \| x_{t+1} - x_t \|_\calX^2.
  \end{align*}
  Adding the above two inequalities yields
  \begin{equation}\label{eq:thm_oco_ftrl_eq_2}
    g_t(x_t) - g_t(x_{t+1}) \geq \frac{\alpha}{\eta} \| x_{t+1} - x_t \|_\calX^2.
  \end{equation}
  Since $g_t$ are convex and $L_g$-Lipschitz continuous, we also have
  \begin{equation}\label{eq:thm_oco_ftrl_eq_3}
    g_t(x_t) - g_t(x_{t+1}) \leq L_g \| x_{t+1} - x_t \|_\calX.
  \end{equation}
  Combining~\cref{eq:thm_oco_ftrl_eq_2,eq:thm_oco_ftrl_eq_3} we have
  \begin{equation*}
    \| x_{t+1} - x_t \|_\calX \leq \eta \frac{L_g}{\alpha}.
  \end{equation*}
  This proves the first part of the theorem. Now, using this
  in~\cref{eq:thm_oco_ftrl_eq_3} we have
  \begin{equation}
    g_t(x_t) - g_t(x_{t+1}) \leq \eta \frac{L_g^2}{\alpha}.
  \end{equation}
  Finally, substituting this in~\cref{eq:thm_oco_ftrl_eq_1} proves the second
  part of the theorem.
\end{proof}

% --------------------------------
\section{Regret Analysis: Upper Bounds}
\label{sec:appendix_regret_upper_bounds}

% --------------------------------

First we prove a lemma that bounds the difference in the value of $f_t$
evaluated at the actual history $h_t$ and an idealized history that would have
been obtained by playing $x_t$ in all prior rounds.

\begin{lemma}\label{lemma:dist_actual_ideal_history}
  Consider an online convex optimization with unbounded memory problem specified
  by $(\calX, \calH, A, B)$. If the decisions $(x_t)$ are generated
  by~\cref{alg:ftrl}, then
  \begin{equation*}
    \left\vert f_t(h_t) - \circfn{f}_t(x_t) \right\vert \leq \eta \frac{L \circfn{L} H_1}{\alpha}
  \end{equation*}
  for all rounds $t$. When $(\calX, \calH, A, B)$ follows linear sequence
  dynamics with the $\xi$-weighted $p$-norm for $p \geq 1$, then
  \begin{equation*}
    \left\vert f_t(h_t) - \circfn{f}_t(x_t) \right\vert \leq \eta \frac{L \circfn{L} H_p}{\alpha}
  \end{equation*}
  for all rounds $t$.
\end{lemma}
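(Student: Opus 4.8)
The plan is to introduce the ``idealized'' history $\tilde{h}_t := \sum_{k=0}^{t-1} A^k B x_t$ that would have arisen had the learner played $x_t$ in every one of the rounds $1,\dots,t$, so that $\circfn{f}_t(x_t) = f_t(\tilde{h}_t)$ by \cref{def:circfn}. Since $f_t$ is $L$-Lipschitz continuous by Assumption~\ref{ass:lipschitz}, it suffices to bound $\| h_t - \tilde{h}_t \|_\calH$. Unrolling the recursion $h_s = A h_{s-1} + B x_s$ from $h_0 = 0$ gives $h_t = \sum_{k=0}^{t-1} A^k B x_{t-k}$, and hence
\[
h_t - \tilde{h}_t = \sum_{k=0}^{t-1} A^k B\,(x_{t-k} - x_t).
\]

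Next I would control the FTRL displacements $\| x_{t-k} - x_t \|_\calX$. Applying \cref{thm:oco_ftrl} to the loss sequence $g_s = \circfn{f}_s$, which are convex and (by \cref{thm:lcirc}) $\circfn{L}$-Lipschitz continuous, gives the one-step stability bound $\| x_{s+1} - x_s \|_\calX \le \eta \circfn{L} / \alpha$. Telescoping over $s \in \{t-k,\dots,t-1\}$ (which is $k$ terms) yields $\| x_{t-k} - x_t \|_\calX \le k\,\eta \circfn{L}/\alpha$.

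For the general case I would now combine the triangle inequality, submultiplicativity of the operator norm, $\|B\| \le 1$ (Assumption~\ref{ass:norm_B}), and the displacement bound:
\[
\| h_t - \tilde{h}_t \|_\calH \le \sum_{k=0}^{t-1} \| A^k \|\,\|B\|\,\| x_{t-k} - x_t \|_\calX \le \eta\frac{\circfn{L}}{\alpha}\sum_{k=0}^{t-1} k\,\| A^k \| \le \eta\frac{\circfn{L} H_1}{\alpha},
\]
and multiplying through by $L$ gives the first claim. For linear sequence dynamics with the $\xi$-weighted $p$-norm, the refinement comes from the observation that $A^k B$ maps $\calW$ into the $k$-th copy of $\calW$ inside $\calH$: explicitly $A^k B z = (0,\dots,0,A_{k-1}\cdots A_0 z,0,\dots)$ with the nonzero entry in coordinate $k$ (the empty product for $k=0$ being the identity, so coordinate $0$ of $B z$ is $z$). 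Consequently the $k$-th coordinate of $h_t - \tilde{h}_t$ equals $A_{k-1}\cdots A_0(x_{t-k}-x_t)$, which vanishes for $k=0$ since $x_{t-0}-x_t=0$, and therefore by the definition of the $\xi$-weighted $p$-norm and \cref{lemma:sequence_norm_inequality},
\[
\| h_t - \tilde{h}_t \|_\calH = \Big( \sum_{k=1}^{t-1} \xi_k^p\,\| A_{k-1}\cdots A_0(x_{t-k}-x_t) \|^p \Big)^{1/p} \le \eta\frac{\circfn{L}}{\alpha}\Big( \sum_{k=1}^{t-1} k^p\,\| A^k \|^p \Big)^{1/p} \le \eta\frac{\circfn{L} H_p}{\alpha},
\]
and multiplying by $L$ finishes the proof. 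The only mildly delicate point is the coordinatewise bookkeeping in the linear-sequence-dynamics case — identifying the single-coordinate image of $A^k B$ and invoking \cref{lemma:sequence_norm_inequality} coordinatewise — but this is routine given that lemma, and I do not anticipate a genuine obstacle: the argument is essentially the proof of \cref{thm:lcirc} with $x-\tilde{x}$ replaced by the displacements $x_{t-k}-x_t$ and the extra factor of $k$ tracked through the sum.
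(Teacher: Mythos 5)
Your proposal is correct and follows essentially the same route as the paper's proof: the same decomposition $h_t - \tilde{h}_t = \sum_{k=0}^{t-1} A^k B(x_{t-k}-x_t)$, the same FTRL stability bound from \cref{thm:oco_ftrl} telescoped to $\|x_{t-k}-x_t\| \le k\eta\circfn{L}/\alpha$, and the same use of \cref{lemma:sequence_norm_inequality} in the weighted-$p$-norm case. The only cosmetic difference is that you explicitly drop the vanishing $k=0$ term, which the paper leaves in the sum.
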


\begin{proof}
  We have
  \begin{align}
    \left\vert f_t(h_t) - \circfn{f}_t(x_t) \right\vert
    &= \left\vert f_t(h_t) - f_t\left( \sum_{k=0}^{t-1} A^k B x_t \right) \right\vert & \text{ by~\cref{def:circfn}} \nonumber \\
    &\leq L \left\| h_t - \sum_{k=0}^{t-1} A^k B x_t \right\| & \text{ by Assumption~\ref{ass:lipschitz}} \nonumber \\
    &= L \left\| \sum_{k=0}^{t-1} A^k B x_{t-k} - \sum_{k=0}^{t-1} A^k B x_t \right\| & \text{ by definition of } h_t \nonumber \\
    &= L \underbrace{\left\| \sum_{k=0}^{t-1} A^k B (x_{t-k} - x_t) \right\|}_{(a)}. \label{eq:lemma_dist_actual_ideal_history_1}
  \end{align}
  First consider the general case where $(\calX, \calH, A, B)$ does not
  necessarily follow linear sequence dynamics. We can bound the term (a) as
  \begin{align*}
    \left\| \sum_{k=0}^{t-1} A^k B (x_{t-k} - x_t) \right\|
    &\leq \sum_{k=0}^{t-1} \left\| A^k B \right\| \left\| x_t - x_{t-k} \right\| \\
    &\leq \sum_{k=0}^{t-1} \left\| A^k B \right\| k \eta \frac{\circfn{L}}{\alpha} & \text{by~\cref{thm:oco_ftrl}} \\
    &\leq \sum_{k=0}^{t-1} \left\| A^k \right\| k \eta \frac{\circfn{L}}{\alpha} & \text{by Assumption~\ref{ass:norm_B}} \\
    &\leq \eta \frac{\circfn{L}}{\alpha} H_1.
  \end{align*}
  Plugging this into~\cref{eq:lemma_dist_actual_ideal_history_1} completes the
  proof for the general case. Now consider the case when $(\calX, \calH, A, B)$
  follows linear sequence dynamcis with the $\xi$-weighted $p$-norm. We can
  bound the term (a) as
  \begin{align*}
    \left\| \sum_{k=0}^{t-1} A^k B (x_{t-k} - x_t) \right\|
    &= \left\| (0, A_0 (x_t - x_{t-1}), A_1 A_0 (x_t - x_{t-2}), \dots) \right\| & \text{by~\cref{def:sequence}} \\
    &= \left( \sum_{k=0}^{t-1} \xi_k^p \left\| A_{k-1} \cdots A_0 (x_t - x_{t-k}) \right\|^p \right)^{\frac{1}{p}} & \text{by~\cref{def:sequence}} \\
    &\leq \left( \sum_{k=0}^{t-1} \xi_k^p \left\| A_{k-1} \cdots A_0 \right\|^p \left\|x_t - x_{t-k} \right\|^p \right)^{\frac{1}{p}} \\
    &\leq \left( \sum_{k=0}^{t-1} \left\| A^k \right\|^p \left\|x_t - x_{t-k} \right\|^p \right)^{\frac{1}{p}} & \text{ by~\cref{lemma:sequence_norm_inequality}} \\
    &\leq \eta \frac{\circfn{L}}{\alpha} \left( \sum_{k=0}^{t-1} \left\| A^k \right\|^p k^p \right)^{\frac{1}{p}} & \text{by~\cref{thm:oco_ftrl}} \\
    &\leq \eta \frac{\circfn{L}}{\alpha} H_{p}.
  \end{align*}
  Plugging this into~\cref{eq:lemma_dist_actual_ideal_history_1} completes the
  proof.
\end{proof}

% --------------------------------

Now we restate and prove~\cref{thm:regret_upper_bound}

\thmregretupperbound*

\begin{proof}
  First consider the general case where $(\calX, \calH, A, B)$ does not
  necessarily follow linear sequence dynamics. Let $x^* \in \argmin_{x \in
  \calX} \sum_{t=1}^T \circfn{f}_t(x)$. Note that we can write the regret as
  \begin{align*}
    R_T(\texttt{FTRL})
    &= \sum_{t=1}^T f_t(h_t) - \min_{x \in \calX} \sum_{t=1}^T \circfn{f}_t(x) \\
    &= \underbrace{\sum_{t=1}^T f_t(h_t) - \circfn{f}_t(x_t)}_{(a)} + \underbrace{\sum_{t=1}^T \circfn{f}_t(x_t) - \circfn{f}_t(x^*)}_{(b)}.
  \end{align*}
  We can bound term (a) using~\cref{lemma:dist_actual_ideal_history} and term
  (b) using~\cref{thm:oco_ftrl}. Therefore, we have
  \begin{align*}
    R_T(\texttt{FTRL})
    &= \underbrace{\sum_{t=1}^T f_t(h_t) - \circfn{f}_t(x_t)}_{(a)} + \underbrace{\sum_{t=1}^T \circfn{f}_t(x_t) - \circfn{f}_t(x^*)}_{(b)} \\
    &\leq \eta \frac{T L \circfn{L} H_1}{\alpha} + \frac{D}{\eta} + \eta \frac{T \circfn{L}^2}{\alpha}.
  \end{align*}
  Choosing $\eta = \sqrt{\frac{\alpha D}{T \circfn{L} (L H_1 + \circfn{L})}}$
  yields
  \begin{equation*}
    R_T(\texttt{FTRL}) \leq O \left( \sqrt{\frac{D}{\alpha} T L \circfn{L} H_1} \right),
  \end{equation*}
  where we used the definition of $p$-effective memory capacity~(\cref{def:Hp})
  and the bound on $\circfn{L}$~(\cref{thm:lcirc}) to simplify the above
  expression. This completes the proof for the general case. The proof for when
  $(\calX, \calH, A, B)$ follows linear sequence dynamcis with the
  $\xi$-weighted $p$-norm is the same as above, except we bound the term (a)
  above using ~\cref{lemma:dist_actual_ideal_history} for linear sequence
  dynamics.
\end{proof}

% --------------------------------

Now we restate and prove~\cref{thm:regret_upper_bound_finite}.

\thmregretupperboundfinite*

The OCO with finite memory problem, as defined in the literature, follows linear
sequence dynamics with the $2$-norm. In this subsection we consider a more
general version of the OCO with finite memory problem that follows linear
sequence dynamics with the $p$-norm. We provide an upper bound on the policy
regret for this more general formulation and the proof
of~\cref{thm:regret_upper_bound_finite} follows as a special case when $p = 2$.

\begin{restatable}{theorem}{thmregretupperboundfinitep}\label{thm:regret_upper_bound_finite_p}
  Consider an online convex optimization with finite memory problem with
  constant memory length $m$, $(\calX, \calH = \calX^m, A_{\text{finite},m},
  B_{\text{finite},m})$. Assume that the problem follows linear sequence
  dynamics with the $p$-norm for $p \geq 1$. Let the regularizer $R : \calX \to
  \R$ be $\alpha$-strongly-convex and satisfy $|R(x) - R(\tilde{x})| \leq D$ for
  all $x, \tilde{x} \in \calX$.~\cref{alg:ftrl} with step-size $\eta$ satisfies
  \begin{equation*}
    R_T(\texttt{FTRL}) \leq O \left( \sqrt{\frac{D}{\alpha} T L \circfn{L} m^{\frac{p+1}{p}}} \right) \leq O \left( \sqrt{\frac{D}{\alpha} T L^2 m^{\frac{p+2}{p}}} \right).
  \end{equation*}
\end{restatable}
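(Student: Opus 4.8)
The plan is to obtain this as a direct corollary of \cref{thm:regret_upper_bound} applied to the instance $(\calX, \calX^m, A_{\text{finite},m}, B_{\text{finite},m})$, viewed as linear sequence dynamics with the (unweighted) $p$-norm. The only problem-specific work is to bound the two quantities that enter that theorem for this particular instance: the $p$-effective memory capacity $H_p$ and the Lipschitz constant $\circfn{L}$ of the surrogate losses.

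First I would compute the operator norms $\| A_{\text{finite},m}^k \|$. Unwinding the dynamics, $A_{\text{finite},m}$ shifts the block of $m$ copies of $\calX$ by one coordinate and discards the last one; in the notation of \cref{def:sequence} each $A_i$ is the identity on $\calW$ for $i \le m-2$, and the composition $A_{k-1}\cdots A_0$ is well defined (and the image of $A^k$ nonzero) only while $k \le m-1$. Together with Assumption~\ref{ass:norm_B} (which here holds with equality, since $B_{\text{finite},m}$ is an isometric embedding of one coordinate), this gives $\| A_{\text{finite},m}^k \| = 1$ for $0 \le k \le m-1$ and $\| A_{\text{finite},m}^k \| = 0$ for $k \ge m$, with \cref{lemma:sequence_norm_inequality} used to make the operator-norm comparison under the $p$-norm. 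Substituting into \cref{def:Hp} yields $H_p = \left( \sum_{k=0}^{m-1} k^p \right)^{\frac1p} \le \left( m^{p+1} \right)^{\frac1p} = m^{\frac{p+1}{p}}$, and substituting into the linear-sequence-dynamics bound of \cref{thm:lcirc} yields $\circfn{L} \le L \left( \sum_{k=0}^{m-1} 1 \right)^{\frac1p} = L\, m^{\frac1p}$.

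Second I would plug these bounds into \cref{thm:regret_upper_bound}. Choosing $\eta = \sqrt{\frac{\alpha D}{T \circfn{L}\left( L m^{\frac{p+1}{p}} + \circfn{L} \right)}}$ (which specializes the step size there using $H_p \le m^{\frac{p+1}{p}}$), that theorem gives $R_T(\texttt{FTRL}) \le O\!\left( \sqrt{\frac{D}{\alpha} T L \circfn{L} H_p} \right) \le O\!\left( \sqrt{\frac{D}{\alpha} T L \circfn{L} m^{\frac{p+1}{p}}} \right)$, which is the first claimed bound. For the second bound I would additionally use $\circfn{L} \le L m^{\frac1p}$, so that $L \circfn{L} m^{\frac{p+1}{p}} \le L^2 m^{\frac1p} m^{\frac{p+1}{p}} = L^2 m^{\frac{p+2}{p}}$, giving $R_T(\texttt{FTRL}) \le O\!\left( \sqrt{\frac{D}{\alpha} T L^2 m^{\frac{p+2}{p}}} \right)$. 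Setting $p = 2$ recovers the exponents $\frac32$ and $2$ of \cref{thm:regret_upper_bound_finite}.

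There is no genuine obstacle here: the argument is entirely a specialization of the general upper bound. The one point requiring a little care is verifying that the shift operator $A_{\text{finite},m}$ has operator norm exactly $1$ under the $p$-norm on $\calX^m$ (rather than acquiring an $m$- or dimension-dependent factor) and that it is nilpotent after $m$ steps; once that is pinned down, the remainder is the elementary power-sum estimate $\sum_{k=0}^{m-1} k^p \le m^{p+1}$.
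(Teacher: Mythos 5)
Your proposal is correct and follows essentially the same route as the paper: specialize \cref{thm:regret_upper_bound} to this instance, then bound $H_p \leq O(m^{\frac{p+1}{p}})$ and $\circfn{L} \leq L m^{\frac{1}{p}}$ via \cref{thm:lcirc} using the fact that $\| A_{\text{finite},m}^k \|$ equals $1$ for small $k$ and vanishes once the shift has cleared the $m$ coordinates. Your extra care about the nilpotency index and the exact operator norm of the shift under the $p$-norm is a harmless refinement (the paper states the cutoff as $k \leq m$ rather than $k \leq m-1$, which does not affect the asymptotic bound).
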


\begin{proof}
  Using~\cref{thm:regret_upper_bound} it suffices to bound $\circfn{L}$ and
  $H_p$ for this problem. Note that $\| A_{\text{finite}}^k \| = 1$ if $k \leq
  m$ and $0$ otherwise. Using this we have
  \begin{equation*}
    H_p = \left( \sum_{k=0}^\infty \left( k \| A_{\text{finite}}^k \| \right)^p \right)^{\frac{1}{p}} = \left( \sum_{k=0}^m k^p \right)^{\frac{1}{p}} \leq O \left( m^{\frac{p+1}{p}} \right).
  \end{equation*}
  This proves the first inequality in the statement of the theorem. The second
  inequality follows from the above and~\cref{thm:lcirc}, which states that
  \begin{equation*}
    \circfn{L} \leq L \left( \sum_{k=0}^\infty \| A_{\text{finite}}^k \|^p \right)^{\frac{1}{p}} = L m^{\frac{1}{p}}. \qedhere
  \end{equation*}
\end{proof}

% --------------------------------

Finally, we provide an upper bound on the policy regret for the OCO with
$\rho$-discounted infinite memory problem. For simplicity, we consider the case
when the problem follows linear sequence dynamics with the $2$-norm instead of a
general $p$-norm.

\begin{restatable}{theorem}{thmregretupperboundinfinite}\label{thm:regret_upper_bound_infinite}
  Consider an online convex optimization with $\rho$-discounted infinite memory
  problem $(\calX, \calH, A_{\text{infinite}, \rho}, B_{\text{infinite}})$.
  Suppose that the problem follows linear sequence dynamics with the $2$-norm.
  Let the regularizer $R : \calX \to \R$ be $\alpha$-strongly-convex and satisfy
  $|R(x) - R(\tilde{x})| \leq D$ for all $x, \tilde{x} \in
  \calX$.~\cref{alg:ftrl} with step-size $\eta$ satisfies
  \begin{equation*}
    R_T(\texttt{FTRL}) \leq O \left( \sqrt{\frac{D}{\alpha} T L \circfn{L} (1-\rho^2)^{-\frac{3}{2}}} \right) \leq O \left( \sqrt{\frac{D}{\alpha} T L^2 (1-\rho^2)^{-2} } \right) \leq O \left( \sqrt{\frac{D}{\alpha} T L^2 (1-\rho)^{-2} } \right).
  \end{equation*}
\end{restatable}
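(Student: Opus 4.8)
The plan is to derive this bound as a direct consequence of \cref{thm:regret_upper_bound} specialized to $p = 2$, so the only work is to evaluate the two instance-dependent quantities $\circfn{L}$ and $H_2$ for $(\calX, \calH, A_{\text{infinite},\rho}, B_{\text{infinite}})$. First I would record the key structural fact: since $A_{\text{infinite},\rho}$ sends a sequence $(y_0, y_1, \dots)$ to $(0, \rho y_0, \rho y_1, \dots)$, iterating gives $A_{\text{infinite},\rho}^k (y_0, y_1, \dots) = (\underbrace{0, \dots, 0}_{k}, \rho^k y_0, \rho^k y_1, \dots)$, and because the $\ell^2$ norm on $\calH$ is invariant under the coordinate shift this yields $\| A_{\text{infinite},\rho}^k \| = \rho^k$ for every $k \geq 0$. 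One should also check the easy facts that this instance satisfies \cref{def:sequence} with $A_i = \rho I$, that $\| B_{\text{infinite}} \| \leq 1$, and that $H_1 < \infty$, so that the hypotheses of \cref{thm:regret_upper_bound} are met.

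Next I would plug $\| A_{\text{infinite},\rho}^k \| = \rho^k$ into \cref{def:Hp} and \cref{thm:lcirc}. For the effective memory capacity,
\begin{equation*}
  H_2 = \left( \sum_{k=0}^\infty k^2 \rho^{2k} \right)^{\frac12} = \left( \frac{\rho^2(1+\rho^2)}{(1-\rho^2)^3} \right)^{\frac12} = O\bigl( (1-\rho^2)^{-\frac32} \bigr),
\end{equation*}
using the identity $\sum_{k \geq 0} k^2 x^k = x(1+x)(1-x)^{-3}$ with $x = \rho^2$ together with $\rho^2(1+\rho^2) \leq 2$. For the effective Lipschitz constant, \cref{thm:lcirc} for linear sequence dynamics with the $2$-norm gives
\begin{equation*}
  \circfn{L} \leq L \left( \sum_{k=0}^\infty \rho^{2k} \right)^{\frac12} = L (1-\rho^2)^{-\frac12}.
\end{equation*}

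Finally I would substitute into \cref{thm:regret_upper_bound} (with $H_2$ in place of $H_1$). This immediately gives $R_T(\texttt{FTRL}) \leq O\bigl( \sqrt{(D/\alpha)\, T L \circfn{L} (1-\rho^2)^{-3/2}} \bigr)$, the first bound; using the bound on $\circfn{L}$ yields the second, $O\bigl( \sqrt{(D/\alpha)\, T L^2 (1-\rho^2)^{-2}} \bigr)$; and since $1 - \rho^2 = (1-\rho)(1+\rho) \geq 1-\rho$ for $\rho \in (0,1)$ we have $(1-\rho^2)^{-2} \leq (1-\rho)^{-2}$, which gives the third. I expect no substantive obstacle here: the whole argument is bookkeeping on top of \cref{thm:regret_upper_bound}, and the only mild subtlety is establishing the exact operator norm $\| A_{\text{infinite},\rho}^k \| = \rho^k$ (rather than a cruder upper bound) so that the exponents in the final bound come out sharp.
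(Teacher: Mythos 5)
Your proposal is correct and follows essentially the same route as the paper's proof: invoke \cref{thm:regret_upper_bound} with $H_2$ in place of $H_1$, use $\| A_{\text{infinite},\rho}^k \| = \rho^k$ to get $H_2 \leq O\bigl((1-\rho^2)^{-3/2}\bigr)$ and $\circfn{L} \leq L (1-\rho^2)^{-1/2}$ via \cref{thm:lcirc}, and finish with $1-\rho \leq 1-\rho^2$. Your write-up is if anything slightly more careful than the paper's (explicitly verifying the operator norm and evaluating $\sum_k k^2 x^k$ in closed form), but there is no substantive difference in approach.
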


\begin{proof}
  Using~\cref{thm:regret_upper_bound}, it suffices to bound $\circfn{L}$ and
  $H_p$ for this problem. Recall that $\| A_{\text{infinite},\rho}^k \| =
  \rho^k$. Using this we have
  \begin{equation*}
    H_2 = \left( \sum_{k=0}^\infty \left( k \| A_{\text{finite}}^k \| \right)^2 \right)^{\frac12} = \left( \sum_{k=0}^\infty \left( k \rho^k \right)^2 \right)^{\frac12} \leq (1-\rho^2)^{-\frac{3}{2}}.
  \end{equation*}
  This proves the first inequality in the statement of the theorem. The second
  inequality follows from the above and~\cref{thm:lcirc}, which states that
  \begin{equation*}
    \circfn{L} \leq L \left( \sum_{k=0}^\infty \| A_{\text{infinite},\rho}^k \|^2 \right)^{\frac12} = L (1-\rho^2)^{-\frac{1}{2}}.
  \end{equation*}
  The last inequality follows because $1 - \rho^2 = (1 + \rho) (1 - \rho)$,
  which implies that $1 - \rho \leq 1 - \rho^2 \leq 2 (1 - \rho)$ because $\rho
  \in (0, 1)$.
\end{proof}

% --------------------------------

\subsection{Existing Regret Bound for OCO with Finite Memory}
\label{subsec:appendix_oco_finite_existing}

In this subsection we provide a detailed comparison of our upper bound on the
policy regret for OCO with finite memory with that of~\citet{anavaHM2015online}.
The material in this subsection comes from Appendix A.2 of their arXiv version
or Appendix C.2 of their conference version.

The existing upper bound on regret is
\begin{equation*}
  O \left( \sqrt{ D T \lambda m^\frac32 } \right),
\end{equation*}
where $D = \max_{x, \tilde{x} \in \calX} |R(x) - R(\tilde{x})|$. Although the
parameter $\lambda$ is defined in terms of dual norms of the gradient of
$\circfn{f}_t$, it is essentially the Lipschitz-continuity constant for
$\circfn{f}_t$: for all $x, \tilde{x} \in \calX$,
\begin{equation*}
  \left\vert \circfn{f}_t(x) - \circfn{f}_t(\tilde{x}) \right\vert \leq \sqrt{\lambda \alpha} \| x - \tilde{x} \|,
\end{equation*}
where $\alpha$ is the strong-convexity parameter of the regularizer $R$ (or
$\sigma$ in the notation of~\citet{anavaHM2015online}). Therefore, the existing
regret bound can be rewritten as
\begin{equation*}
  O \left( \circfn{L} \sqrt{ \frac{D}{\alpha} T m^\frac32 } \right).
\end{equation*}

Our upper bound on the policy regret for OCO with finite
memory~\cref{thm:regret_upper_bound_finite} is
\begin{equation*}
  O \left( \sqrt{ \frac{D}{\alpha} L \circfn{L} T m^\frac32 } \right).
\end{equation*}
Since $\circfn{L} \leq \sqrt{m} L$ by~\cref{thm:lcirc}, this leads to an
improvement by a factor of $m^\frac14$.

% --------------------------------
\section{Regret Analysis: Lower Bounds}
\label{sec:appendix_regret_lower_bounds}

% --------------------------------

We first restate~\cref{thm:regret_lower_bound,thm:regret_lower_bound_finite}.

\thmregretlowerbound*

\thmregretlowerboundfinite*

\cref{thm:regret_lower_bound} follows from~\cref{thm:regret_lower_bound_finite}.
However, the lower bound is true for a much broader class of problems as we show
in this section. We first provide a lower bound for a more general formulation
of the OCO with finite memory problem~(\cref{thm:regret_lower_bound_finite_p}).
The proof of~\cref{thm:regret_lower_bound_finite} follows as a special case when
$p = 2$. Then, we provide a lower bound for the OCO with $\rho$-discounted
infinite memory problem~(\cref{thm:regret_lower_bound_infinite}).

% --------------------------------

The OCO with finite memory problem, as defined in the literature, follows linear
sequence dynamics with the $2$-norm. In this section we consider a more general
version of the OCO with finite memory problem that follows linear sequence
dynamics with the $p$-norm. We provide a lower bound on the policy regret for
this more general formulation and the proof
of~\cref{thm:regret_lower_bound_finite} follows as a special case when $p = 2$.

\begin{restatable}{theorem}{thmregretlowerboundfinitep}\label{thm:regret_lower_bound_finite_p}
  For all $p \geq 1$, there exists an instance of the online convex optimization
  with finite memory problem with constant memory length $m$, $(\calX, \calH =
  \calX^m, A_{\text{finite},m}, B_{\text{finite},m})$, that follows linear
  sequence dynamics with the $p$-norm, and there exist $L$-Lipschitz continuous
  loss functions $\{ f_t : \calH \to \R \}_{t=1}^{T}$ such that the regret of
  any algorithm $\calA$ satisfies
  \begin{equation*}
    R_T(\calA) \geq \Omega \left( \sqrt{T L^2 m^{\frac{p+2}{p}}} \right).
  \end{equation*}
\end{restatable}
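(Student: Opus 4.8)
The plan is to construct an explicit adversarial instance following the three-step template sketched in the main paper after Theorem~\ref{thm:regret_lower_bound_finite}, but now keeping track of the $p$-dependence in the norms. First I would set up the instance: take $\calX = [-1,1] \subseteq \R$ (or a scaled interval) with the absolute value norm, let $\calH = \calX^m$ carry the $\ell^p$-norm so that the problem follows linear sequence dynamics with the $p$-norm, and use $A_{\text{finite},m}, B_{\text{finite},m}$ as defined. Partition $[T]$ into $N = T/m$ blocks of length $m$ (assume divisibility), draw i.i.d.\ Rademacher signs $\eps_n$ for $n \in [N]$, and for $t > m$ define
\begin{equation*}
  f_t(h_t) = \eps_{\ceil{t/m}} \cdot c \cdot \left( x_{t-m+1} + \dots + x_{m\floor{t/m}+1} \right),
\end{equation*}
where $c$ is a scaling constant to be chosen so that $f_t$ is $L$-Lipschitz in the $\ell^p$-norm on $\calH$; since $f_t$ reads off a sum of $\Theta(m)$ coordinates of $h_t$, its $\ell^p$-dual-norm gradient has magnitude $\Theta(c \, m^{1 - 1/p})$, forcing $c = \Theta(L m^{1/p - 1})$. (For the first block, $t \le m$, set $f_t \equiv 0$.)

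The second step is the regret lower bound for the algorithm. By Yao-type reasoning it suffices to lower bound the expected regret over the random signs for an arbitrary deterministic algorithm. The key structural point is that within a block $n$, the loss $f_t$ depends only on decisions from round $m(n-1)+1$ onward up through $t$, \emph{and in particular all these losses in block $n$ share the same sign $\eps_n$}; moreover the crucial summand $x_{m(n-1)+1}$ (the first decision of block $n$) appears in the loss of every round $t$ in block $n$. Since the algorithm must commit to $x_{m(n-1)+1}$ before observing $\eps_n$, its conditional expectation (over $\eps_n$) of the sum of block-$n$ losses is zero, so $\E[\sum_t f_t(h_t)] = 0$ (up to the negligible first block). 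The third step bounds the comparator: the best fixed decision can be taken to be $x = -\sgn(\sum_n \eps_n \cdot (\#\text{occurrences}))$ scaled to the boundary; a standard Khintchine / anti-concentration argument shows $\min_{x} \sum_t \circfn{f}_t(x) \le -\Omega(c \cdot m \cdot \sqrt{N} \cdot m) $... more carefully, $\circfn{f}_t(x) = \eps_{\ceil{t/m}} c \cdot (\text{number of terms}) \cdot x$, the number of terms in round $t$ being roughly $t - m\floor{t/m}+1 \in [1,m]$, so summing over a block gives $\Theta(m^2)$ and over all blocks $\sum_t \circfn{f}_t(x) = c \, \Theta(m^2) \sum_n \eps_n \cdot x$ up to constants; choosing $x = \pm D_\calX$ optimally and using $\E|\sum_n \eps_n| = \Theta(\sqrt N)$ yields a comparator loss of $-\Omega(c \, m^2 \sqrt{N})$. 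Plugging $c = \Theta(L m^{1/p-1})$ and $N = T/m$ gives $\Omega(L m^{1/p - 1} m^2 \sqrt{T/m}) = \Omega(L m^{1/p + 1/2}\sqrt{T}) = \Omega(\sqrt{T L^2 m^{(p+2)/p}})$, as desired. Specializing $p=2$ recovers Theorem~\ref{thm:regret_lower_bound_finite}, and the construction also furnishes Theorem~\ref{thm:regret_lower_bound} by the discussion relating $m^{(p+2)/p}$ to $\circfn L \cdot H_p$ for this instance.

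I expect the main obstacle to be the bookkeeping in step three: getting the constant in front of the comparator's loss right requires carefully counting, for each round $t$ in a block, exactly how many of the decisions $x_{t-m+1}, \dots, x_{m\floor{t/m}+1}$ lie in the interval being summed, summing this over the block to get the $\Theta(m^2)$ factor, and then verifying that the cross-block terms are genuinely controlled by i.i.d.\ signs so that Khintchine's inequality applies to give $\sqrt{N}$ and not something smaller. A secondary subtlety is confirming that the Lipschitz constant is computed with respect to the correct ($\ell^p$) norm on $\calH$ — this is what produces the $m^{1/p-1}$ scaling and hence the $m^{(p+2)/p}$ in the final bound rather than a $p$-independent power of $m$ — and checking that the $t \le m$ boundary block contributes only an $O(\sqrt{TL^2})$ lower-order term that does not affect the asymptotics.
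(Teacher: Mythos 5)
Your proposal is correct and follows essentially the same route as the paper's proof: the same block construction with per-block Rademacher signs, the same scaling $c = \Theta(L m^{1/p-1})$ obtained from the $\ell^q$-dual norm of the gradient (matching the paper's $m^{(1-p)/p}$ factor verified via H\"older), the same zero-expectation argument for the algorithm, and the same $\Theta(m^2)\sqrt{N}$ comparator bound via Khintchine's inequality. The bookkeeping concerns you flag in step three are exactly the ones the paper resolves, with the per-block term count summing to $(m^2+m)/2$.
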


\begin{proof}
  Let $\calX = [-1, 1]$ and consider an OCO with finite memory problem with
  constant memory length $m$, $(\calX, \calH = \calX^m, A_{\text{finite},m},
  B_{\text{finite},m})$, that follows linear sequence dynamics with the
  $p$-norm. For simplicity, assume that $T$ is a multiple of $m$ (otherwise, the
  same proof works but with slightly more tedious bookkeeping) and that $L = 1$
  (otherwise, multiply the functions $f_t$ defined below by $L$).

  Divide the $T$ rounds into $N = \frac{T}{m}$ blocks of $m$ rounds each. Sample
  $N$ independent Rademacher random variables $\{ \eps_1, \dots, \eps_N \}$,
  where each $\eps_i$ is equal to $\pm 1$ with probability $\frac12$. Recall
  that $h_t = (x_t, \dots, x_{t-m+1})$. Define the loss functions $\{ f_t
  \}_{t=1}^{T}$ as follows. (See~\cref{fig:lower_bound_oco_finite_appendix} for
  an illustration.) If $t \leq m$, let $f_t = 0$. Otherwise, let
  \begin{figure}[tb]
    \centering
    \begin{tikzpicture}
      
      % draw horizontal line   
      \draw[thick, -Triangle] (0,0) -- (13cm,0) node[font=\scriptsize,below left=3pt and -8pt]{Time};
      
      % draw vertical lines
      \foreach \x in {0,1,...,11}
      \draw (\x cm, 3pt) -- (\x cm, -3pt);

      \foreach \x in {0,3,...,11}
      \draw[very thick] (\x cm, 3pt) -- (\x cm, -3pt);

      % write decisions and Rademacher RVs
      \foreach \x/\descr in {0/x_1,1/x_2,2/x_3,3/x_4,4/x_5,5/x_6,6/x_7,7/x_8,8/x_9,9/x_{10},10/x_{11},11/x_{12}}
      \node[font=\scriptsize, text height=1.75ex, text depth=.5ex] at (\x+.5, -.3) {$\descr$};

      \foreach \x/\descr in {0/\eps_1,3/\eps_2,6/\eps_3,9/\eps_4}
      \node[font=\scriptsize, text height=1.75ex, text depth=.5ex] at (\x+.5, .3) {$\descr$};
      
      % loss functions
      \foreach \x in {1,2,3}
      \filldraw[draw=black,fill=SeaGreen] (\x,-.7) rectangle (\x+1, -.9);
      \node[font=\scriptsize, text height=1.75ex, text depth=.5ex] at (6, -.8) {$f_4(h_4) = \eps_2 \  3^{-\frac12} \  (x_2 + x_3 + x_4)$};

      \foreach \x in {2,3}
      \filldraw[draw=black,fill=SeaGreen] (\x,-1.1) rectangle (\x+1, -1.3);
      \foreach \x in {4}
      \filldraw[draw=black,fill=Orange] (\x,-1.1) rectangle (\x+1, -1.3);
      \node[font=\scriptsize, text height=1.75ex, text depth=.5ex] at (6.7, -1.2) {$f_5(h_5) = \eps_2 \  3^{-\frac12} \  (x_3 + x_4)$};

      \foreach \x in {3}
      \filldraw[draw=black,fill=SeaGreen] (\x,-1.5) rectangle (\x+1, -1.7);
      \foreach \x in {4,5}
      \filldraw[draw=black,fill=Orange] (\x,-1.5) rectangle (\x+1, -1.7);
      \node[font=\scriptsize, text height=1.75ex, text depth=.5ex] at (7.4, -1.6) {$f_6(h_6) = \eps_2 \  3^{-\frac12} \  (x_4)$};

      % descriptions
      \node[font=\scriptsize, text height=1.75ex, text depth=.5ex] at (1, 1) {$T = 12, m = 3, L = 1$.};
      \node[font=\scriptsize, text height=1.75ex, text depth=.5ex] at (5.5, 1) {Resample every $m$ rounds.};
      \draw (3.5, 1) -- (4, 1);
      \draw (3.5, .6) -- (3.5, 1);
      \draw[-Triangle] (3.5, .6) -- (3.5, 0.5);
      
    \end{tikzpicture}
    \caption{An illustration of the loss functions $f_t$ for the OCO with finite memory lower bound. Suppose $T = 12, m = 3, L = 1$, and $p = 2$. Time is divided into blocks of size $m = 3$. Consider round $t = 5$. The history is $h_5 = (x_3, x_4, x_5)$. The loss function $f_5(h_5)$ is a product of three terms: a random sign $\eps_2$ sampled for the block that round $5$ belongs to, namely, block $2$; a scaling factor of $m^{-\frac12}$; a sum over the decisions in the history excluding those that were chosen after observing $\eps_2$, i.e., a sum over $x_3$ and $x_4$, excluding $x_5$.}
    \label{fig:lower_bound_oco_finite_appendix}
  \end{figure}
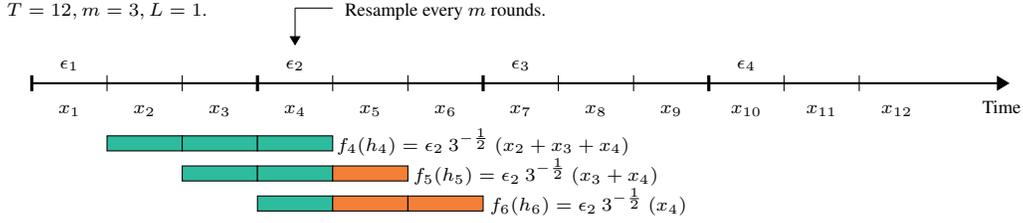
  % \begin{figure}[tb]
  %   \centering
  %   \includegraphics[scale=0.50]{figs/lower_bound_oco_finite.pdf}
  %   \caption{An illustration of the loss functions $f_t$ for the OCO with finite memory lower bound. Suppose $T = 12, m = 3, L = 1$, and $p = 2$. Time is divided into blocks of size $m = 3$. Consider round $t = 5$. The history is $h_5 = (x_3, x_4, x_5)$. The loss function $f_5(h_5)$ is a product of three terms: a random sign $\eps_2$ sampled for the block that round $5$ belongs to, namely, block $2$; a scaling factor of $m^{-\frac12}$; a sum over the decisions in the history excluding those that were chosen after observing $\eps_2$, i.e., a sum over $x_3$ and $x_4$, excluding $x_5$.}
  %   \label{fig:lower_bound_oco_finite_appendix}
  % \end{figure}
  \begin{align*}
    f_t(h_t)
    &= \eps_{\ceil{\frac{t}{m}}} m^{\frac{1-p}{p}} \sum_{k=0}^{m - 1 - (t - m \floor{\frac{t}{m}} - 1)} x_{m \floor{\frac{t}{m}} + 1 - k} \\
    &= \eps_{\ceil{\frac{t}{m}}} m^{\frac{1-p}{p}} \left( x_{t-m+1} + \dots + x_{m \floor{\frac{t}{m}} + 1} \right).
  \end{align*}
  In words, the loss in the first $m$ rounds is equal to $0$. Thereafter, in
  round $t$ the loss is equal to a random sign $\eps_{\ceil{\frac{t}{m}}}$,
  which is \emph{fixed for that block}, times a scaling factor, which is chosen
  according to the $p$-norm to ensure that the Lipschitz constant $L$ is at most
  $1$, times a sum of a \emph{subset} of past decisions in the history $h_t =
  (x_t, \dots, x_{t-m+1})$. This subset consists of all past decisions until and
  including the first decision of the current block, which is the decision in
  round $m \floor{\frac{t}{m}} + 1$.

  The functions $f_t$ are linear, so they are convex. In order to show that they
  satisfy Assumptions~\ref{ass:convexity} and~\ref{ass:lipschitz}, it remains to
  show that they are $1$-Lipschitz continuous. Let $h = (x^{(1)}, \dots,
  x^{(m)})$ and $\tilde{h} = (\tilde{x}^{(1)}, \dots, \tilde{x}^{(m)})$ be
  arbitrary elements of $\calH = \calX^m$. We have
  \begin{align*}
    & \left\vert f_t(h) - f_t(\tilde{h}) \right\vert \\
    &\leq \left\vert \eps_{\ceil{\frac{t}{m}}} m^{\frac{1-p}{p}} \left( (x^{(1)} - \tilde{x}^{(1)}) + \dots + (x^{(m)} - \tilde{x}^{(m)}) \right) \right\vert \\
    &\leq m^{\frac{1-p}{p}} \left\vert (x^{(1)} - \tilde{x}^{(1)}) + \dots + (x^{(m)} - \tilde{x}^{(m)}) \right\vert & \text{ because } \eps_{\ceil{\frac{t}{m}}} \in \{ -1, +1 \} \\
    &\leq m^{\frac{1-p}{p}} m^{1 - \frac{1}{p}} \left( \sum_{k=1}^m \left\vert x^{(k)} - \tilde{x}^{(k)} \right\vert^p \right)^{\frac{1}{p}} & \text{ by H\"older's inequality} \\
    &= \| h - \tilde{h} \|_\calH,
  \end{align*}
  where the last equality follows because of our assumption that the problem
  that follows linear sequence dynamics with the $p$-norm.

  First we will show that the total expected loss of any algorithm is $0$, where
  the expectation is with respect to the randomness in the choice of $\{ \eps_1,
  \dots, \eps_N \}$. The total loss in the first block is $0$ because $f_t = 0$
  for $t \in [m]$. For each subsequent block $n \in \{ 2, \dots, N \}$, the
  total loss in block $n$ depends on the algorithm's choices made \emph{before}
  observing $\eps_{n}$, namely, $\{x_{(n-2) m + 2}, \dots, x_{(n-1) m + 1} \}$.
  Since $\eps_{n}$ is equal to $\pm 1$ with probability $\frac12$, the expected
  loss of any algorithm in a block is equal to $0$ and the total expected loss
  is also equal to $0$.

  Now we will show that the expected loss of the benchmark is at most
  \begin{equation*}
    -O \left( \sqrt{T m^{\frac{p+2}{p}}} \right),
  \end{equation*}
  where the expectation is with respect to the randomness in the choice of $\{
  \eps_1, \dots, \eps_N \}$. We have
  \begin{align*}
    \E \left[ \min_{x \in \calX} \sum_{t=1}^T \circfn{f}_t(x) \right]
    &= \E \left[ \min_{x \in \calX} \sum_{n=2}^N \sum_{t=(n-1) m + 1}^{nm} \circfn{f}_t(x) \right] \\
    &= \E \left[ \min_{x \in \calX} \sum_{n=2}^N \sum_{t=(n-1) m + 1}^{nm} \eps_n m^{\frac{1-p}{p}} \times x \times (m - (t - (n-1) m - 1)) \right].
    \intertext{The first equality follows from first summing over blocks and then summing over the rounds in that block. The second equality follows from the definitions of $f_t$ above and of $\circfn{f}_t$~(\cref{def:circfn}). By the defintion of $\circfn{f}_t$, the history $h_t$ consists of $m$ copies of $x$ for $t \geq m$.. By the definition of $f_t$, which sums over all past decisions until the first round of the current block, we have that within a block the sum first extends over $m$ copies of $x$ (in the first round of the block), then $m-1$ copies of $x$ (in the second round of the block), and so on until the last round of the block. So, we have}
    \E \left[ \min_{x \in \calX} \sum_{t=1}^T \circfn{f}_t(x) \right]
    &= \E \left[ \min_{x \in \calX} \sum_{n=2}^N \sum_{t=(n-1) m + 1}^{nm} \eps_n m^{\frac{1-p}{p}} \times x \times (m - (t - (n-1) m - 1)) \right] \\
    &= \E \left[ \min_{x \in \calX} \sum_{n=2}^N \sum_{k=0}^{m-1} \eps_n m^{\frac{1-p}{p}} \times x \times (m - k) \right] \\
    &= m^{\frac{1-p}{p}} \frac{m^2 + m}{2} \  \E \left[ \min_{x \in \calX} \sum_{n=2}^N \eps_n x \right] \\
    &= m^{\frac{1-p}{p}} \frac{m^2 + m}{2} \  \E \left[ \min_{x \in \{-1, 1\}} \sum_{n=2}^N \eps_n x \right] \\
    &= m^{\frac{1-p}{p}} \frac{m^2 + m}{2} \  \E \left[ \frac12 \sum_{n=2}^N \eps_n (-1 + 1) - \frac12 \left\vert \sum_{n=2}^N \eps_n (-1 - 1) \right\vert \right], \\
    \intertext{where the second-last equality follows because the minima of a linear function over an interval is at one of the endpoints and the last equality follows because $\min\{x, y\} = \frac12 (x + y) - \frac12 \left\vert x - y \right\vert$. Since $\eps_n$ are Rademacher random variables equal to $\pm 1$ with probability $\frac12$, we can simplify the above as}
    \E \left[ \min_{x \in \calX} \sum_{t=1}^T \circfn{f}_t(x) \right]
    &= m^{\frac{1-p}{p}} \frac{m^2 + m}{2} \  \E \left[ - \frac12 \left\vert \sum_{n=2}^N -2 \eps_n \right\vert \right] \\
    &= m^{\frac{1-p}{p}} \frac{m^2 + m}{2} \  \E \left[ - \left\vert \sum_{n=2}^N \eps_n \right\vert \right] \\
    &= -  m^{\frac{1-p}{p}} \frac{m^2 + m}{2} \E \left[ \left\vert \sum_{n=2}^N \eps_n \right\vert \right] \\
    &\leq -  m^{\frac{1-p}{p}} \frac{m^2 + m}{2} \sqrt{N},
    \intertext{where the last inequality follows from Khintchine's inequality. Using the definition $N = \frac{T}{m}$, we have}
    \E \left[ \min_{x \in \calX} \sum_{t=1}^T \circfn{f}_t(x) \right]
    &\leq -  m^{\frac{1-p}{p}} \frac{m^2 + m}{2} \sqrt{\frac{T}{m}} \\
    &= - \frac12 \sqrt{T} \left(m^{\frac{3}{2} + \frac{1-p}{p}} + m^{\frac12 + \frac{1-p}{p}} \right) \\
    &\leq - O\left( \sqrt{T} m^{\frac{3}{2} + \frac{1-p}{p}} \right) \\
    &= - O\left( \sqrt{T} m^{\frac{p+2}{2p}} \right) \\
    &= - O\left( \sqrt{T m^{\frac{p+2}{p}}} \right).
  \end{align*}

  Therefore, we have
  \begin{equation*}
    \E_{\eps_1, \dots, \eps_N} \left[ R_T(\texttt{FTRL}) \right]
    = \E \left[ \sum_{t=1}^T f_t(h_t) \right] - \E \left[ \min_{x \in \calX} \sum_{t=1}^T \circfn{f}_t(x) \right]
    \geq \Omega \left( \sqrt{T m^{\frac{p+2}{p}}} \right).
  \end{equation*}
  This completes the proof.
\end{proof}

% --------------------------------

Now we provide a lower bound for the OCO with $\rho$-discounted infinite memory
problem. For simplicity, we consider the case when the problem follows linear
sequence dynamics with the $2$-norm instead of a general $p$-norm.

\begin{restatable}{theorem}{thmregretlowerboundinfinite}\label{thm:regret_lower_bound_infinite}
  Let $\rho \in [\frac12, 1)$. There exists an instance of the online convex
  optimization with $\rho$-discounted infinite memory problem, $(\calX, \calH,
  A_{\text{infinite}, \rho}, B_{\text{infinite}})$, that follows linear sequence
  dynamics with the $2$-norm and there exist $L$-Lipschitz continuous loss
  functions $\{ f_t : \calH \to \R \}_{t=1}^T$ such that the regret of any
  algorithm $\calA$ satisfies
  \begin{equation*}
    R_T(\calA) \geq \Omega \left( \sqrt{T L^2 (1-\rho)^{-2}} \right).
  \end{equation*}
\end{restatable}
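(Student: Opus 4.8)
The plan is to transplant the block construction from the proof of Theorem~\ref{thm:regret_lower_bound_finite_p}, using that for $\rho \in [\frac12, 1)$ the $\rho$-discounted dynamics behave like a finite memory of effective length $m := \ceil{(1-\rho)^{-1}}$. The first observation I would record is that $\rho^m = \Theta(1)$ in this regime (from $\rho^m \le \rho^{1/(1-\rho)} \le e^{-1}$ using $\ln(1-u)\le -u$, and $\rho^m > \rho\cdot\rho^{1/(1-\rho)} \ge \tfrac12 e^{-2}$ using $\rho\ge\frac12$), and hence $\sum_{i=0}^{m-1}\rho^{2i} = \frac{1-\rho^{2m}}{1-\rho^2} = \Theta((1-\rho)^{-1})$. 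Then: take $\calX = [-1,1]$, assume for simplicity $T$ is a multiple of $m$, split $[T]$ into $N = T/m$ blocks of $m$ rounds, sample i.i.d.\ Rademacher signs $\eps_1,\dots,\eps_N$, set $f_t \equiv 0$ on block~$1$, and for $t$ in block $n \ge 2$ at within-block position $j$ (so $t = (n-1)m+j$) let $f_t$ be the linear functional on $\calH$ given on $y = (y_0,y_1,\dots)$ by
\[
  f_t(y) \;=\; \eps_n\, c \sum_{k=j-1}^{m-1} \rho^{-k}\, y_k ,
\]
where $c = \Theta(L\sqrt{1-\rho})$ is the largest constant keeping every $f_t$ $L$-Lipschitz.

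Several checks are then needed. Since $\|\cdot\|_\calH$ is the (unweighted) $2$-norm, the operator norm of $f_t$ is $c\,(\sum_{k=j-1}^{m-1}\rho^{-2k})^{1/2}$, which is largest at $j=1$ and equals $c\,\rho^{-(m-1)}\big(\tfrac{1-\rho^{2m}}{1-\rho^2}\big)^{1/2} = \Theta\big(c(1-\rho)^{-1/2}\big)$ by the estimates above, so the stated $c$ makes all $f_t$ exactly $L$-Lipschitz. On a genuine history $h_t$, whose $k$-th coordinate is $\rho^k x_{t-k}$ by Definition~\ref{def:sequence}, the $\rho^{-k}$ factors cancel, giving $f_t(h_t) = \eps_n\, c\,(x_{(n-1)m+1} + x_{(n-1)m} + \dots + x_{(n-2)m+j+1})$: a signed sum over the $m-j+1$ decisions indexed $\le (n-1)m+1$, each chosen at or before the first round of block~$n$, hence before $\eps_n$ is ever revealed (it first appears in $f_{(n-1)m+1}$). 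The $f_t$ are linear, hence convex, and $L$-Lipschitz, so Assumptions~\ref{ass:convexity} and~\ref{ass:lipschitz} hold, and the instance follows linear sequence dynamics with the $2$-norm by construction.

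The remainder is the usual two-part argument. Because every decision appearing in a block-$n$ loss is chosen before $\eps_n$ is seen, conditioning on everything but $\eps_n$ and using $\E[\eps_n]=0$ shows the expected total loss of any algorithm is $0$. For the comparator, playing a fixed $x$ throughout gives $\circfn{f}_t(x) = \eps_n\, c\,(m-j+1)\,x$ by Definition~\ref{def:circfn}, hence $\sum_{t=1}^T \circfn{f}_t(x) = c\,x\,\tfrac{m(m+1)}{2}\sum_{n=2}^N \eps_n$, and minimizing over $x \in [-1,1]$ and taking expectations,
\[
  \E\Big[\min_{x\in\calX}\sum_{t=1}^T \circfn{f}_t(x)\Big] \;=\; -\,c\,\tfrac{m(m+1)}{2}\,\E\Big|\sum_{n=2}^N \eps_n\Big| \;\le\; -\,\Omega\big(c\,m^2\sqrt{N}\big)
\]
by Khintchine's inequality. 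Substituting $c = \Theta(L\sqrt{1-\rho})$, $m = \Theta((1-\rho)^{-1})$ and $N = T/m = \Theta(T(1-\rho))$ gives $c\,m^2\sqrt{N} = \Theta\big(L\sqrt{T}\,(1-\rho)^{-1}\big)$, so $\E_\eps[R_T(\calA)] \ge \Omega\big(\sqrt{TL^2(1-\rho)^{-2}}\big)$, and fixing a favourable realization of the signs finishes the proof.

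The single genuine departure from the finite-memory proof --- and the step I expect to be the main obstacle --- is calibrating $c$: here $f_t$ must be Lipschitz with respect to the $2$-norm on the \emph{infinite} history space, whose coordinates already carry the dynamical weights $\rho^k$. This is precisely what forces the truncation at coordinate $m-1$ (reading arbitrarily far back with unit decision weights makes $\sum_k \rho^{-2k}$ diverge) and what replaces the $m^{-1/2}$ scaling of Theorem~\ref{thm:regret_lower_bound_finite_p} by $c = \Theta(L\sqrt{1-\rho})$. Once one notes that the coordinates $f_t$ reads always form a suffix of $\{0,\dots,m-1\}$ --- so $j=1$ is the worst case --- the geometric sums collapse and everything reduces to $\rho^m = \Theta(1)$; the block-boundary off-by-ones and the case $T \not\equiv 0 \pmod{m}$ are handled exactly as in Theorem~\ref{thm:regret_lower_bound_finite_p}.
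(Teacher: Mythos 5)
Your proposal is correct and follows essentially the same route as the paper's proof: the same block structure with per-block Rademacher signs, the same argument that every decision entering a block-$n$ loss is fixed before $\eps_n$ is revealed (so the algorithm's expected loss vanishes), and the same Khintchine bound on the comparator. The only difference is cosmetic — the paper keeps the natural $\rho^\ell$ weights on the decisions (unit coefficients on history coordinates over a sliding window of $m$ lags, scaling $m^{-1/2}$), whereas you put unit weights on the decisions (coefficients $\rho^{-k}$ on the history coordinates, truncated at lag $m-1$, scaling $c=\Theta(L\sqrt{1-\rho})$); both calibrations rest on the same $\rho^{m}=\Theta(1)$ estimate and yield the identical $\Omega\bigl(L\sqrt{T}\,(1-\rho)^{-1}\bigr)$ bound.
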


The proof is very similar to that of~\cref{thm:regret_lower_bound_finite_p} with
slight adjustments to account for a $\rho$-discounted infinite memory instead of
a finite memory of constant size $m$.
% \sd{would a generalization of this proof technique be to consider sequences $(\rho_0,\rho_1,...)$ where the finite memory case ends up being $(1,1,...,1,0,0,...)$ with $m$ ones and the decaying case is $(\rho, \rho^2,...)$?}
% \rk{Probably, but there are some places where we use the formula for these geometric sums and some inequalities for bounding $\rho^m$ at the end. I don't know how clean it would be to consider the general case.}

\begin{proof}
  Let $\calX = [-1, 1]$ and consider an OCO with infinite memory problem with
  discount factor $\rho$, $(\calX, \calH, A_{\text{infinite}, \rho},
  B_{\text{infinite}})$, that follows linear sequence dynamics with the
  $2$-norm. For simplicity, assume that $T$ is a multiple of $(1 - \rho)^{-1}$
  (otherwise, the same proof works but with slightly more tedious bookkeeping)
  and that $L = 1$ (otherwise, multiply the functions $f_t$ defined below by
  $L$).

  Define $m = (1 - \rho)^{-1}$. Divide the $T$ rounds into $N = \frac{T}{m}$
  blocks of $m$ rounds each. Sample $N$ independent Rademacher random variables
  $\{ \eps_1, \dots, \eps_N \}$, where each $\eps_i$ is equal to $\pm 1$ with
  probability $\frac12$. Recall that $h_t = (x_t, \rho x_{t-1}, \dots,
  \rho^{t-1} x_1, 0, \dots)$. Define the loss functions $\{ f_t \}_{t=1}^{T}$ as
  follows. If $t \leq m$, let $f_t = 0$. Otherwise, let
  \begin{equation*}
    f_t(h_t)
    = \eps_{\ceil{\frac{t}{m}}} m^{-\frac12} \sum_{k=0}^{m - 1} \rho^{k + t - m \floor{\frac{t}{m}}-1} x_{m \floor{\frac{t}{m}} + 1 - k}.
  \end{equation*}

  The functions $f_t$ are linear, so they are convex. In order to show that they
  satisfy Assumptions~\ref{ass:convexity} and~\ref{ass:lipschitz}, it remains to
  show that they are $1$-Lipschitz continuous. Let $h = (x^{(1)}, \rho x^{(2)},
  \dots)$ and $\tilde{h} = (\tilde{x}^{(1)}, \rho \tilde{x}^{(2)}, \dots)$ be
  arbitrary elements of $\calH$. We have
  \begin{align*}
    & \left\vert f_t(h) - f_t(\tilde{h}) \right\vert \\
    &\leq \left\vert \eps_{\ceil{\frac{t}{m}}} m^{-\frac12} \sum_{k=1}^m \rho^{k-1} \left( x^{(k)} - \tilde{x}^{(k)} \right) \right\vert \\
    &\leq m^{-\frac12} \left\vert \sum_{k=1}^m \rho^{k-1} \left( x^{(k)} - \tilde{x}^{(k)} \right) \right\vert & \text{ because } \eps_{\ceil{\frac{t}{m}}} \in \{ -1, +1 \} \\
    &\leq m^{-\frac12} m^{\frac12} \left( \sum_{k=1}^m \rho^{2(k-1)} \left\vert x^{(k)} - \tilde{x}^{(k)} \right\vert^2 \right)^\frac12 & \text{ by H\"older's inequality} \\
    &\leq \| h - \tilde{h} \|_\calH,
  \end{align*}
  where the last equality follows because the follows linear sequence dynamics
  with the $2$-norm.

  First we will show that the total expected loss of any algorithm is $0$, where
  the expectation is with respect to the randomness in the choice of $\{ \eps_1,
  \dots, \eps_N \}$. The total loss in the first block is $0$ because $f_t = 0$
  for $t \in [m]$. For each subsequent block $n \in \{ 2, \dots, N \}$, the
  total loss in block $n$ depends on the algorithm's choices made \emph{before}
  observing $\eps_{n}$, namely, $\{x_{(n-2) m + 2}, \dots, x_{(n-1) m + 1} \}$.
  Since $\eps_{n}$ is equal to $\pm 1$ with probability $\frac12$, the expected
  loss of any algorithm in a block is equal to $0$ and the total expected loss
  is also equal to $0$.

  Now we will show that the expected loss of the benchmark is at most
  \begin{equation*}
    -O \left( \sqrt{T (1-\rho)^{-2}} \right),
  \end{equation*}
  where the expectation is with respect to the randomness in the choice of $\{
  \eps_1, \dots, \eps_N \}$. We have
  \begin{align*}
    \E \left[ \min_{x \in \calX} \sum_{t=1}^T \circfn{f}_t(x) \right]
    &= \E \left[ \min_{x \in \calX} \sum_{n=2}^N \sum_{t=(n-1) m + 1}^{nm} \circfn{f}_t(x) \right] \\
    &= \E \left[ \min_{x \in \calX} \sum_{n=2}^N \sum_{t=(n-1) m + 1}^{nm}  \eps_n m^{-\frac12} \sum_{k=0}^{m - 1} \rho^{k + t - (n-1)m -1} x \right] \\
    &= m^{-\frac12} \E \left[ \min_{x \in \calX} \sum_{n=2}^N \eps_n x \sum_{t=(n-1) m + 1}^{nm} \rho^{t - (n-1)m -1} \sum_{k=0}^{m - 1} \rho^k \right] \\
    &= m^{-\frac12} \frac{1 - \rho^m}{1 - \rho} \E \left[ \min_{x \in \calX} \sum_{n=2}^N \eps_n x \sum_{t=(n-1) m + 1}^{nm} \rho^{t - (n-1)m -1} \right] \\
    &= m^{-\frac12} \left(\frac{1 - \rho^m}{1 - \rho}\right)^2 \underbrace{\E \left[ \min_{x \in \calX} \sum_{n=2}^N \eps_n x \right]}_{(a)}.
    \intertext{The term (a) above can be bounded above by $-\sqrt{N}$ as in the proof of~\cref{thm:regret_lower_bound_finite_p} using Khintchine's inequality. Therefore, using that $N = \frac{T}{m}$ and $m = (1-\rho)^{-1}$ we have}
    \E \left[ \min_{x \in \calX} \sum_{t=1}^T \circfn{f}_t(x) \right]
    &\leq -m^{-\frac12} \left(\frac{1 - \rho^m}{1 - \rho}\right)^2 \sqrt{N} \\
    &\leq -(1-\rho)^{\frac12} \left(\frac{1 - \rho^m}{1 - \rho}\right)^2 \sqrt{T (1-\rho)} \\
    &= -\sqrt{T} \frac{(1 - \rho^m)^2}{1-\rho} \\
    &= -\sqrt{T (1-\rho)^{-2}} (1 - \rho^m)^2 \\
    &\leq - O \left( \sqrt{T (1-\rho)^{-2}} \right),
  \end{align*}
  where the last inequality follows from the assumption that $\rho \in [\frac12,
  1)$ and the following argument:
  \begin{align*}
    \rho^m &= (1 - (1 - \rho))^m = (1 - (1 - \rho))^{\frac{1}{1 - \rho}} \leq \frac{1}{e} \\
    \Rightarrow (1 - \rho^m) &\geq 1 - \frac{1}{e} \\
    \Rightarrow (1 - \rho^m)^2 &\geq \left( 1 - \frac{1}{e} \right)^2 \\
    \Rightarrow -(1 - \rho^m)^2 &\leq -\left( 1 - \frac{1}{e} \right)^2 \\
  \end{align*}
  This completes the proof.
\end{proof}

% --------------------------------
\section{Online Linear Control}
\label{sec:appendix_olc}

% --------------------------------

\subsection{Formulation as OCO with Unbounded Memory}\label{subsec:appendix_olc_formulation}

Now we formulate the online linear control problem in our framework by defining
the decision space $\calX$, the history space $\calH$, and the linear operators
$A : \calH \to \calH$ and $B : \calW \to \calH$. Then, we define the functions
$f_t : \calH \to \R$ in terms of $c_t$ and finally, prove an upper bound on the
policy regret. For notational convenience, let $(M^{[s]})$ and $(Y_k)$ denote
the sequences $(M^{[1]}, M^{[2]}, \dots)$ and $(Y_0, Y_1, \dots)$ respectively.

Recall that we fix $K \in \calK$ to be an arbitrary $(\kappa, \rho)$-strongly
stable linear controller and consider the disturbance-action controller policy
class $\calM_K$~(\cref{def:disturbance_action_controller}). For the rest of this
paper let $\wt{F} = F - GK$.
The first step is a change of variables with respect to the control inputs from
linear controllers to DACs and the second is a corresponding change of variables
for the state.
Define the decision space $\calX$ as
\begin{align}
  \calX &= \{ M = (M^{[s]}) : M^{[s]} \in \R^{d \times d}, \| M^{[s]} \|_2 \leq \kappa^4 \rho^s \} \label{eq:appendix_olc_calx}
  \intertext{with}
  \| M \|_\calX &= \sqrt{\sum_{s=1}^\infty \rho^{-s} \| M^{[s]} \|_F^2}. \label{eq:appendix_olc_calx_norm}
\end{align}
Define the history space $\calH$ to be the set consisting of sequences $h =
(Y_k)$, where $Y_0 \in \calX$ and $Y_k = \wt{F}^{k-1} G X_k$ for $X_k \in \calX,
k \geq 1$ with
\begin{align}
  \| h \|_\calH &= \sqrt{ \sum_{k=0}^\infty \xi_k^2 \| Y_k \|_\calX^2}, \label{eq:appendix_olc_calh_norm}
  % \| h \|_\calH &= \sqrt{ \xi_0^2 \| Y_0 \|_\calX^2 + \sum_{k=1}^\infty \xi_k^2 \| \wt{F}^{k-1} G \|_2^2 \| X_k \|_\calX^2}, \label{eq:appendix_olc_calh_norm}
  \intertext{where the weights $(\xi_k)$ are nonnegative real numbers defined as}
  \xi &= (1, 1, 1, \rho^{-\frac12}, \rho^{-1}, \rho^{-\frac32}, \dots). \label{eq:appendix_olc_xi}
\end{align}

Define the linear operators $A : \calH \to \calH$ and $B : \calW \to \calH$ as
\begin{equation*}
  A((Y_0, Y_1, \dots)) = (0, G Y_0, \wt{F} Y_1, \wt{F} Y_2, \dots)
  \quad
  \text{ and }
  \quad
  B(M) = (M, 0, 0, \dots).
\end{equation*}
Note that the problem follows linear sequence dynamics with the $\xi$-weighted
$2$-norm~(\cref{def:sequence}), where $\xi$ is defined above
in~\cref{eq:appendix_olc_xi}. The weights in the weighted norms on $\calX$ and
$\calH$ increase exponentially. However, the norms $\| M^{[s]} \|_F^2$ and $\|
\wt{F}^{k-1} G \|_F^2$ decrease exponentially as well: by definition of
$M^{[s]}$ in~\cref{eq:appendix_olc_calx_norm} and the assumption on $\wt{F} = F
- G K$ for $K \in \calK$. Leveraging this exponential decrease in $\| M^{[s]}
\|_F^2$ and $\| \wt{F}^{k-1} G \|_F^2$ to define exponentially increasing
weights turns out to be crucial for deriving our regret bounds that are stronger
than existing results. Furthermore, the choice to have $\xi_p = 1$ for $p \in \{
1, 2 \}$ in addition to $p = 0$ (as required by~\cref{def:sequence}) might seem
like a small detail, but this also turns out to be crucial for avoiding
unnecessary factors of $\rho^{-1}$ in the regret bounds.

Recall that the loss functions in the online linear control problem are
$c_t(s_t, u_t)$, where $s_t$ and $u_t$ are the state and control at round $t$.
Now we will show how to construct the functions $f_t : \calH \to \R$ that
correspond to $c_t(s_t, u_t)$. By definition, given a sequence of decisions
$(M_0, \dots, M_t)$, the history at the end of round $t$ is given by
\begin{equation*}
  h_t = (M_t, G M_{t-1}, \wt{F} G M_{t-2}, \dots, \wt{F}^{t-1} G M_0, 0, \dots).
\end{equation*}

A simple inductive argument shows that the state and control in
round $t$ can be written as
\begin{align}
  s_t &= \wt{F}^t s_0 + \sum_{k=0}^{t-1} \sum_{s=1}^{k+1} \wt{F}^{t-k-1} G M_k^{[s]} w_{k-s} + w_{t-1}, \label{eq:appendix_olc_st} \\
  u_t &= -K s_t + \sum_{s=1}^{t+1} M_t^{[s]} w_{t-s}. \label{eq:appendix_olc_ut}
\end{align}
Define the functions $f_t : \calH \to \R$ by $f_t(h) = c_t(s, u)$, where $s$ and
$u$ are the state and control determined by the history as above. Note that
$f_t$ is parameterized by the past disturbances. Since the state and control are
linear functions of the history and $c_t$ is convex, this implies that $f_t$ is
convex.

With the above formulation and the fact that the class of disturbance-action
controllers is a superset of the class of $(\kappa,\rho)$-strongly-stable linear
controllers, we have that the policy regret for the online linear control
problem is at most
\begin{equation*}
  \sum_{t=0}^{T-1} f_t(h_t) - \min_{M \in \calX} \sum_{t=0}^{T-1} \circfn{f}_t(M).
\end{equation*}

This completes the specification of the online convex optimization with
unbounded memory problem, $(\calX, \calH, A, B)$, corresponding to the online
linear control problem. Using~\cref{alg:ftrl} and~\cref{thm:regret_upper_bound}
we can upper bound the above by
\begin{equation*}
  O \left( \sqrt{\frac{D}{\alpha} T L \circfn{L} H_{2}} \right),
\end{equation*}
where $L$ is the Lipschitz constant of $f_t$, $\circfn{L}$ is the Lipschitz
constant of $\circfn{f}_t$, $H_{2}$ is the $2$-effective memory capacity, and $D
= \max_{x, \tilde{x} \in \calX} |R(x) - R(\tilde{x})|$ for an
$\alpha$-strongly-convex regularizer $R : \calX \to \R$. In the next subsection
we bound these quantities in terms of the problem parameters of the online
linear control problem. We use $O(\cdot)$ to hide absolute constants.

% --------------------------------

\subsection{Regret Analysis}\label{subsec:appendix_olc_regret_analysis}

We use the following standard facts about matrix norms.
\begin{lemma}\label{lemma:matrix_norms}
  Let $M, N \in \R^{d \times d}$. Then,
  \begin{enumerate}
    \item $ \| M \|_2 \leq \| M \|_F \leq \sqrt{d} \| M \|_2$.
    \item $ \| M N \|_F \leq \| M \|_2 \| N \|_F$.
  \end{enumerate}
\end{lemma}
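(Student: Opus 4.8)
The plan is to derive both inequalities from two elementary facts: that $\|M\|_F^2 = \sum_{j=1}^d \|Me_j\|_2^2 = \mathrm{tr}(M^\top M)$ for \emph{any} orthonormal basis $e_1,\dots,e_d$ of $\R^d$ (so that $\|\cdot\|_F$ is invariant under orthonormal changes of coordinates), and that $\|Mv\|_2 \le \|M\|_2$ for every unit vector $v$, which is just the definition of the operator norm $\|M\|_2 = \max_{\|v\|_2 \le 1}\|Mv\|_2$.

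First I would prove part~1. Using the standard basis, $\|M\|_F^2 = \sum_{j=1}^d \|Me_j\|_2^2 \le \sum_{j=1}^d \|M\|_2^2 = d\,\|M\|_2^2$, which gives the upper bound $\|M\|_F \le \sqrt{d}\,\|M\|_2$ after taking square roots. For the lower bound, pick a unit vector $v_1$ attaining $\|Mv_1\|_2 = \|M\|_2$ (a maximizer exists since the unit sphere in $\R^d$ is compact), extend it to an orthonormal basis $v_1,\dots,v_d$, and use the invariance noted above: $\|M\|_F^2 = \sum_{j=1}^d \|Mv_j\|_2^2 \ge \|Mv_1\|_2^2 = \|M\|_2^2$. (Equivalently one can pass through the singular value decomposition $M = U\Sigma V^\top$: then $\|M\|_2 = \sigma_1$ and $\|M\|_F^2 = \sum_i \sigma_i^2$, so $\sigma_1^2 \le \sum_i \sigma_i^2 \le d\,\sigma_1^2$.)

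Next I would prove part~2 by decomposing $N$ into its columns, $N = [\,n_1 \mid \dots \mid n_d\,]$, so that $MN = [\,Mn_1 \mid \dots \mid Mn_d\,]$ and hence $\|MN\|_F^2 = \sum_{j=1}^d \|Mn_j\|_2^2$. Applying $\|Mn_j\|_2 \le \|M\|_2\,\|n_j\|_2$ columnwise gives $\|MN\|_F^2 \le \|M\|_2^2 \sum_{j=1}^d \|n_j\|_2^2 = \|M\|_2^2\,\|N\|_F^2$, and taking square roots finishes the proof. There is no substantive obstacle here — these are textbook identities; the only thing worth being explicit about is the justification of the two facts in the first paragraph, after which both statements follow in one or two lines each.
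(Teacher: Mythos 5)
Your proof is correct and, for part~2, is identical to the paper's argument (columnwise decomposition of $N$ and the bound $\|Mn_j\|_2 \le \|M\|_2 \|n_j\|_2$). For part~1 the paper simply cites a textbook, so your explicit argument via orthonormal-basis invariance (or the SVD) is a fine, standard substitute.
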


\begin{proof}
  Part 1 can be found in, for example,~\citet[Section 2.3.2]{golubV1996matrix}.
  Letting $N_j$ denote the $j$-th column of $N$, part 2 follows from
  \begin{align*}
    \| M N \|_F^2 = \sum_{j=1}^d \| M N_j \|_2^2 \leq \| M \|_2^2 \sum_{j=1}^d \| N_j \|_2^2 = \| M \|_2^2 \| N \|_F^2.
  \end{align*}
  This completes the proof.
\end{proof}

\begin{lemma}\label{lemma:appendix_olc_Ak}
  For $s \geq 2$, the operator norm $\| A^s \|$ is bounded above as
  \begin{equation*}
    \left\| A^s \right\| \leq O \left( \kappa^4 \rho^{\frac{s}{2}} \right).
  \end{equation*}
\end{lemma}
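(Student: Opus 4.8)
The plan is to bound $\|A^s\|$ by writing the coordinatewise action of $A^s$ and controlling, block by block, the block operator together with the ratio of the $\xi$-weights it carries. Since the OLC instance follows linear sequence dynamics~(\cref{def:sequence}), $A(y_0,y_1,\dots) = (0, A_0 y_0, A_1 y_1, \dots)$, where $A_0$ is left-multiplication of every matrix coordinate by $G$ and $A_i$ is left-multiplication by $\wt{F}$ for $i \ge 1$; iterating, $(A^s y)_k = 0$ for $k < s$ and $(A^s y)_k = A_{k-1} \cdots A_{k-s}\, y_{k-s}$ for $k \ge s$. First I would record that left-multiplication by a matrix $M$ is an operator on $\calW$ of norm at most $\|M\|_2$, since $\|M Y\|_\calX^2 = \sum_s \rho^{-s} \|M Y^{[s]}\|_F^2 \le \|M\|_2^2 \|Y\|_\calX^2$ by the second part of~\cref{lemma:matrix_norms}. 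Hence the block $A_{k-1}\cdots A_{k-s}$ is left-multiplication by $\wt{F}^{s-1} G$ when $k-s = 0$ (the one $G$-factor occurs exactly when the index window touches $0$) and by $\wt{F}^{s}$ when $k - s \ge 1$; using $\wt{F} = H L H^{-1}$ with $\|H\|_2, \|H^{-1}\|_2 \le \kappa$, $\|L\|_2 = \rho$, and $\|G\|_2 \le \kappa$, their operator norms are at most $\kappa^3 \rho^{s-1}$ and $\kappa^2 \rho^s$ respectively.

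Then I would use the definition of $\|\cdot\|_\calH$ to write, for any $y = (y_0, y_1, \dots) \in \calH$,
\[
  \|A^s y\|_\calH^2 = \sum_{j \ge 0} \xi_{s+j}^2 \left\| A_{s+j-1}\cdots A_j\, y_j \right\|_\calX^2 \;\le\; \left( \max_{j \ge 0} \frac{\xi_{s+j}^2}{\xi_j^2} \left\| A_{s+j-1}\cdots A_j \right\|^2 \right) \|y\|_\calH^2,
\]
so it suffices to bound $\frac{\xi_{s+j}}{\xi_j} \|A_{s+j-1}\cdots A_j\|$ uniformly in $j$. Recalling $\xi_k = 1$ for $k \in \{0,1,2\}$ and $\xi_k = \rho^{-(k-2)/2}$ for $k \ge 2$, a short case check gives $\xi_{s+j}/\xi_j \le \rho^{-s/2}$ for all $j \ge 1$ and, since $s \ge 2$, $\xi_s / \xi_0 = \rho^{-(s-2)/2}$. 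Combining with the block-norm bounds: the $j = 0$ term is at most $\rho^{-(s-2)/2} \cdot \kappa^3 \rho^{s-1} = \kappa^3 \rho^{s/2}$, and every $j \ge 1$ term is at most $\rho^{-s/2} \cdot \kappa^2 \rho^s = \kappa^2 \rho^{s/2}$. Therefore $\|A^s\| \le \kappa^3 \rho^{s/2} \le O(\kappa^4 \rho^{s/2})$, using $\kappa \ge 1$.

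The work here is bookkeeping rather than genuine difficulty. The two points to handle with care are (i) observing that the $G$-factor appears in $A_{k-1}\cdots A_{k-s}$ exactly when the index window reaches $0$, so only the block originating from coordinate $0$ contributes $\wt{F}^{s-1} G$ while every later block contributes $\wt{F}^{s}$; and (ii) verifying the weight ratios $\xi_{s+j}/\xi_j$ for the small indices $j \in \{1,2\}$, where $\xi_1 = \xi_2 = 1$ deviate from the geometric formula — and this is exactly where the hypothesis $s \ge 2$ is used, to guarantee that $\xi_s$ itself follows the geometric formula $\rho^{-(s-2)/2}$. Both are immediate once the coordinatewise formula for $A^s$ is written down.
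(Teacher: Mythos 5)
Your proof is correct and follows essentially the same route as the paper's: write out the coordinatewise action of $A^s$, use $\| M N \|_F \leq \| M \|_2 \| N \|_F$ to bound each block as an operator on $(\calX, \|\cdot\|_\calX)$, and track the ratios $\xi_{s+j}/\xi_j$ (the paper factors $\rho^{-s/2}\|\wt{F}^{s-1}\|_2$ out of the sum rather than taking a max over blocks, but this is only a presentational difference). Your block-by-block accounting even yields the marginally sharper constant $\kappa^3 \rho^{s/2}$ because you bound $\|\wt{F}^s\|_2 \leq \kappa^2\rho^s$ directly instead of splitting it as $\wt{F}^{s-1}\cdot\wt{F}$, which is still $O(\kappa^4\rho^{s/2})$ as claimed.
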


\begin{proof}
  Recall the definition of $\calH$ and $\| \cdot
  \|_\calH$~(\cref{eq:appendix_olc_calh_norm}). Let
  \begin{equation*}
    (Y_0, Y_1, \dots) = (Y_0, G X_1, \wt{F} G X_2, \wt{F}^2 G X_3, \dots)
  \end{equation*}
  be an element of $\calH$ with unit norm, i.e.,
  \begin{equation*}
    \sqrt{ \sum_{k=0}^\infty \xi_k^2 \| Y_k \|_\calX^2 } = 1,
  \end{equation*}
  where the weights $(\xi_k)$ are defined in~\cref{eq:appendix_olc_xi}. Note
  that $\xi_p = 1$ for $p = 0,1$ and $\xi_p^2 = \rho^{-p+2}$ for $p =
  2,3,\dots$. From the definition of the operator $A$ and for $s \geq 2$, we
  have
  \begin{equation*}
    A^s((Y_0, Y_1, \dots))
    = (0, \dots, 0, \wt{F}^{s-1} G Y_0, \wt{F}^{s} G X_1, \wt{F}^{s+1} G X_2, \dots).
  \end{equation*}
  Now we bound $\| A^s \|$ as follows. By definition of $A^s$ and $\| \cdot
  \|_\calH$~(\cref{eq:appendix_olc_calh_norm}), and part 2
  of~\cref{lemma:matrix_norms}, we have
  \begin{align*}
    \left\| A^s((Y_0, Y_1, \dots)) \right\|
    &= \sqrt{\rho^{-s+2} \| \wt{F}^{s-1} G Y_0 \|_\calX^2 + \sum_{k=1}^\infty \rho^{-s-k+2} \| \wt{F}^{s+k-1} G X_k \|_\calX^2} \\
    &\leq \sqrt{\rho^{-s+2} \| \wt{F}^{s-1} G \|_2^2 \| Y_0 \|_\calX^2 + \sum_{k=1}^\infty \rho^{-s-k+2} \| \wt{F}^{s-1} \|_2^2 \| \wt{F} \|_2^2 \| \wt{F}^{k-1} G X_k \|_\calX^2} \\
    &\leq \rho^{-\frac{s}{2}} \| \wt{F}^{s-1} \|_2 \sqrt{ \rho^2 \| G \|_2^2 \| Y_0 \|_\calX^2 + \sum_{k=1}^\infty \rho^{-k+2} \| \wt{F} \|_2^2 \| \wt{F}^{k-1} G X_k \|_\calX^2} \\
    &= \rho^{-\frac{s}{2}} \| \wt{F}^{s-1} \|_2 \sqrt{ \rho^2 \| G \|_2^2 \| Y_0 \|_\calX^2 + \sum_{k=1}^\infty \rho^{-k+2} \| \wt{F} \|_2^2 \| Y_k \|_\calX^2}. \\
    \intertext{Using our assumptions that $\| G \|_2 \leq \kappa$ and $\| \wt{F} \|_2 \leq \kappa^2 \rho$, we have}
    \left\| A^s((Y_0, Y_1, \dots)) \right\|
    &\leq \rho^{-\frac{s}{2}} \| \wt{F}^{s-1} \|_2 \sqrt{ \rho^2 \kappa^2 \| Y_0 \|_\calX^2 + \sum_{k=1}^\infty \rho^{-k+2} \kappa^4 \rho^2 \| Y_k \|_\calX^2} \\
    &\leq \rho^{-\frac{s}{2}} \rho \kappa^2 \| \wt{F}^{s-1} \|_2 \sqrt{ \| Y_0 \|_\calX^2 + \sum_{k=1}^\infty \rho^{-k+2} \| Y_k \|_\calX^2} \\
    &\leq \rho^{-\frac{s}{2}} \rho \kappa^2 \kappa^2 \rho^{s-1} \sqrt{ \| Y_0 \|_\calX^2 + \sum_{k=1}^\infty \rho^{-k+2} \| Y_k \|_\calX^2} \\
    &= \kappa^4 \rho^{\frac{s}{2}} \sqrt{ \| Y_0 \|_\calX^2 + \sum_{k=1}^\infty \rho^{-k+2} \| Y_k \|_\calX^2}. \\
    \intertext{Using $\rho^{-1+2} = \rho < 1$ for $k = 1$ in the above sum, the definition of $(\xi_k)$, and our assumption that $(Y_0, Y_1 \dots)$ has unit norm, we have}
    \left\| A^s((Y_0, Y_1, \dots)) \right\|
    &\leq \kappa^4 \rho^{\frac{s}{2}} \sqrt{ \xi_0^2 \| Y_0 \|_\calX^2 + \sum_{k=1}^\infty \xi_k^2 \| Y_k \|_\calX^2}
    = \kappa^4 \rho^{\frac{s}{2}}.
  \end{align*}
  This completes the proof.
\end{proof}

\begin{lemma}\label{lemma:appendix_olc_H2}
  The $2$-effective memory capacity is bounded above as
  \begin{equation*}
    H_{2} \leq O \left( \kappa^4 (1 - \rho)^{-\frac32} \right).
  \end{equation*}
\end{lemma}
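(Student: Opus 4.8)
The plan is to bound $H_2 = \left(\sum_{k=0}^\infty k^2 \|A^k\|^2\right)^{1/2}$ by splitting off the low-order terms and applying \cref{lemma:appendix_olc_Ak} to the tail. First I would handle $k = 0$ and $k = 1$ separately: $\|A^0\| = \|I\| = 1$ contributes nothing to the sum (the factor $k^2$ kills it), and $\|A^1\|$ can be bounded by an absolute constant times $\kappa^4$ using the definition of $A$ together with the weights $\xi_0 = \xi_1 = 1$ and the bounds $\|G\|_2 \le \kappa$, $\|\wt F\|_2 \le \kappa^2\rho \le \kappa^2$ (a direct computation analogous to, but simpler than, the proof of \cref{lemma:appendix_olc_Ak}). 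So the first two terms contribute at most $O(\kappa^8)$ to $H_2^2$, which is dominated by the tail bound below.

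For $k \ge 2$, I would substitute $\|A^k\| \le O(\kappa^4 \rho^{k/2})$ from \cref{lemma:appendix_olc_Ak}, giving
\begin{equation*}
  \sum_{k=2}^\infty k^2 \|A^k\|^2 \le O(\kappa^8) \sum_{k=2}^\infty k^2 \rho^{k}.
\end{equation*}
The remaining step is the standard power-series estimate $\sum_{k=0}^\infty k^2 \rho^k = \frac{\rho(1+\rho)}{(1-\rho)^3} \le O\left((1-\rho)^{-3}\right)$. Combining, $H_2^2 \le O(\kappa^8 (1-\rho)^{-3})$, and taking square roots yields $H_2 \le O(\kappa^4 (1-\rho)^{-3/2})$, which absorbs the $O(\kappa^8)$ contribution from the $k=0,1$ terms since $(1-\rho)^{-3} \ge 1$. (One should note $\kappa \ge 1$ so that $\kappa^8 \le \kappa^8$ is the dominant power and $\kappa^4$ in the final bound is consistent once one checks the constant, but actually since the $k=0,1$ terms give $\kappa^8$ not $\kappa^8 (1-\rho)^{-3}$, they are absorbed; and the tail gives exactly $\kappa^4 (1-\rho)^{-3/2}$ after the square root, so the final bound stands.)

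The main obstacle, such as it is, is bookkeeping rather than conceptual: one must be careful that \cref{lemma:appendix_olc_Ak} only applies for $s \ge 2$, so the $k=0$ and $k=1$ terms genuinely need separate treatment, and one must verify that their contribution does not worsen the $\kappa$ or $(1-\rho)$ dependence in the stated bound. Since $\|A^k\| \le O(\kappa^4)$ uniformly (the worst case being small $k$, as larger $k$ only adds decaying $\rho^{k/2}$ factors), the low-order terms contribute $O(\kappa^8)$ to $H_2^2$, comfortably dominated by the $\kappa^8(1-\rho)^{-3}$ tail term since $\rho \in (0,1)$ implies $(1-\rho)^{-3} \ge 1$. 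Everything else is the elementary geometric-type sum $\sum k^2 \rho^k = \Theta((1-\rho)^{-3})$, which I would either cite or derive in one line by differentiating $\sum \rho^k = (1-\rho)^{-1}$ twice.
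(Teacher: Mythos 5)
Your proposal is correct and follows essentially the same route as the paper: apply \cref{lemma:appendix_olc_Ak} for $k \ge 2$ and sum the resulting series $\sum_k k^2 \rho^k = O((1-\rho)^{-3})$. Your explicit treatment of the $k=0,1$ terms is slightly more careful than the paper's proof, which silently drops them, but this is a cosmetic difference since those terms are indeed dominated.
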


\begin{proof}
  Using~\cref{lemma:appendix_olc_Ak} to bound $\| A^k \|$ for $k \geq 2$, we
  have
  \begin{equation*}
    H_{2}
    = \sqrt{ \sum_{k=0}^\infty k^2 \| A^k \|^2 }
    \leq O \left( \sqrt{ \sum_{k=2}^\infty k^2 \kappa^{8} \rho^{k} } \right)
    \leq O \left( \kappa^4 (1 - \rho)^{-\frac32} \right). \qedhere
  \end{equation*}
\end{proof}

\begin{lemma}\label{lemma:appendix_olc_D}
  Suppose $R : \calX \to \R$ is defined by $R(M) = \frac12 \| M \|_\calX^2$.
  Then, it is $1$-strongly-convex and $D = \max_{M, \wt{M} \in \calX} |R(M) -
  R(\wt{M})| \leq d \kappa^8 (1 - \rho)^{-1}$.
  % \sd{say what D is?}
  % \rk{Resolved}
\end{lemma}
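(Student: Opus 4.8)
The plan is to treat the two claims separately, both of which reduce to standard facts once one notes that $\| \cdot \|_\calX$ is a Hilbert space norm. For the strong convexity claim, I would first observe that the weighted norm $\| \cdot \|_\calX$ defined in~\cref{eq:appendix_olc_calx_norm} is induced by the inner product $\langle M, N \rangle_\calX = \sum_{s=1}^\infty \rho^{-s} \langle M^{[s]}, N^{[s]} \rangle_F$, where $\langle \cdot, \cdot \rangle_F$ is the trace inner product on $\R^{d \times d}$; this is exactly the inner product of the Hilbert space $\calW$ in which $\calX$ is embedded. Then $R(M) = \tfrac12 \langle M, M \rangle_\calX$ is a quadratic form, and bilinearity of $\langle \cdot, \cdot \rangle_\calX$ gives the identity $R(M) - R(\tilde M) - \langle \tilde M, M - \tilde M \rangle_\calX = \tfrac12 \| M - \tilde M \|_\calX^2$ for all $M, \tilde M$, which is precisely $1$-strong convexity with respect to $\| \cdot \|_\calX$ (with $\nabla R(\tilde M) = \tilde M$ under the Riesz identification).

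For the diameter bound, the key step is to bound $\| M \|_\calX$ for an arbitrary $M \in \calX$. By the definition of $\calX$ in~\cref{eq:appendix_olc_calx}, every $M = (M^{[s]})$ satisfies $\| M^{[s]} \|_2 \leq \kappa^4 \rho^s$, so part~1 of~\cref{lemma:matrix_norms} gives $\| M^{[s]} \|_F \leq \sqrt{d}\,\kappa^4 \rho^s$. Substituting into~\cref{eq:appendix_olc_calx_norm},
\[
  \| M \|_\calX^2 = \sum_{s=1}^\infty \rho^{-s} \| M^{[s]} \|_F^2 \leq \sum_{s=1}^\infty \rho^{-s} d \kappa^8 \rho^{2s} = d \kappa^8 \sum_{s=1}^\infty \rho^s = d \kappa^8 \frac{\rho}{1 - \rho} \leq d \kappa^8 (1-\rho)^{-1}.
\]
Hence $R(M) = \tfrac12 \| M \|_\calX^2 \leq \tfrac12 d \kappa^8 (1-\rho)^{-1}$, and since $R \geq 0$ on all of $\calX$, for any $M, \tilde M \in \calX$ we get $|R(M) - R(\tilde M)| \leq \sup_{M \in \calX} R(M) \leq \tfrac12 d \kappa^8 (1-\rho)^{-1} \leq d \kappa^8 (1-\rho)^{-1}$, as claimed.

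There is no substantive obstacle: both parts are routine, the first being a textbook property of $\tfrac12 \| \cdot \|^2$ in a Hilbert space and the second a geometric-series estimate that converges because $\rho \in (0,1)$. The only point worth a moment of care is the bookkeeping of exponents — the weight $\rho^{-s}$ in the norm exactly cancels against the $\rho^{2s}$ produced by squaring the constraint $\| M^{[s]} \|_2 \leq \kappa^4 \rho^s$, leaving the summable tail $\sum_s \rho^s$; this is where the exponentially increasing weights $\xi$ (and correspondingly the weighted norm on $\calX$) are being exploited, since larger weights would make the sum diverge.
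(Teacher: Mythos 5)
Your proof is correct and follows essentially the same route as the paper's: the strong convexity is the standard quadratic identity for $\tfrac12\|\cdot\|^2$ in a Hilbert space (the paper simply asserts this ``by definition''), and the diameter bound is the same geometric-series computation via \cref{lemma:matrix_norms} and the constraint $\|M^{[s]}\|_2 \le \kappa^4\rho^s$. Your version is marginally tighter in retaining the factor $\tfrac12$, but both yield the stated bound.
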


\begin{proof}
  Note that $R$ is $1$-strongly-convex by definition. Using part 1
  of~\cref{lemma:matrix_norms} and the definition of
  $\calX$~(\cref{eq:appendix_olc_calx}), we have for all $M, \wt{M} \in \calX$,
  \begin{align*}
    D
    &= \max_{M, \wt{M} \in \calX} |R(M) - R(\wt{M})| \\
    &= \max_{M, \wt{M} \in \calX} \left| \frac12 \| M \|_\calX^2 - \frac12 \| \wt{M} \|_\calX^2 \right| \\
    &\leq \max_{M \in \calX} \| M \|_\calX^2 \\
    &= \max_{M \in \calX} \sum_{s=1}^\infty \rho^{-s} \| M^{[s]} \|_F^2 & \text{ by~\cref{eq:appendix_olc_calx_norm}} \\
    &\leq \max_{M \in \calX} \sum_{s=1}^\infty \rho^{-s} d \| M^{[s]} \|_2^2 & \text{ by~\cref{lemma:matrix_norms}} \\
    &\leq \sum_{s=1}^\infty \rho^{-s} d \kappa^8 \rho^{2s} & \text{ by~\cref{eq:appendix_olc_calx}} \\
    &\leq d \kappa^8 (1 - \rho)^{-1}.
  \end{align*}
  This completes the proof.
  % \sd{I don't quite follow the first inequality}
  % \rk{Resolved}
\end{proof}

\begin{lemma}\label{lemma:appendix_olc_dx}
  We can bound the norm of the state and control at time $t$ as
  \begin{equation*}
    \max \{ \| s_t \|_2, \| u_t \|_2 \}
    \leq D_\calX = O \left( W \kappa^8 (1 - \rho)^{-2} \right).
  \end{equation*}
\end{lemma}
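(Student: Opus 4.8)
The plan is to substitute the explicit formulas for $s_t$ and $u_t$ from \cref{eq:appendix_olc_st,eq:appendix_olc_ut} and bound every factor using the structural assumptions of the online linear control problem. First I would record the elementary operator-norm bounds that drive the argument: since $\wt{F} = F - GK = H L H^{-1}$ with $\| H \|_2, \| H^{-1} \|_2 \le \kappa$ and $\| L \|_2 = \rho$, we have $\| \wt{F}^j \|_2 \le \kappa^2 \rho^j$ for all $j \ge 0$; moreover $\| G \|_2, \| K \|_2 \le \kappa$, $\| M_k^{[s]} \|_2 \le \kappa^4 \rho^s$ by definition of $\calX$~(\cref{eq:appendix_olc_calx}), and $\| w_j \|_2 \le W$ for all relevant $j$ (including $w_{-1} = s_0$, and recalling $\kappa, W \ge 1$).

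Next I would bound $\| s_t \|_2$ by applying the triangle inequality to \cref{eq:appendix_olc_st}. The term $\wt{F}^t s_0$ contributes at most $\kappa^2 \rho^t W \le \kappa^2 W$ and the term $w_{t-1}$ contributes at most $W$, so these are lower-order. The double sum $\sum_{k=0}^{t-1} \sum_{s=1}^{k+1} \wt{F}^{t-k-1} G M_k^{[s]} w_{k-s}$ is the dominant term; bounding each summand by $\| \wt{F}^{t-k-1} \|_2 \| G \|_2 \| M_k^{[s]} \|_2 \| w_{k-s} \|_2 \le \kappa^2 \rho^{t-k-1} \cdot \kappa \cdot \kappa^4 \rho^s \cdot W$ and summing the two geometric series ($\sum_{s \ge 1} \rho^s \le \rho(1-\rho)^{-1}$ and $\sum_{k=0}^{t-1} \rho^{t-k-1} \le (1-\rho)^{-1}$) gives a bound of $O(\kappa^7 W (1-\rho)^{-2})$. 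Combining, $\| s_t \|_2 = O(\kappa^7 W (1-\rho)^{-2}) = O(\kappa^8 W (1-\rho)^{-2})$ since $\kappa \ge 1$.

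Finally I would bound $\| u_t \|_2$ from \cref{eq:appendix_olc_ut} by $\| K \|_2 \| s_t \|_2 + \sum_{s=1}^{t+1} \| M_t^{[s]} \|_2 \| w_{t-s} \|_2 \le \kappa \| s_t \|_2 + \kappa^4 W \rho (1-\rho)^{-1}$, which is again $O(\kappa^8 W (1-\rho)^{-2})$; taking $D_\calX$ to be the larger of the two bounds completes the argument. There is no substantive obstacle here --- the proof is pure bookkeeping --- but the one place requiring care is the double geometric sum in the expression for $s_t$: one must check that the inner sum over $s$ is bounded \emph{uniformly} in $k$ (it is, since $\sum_{s=1}^{k+1} \rho^s \le \rho/(1-\rho)$) so that the outer sum over $k$ contributes a second factor of $(1-\rho)^{-1}$ rather than an extraneous factor of $t$.
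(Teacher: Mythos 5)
Your proposal is correct and follows essentially the same route as the paper: substitute the closed forms from~\cref{eq:appendix_olc_st,eq:appendix_olc_ut}, bound each factor by $\| \wt{F}^j \|_2 \le \kappa^2 \rho^j$, $\| G \|_2, \| K \|_2 \le \kappa$, $\| M_k^{[s]} \|_2 \le \kappa^4 \rho^s$, $\| w_j \|_2 \le W$, and sum the two geometric series to get the $(1-\rho)^{-2}$ factor. The bookkeeping, including the uniform-in-$k$ bound on the inner sum over $s$, matches the paper's computation exactly.
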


\begin{proof}
  We can bound the norm of $s_t$ and $u_t$
  using~\cref{eq:appendix_olc_st,eq:appendix_olc_ut} as
  \begin{align*}
    \| s_t \|_2
    &\leq \left\| \wt{F}^t s_0 + \sum_{k=0}^{t-1} \sum_{s=1}^{k+1} \wt{F}^{t-k-1} G M_k^{[s]} w_{k-s} + w_{t-1} \right\|_2 \\
    &\leq \kappa^2 \rho^t + W + \sum_{k=0}^{t-1} \sum_{s=1}^{k+1} \kappa^2 \rho^{t-k-1} \kappa \kappa^4 \rho^s W \\
    &\leq \kappa^2 + W + W \kappa^7 (1-\rho)^{-2} \\
    &\leq O \left( W \kappa^7 (1-\rho)^{-2} \right). \\
    \| u_t \|_2
    & \leq \left\| K s_t + \sum_{s=1}^{t+1} M_t^{[s]} w_{t-s} \right\|_2 \\
    &\leq O \left( W \kappa^8 (1 - \rho)^{-2} \right) + \sum_{s=1}^{t+1} W \kappa^4 \rho^s \\
    &\leq O \left( W \kappa^8 (1 - \rho)^{-2} \right).
  \end{align*}
  Above, we used the assumptions that $\kappa, W \geq 1$. This completes the
  proof.
\end{proof}

\begin{lemma}\label{lemma:appendix_olc_L}
  The Lipschitz constant of $f_t$ can be bounded above as
  \begin{equation*}
    L \leq O \left( L_0 D_\calX W \kappa (1 - \rho)^{-1} \right),
  \end{equation*}
  where $D_\calX$ is defined in~\cref{lemma:appendix_olc_dx}.
\end{lemma}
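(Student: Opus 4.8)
The plan is to bound the Lipschitz constant of $f_t$ by bounding the operator norm of the map from a history perturbation $\Delta h = h - \tilde h$ to the induced perturbations $\Delta s_t = s_t - \tilde s_t$ and $\Delta u_t = u_t - \tilde u_t$, and then invoking the Lipschitz property of $c_t$. Recall that $f_t(h) = c_t(s,u)$ where, by~\cref{eq:appendix_olc_st} and~\cref{eq:appendix_olc_ut}, $s$ and $u$ are \emph{linear} functions of the history coordinates $h = (Y_0, Y_1, \dots)$ — specifically, reading off the formula for $s_t$ in terms of $h_t = (M_t, GM_{t-1}, \wt F G M_{t-2}, \dots)$, each coordinate $Y_k = \wt F^{k-1} G M_{t-k}$ contributes a term of the form $\sum_{s} \wt F^{\,\cdot} (\text{coordinate of } Y_k) w_{\cdot}$, and similarly the control adds $\sum_s M_t^{[s]} w_{t-s}$, which depends only on $Y_0$. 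So first I would write $\Delta s_t$ and $\Delta u_t$ explicitly as linear combinations of $\Delta Y_0, \Delta Y_1, \dots$ with coefficients involving powers of $\wt F$ and the disturbances $w$.

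Next I would estimate the norm of this linear map coordinate by coordinate. For the $k$-th coordinate, the contribution to $\|\Delta s_t\|_2$ is controlled by $\|\wt F^{\,j}\|_2$ factors (each at most $\kappa^2 \rho^{j}$ by strong stability), the disturbance bound $\|w\|_2 \le W$, a sum over the at most $k+1$ indices $s$, and the size of $\|\Delta Y_k\|_\calX$; crucially, the definition $\|M\|_\calX^2 = \sum_s \rho^{-s}\|M^{[s]}\|_F^2$ means that $\|\Delta Y_k\|_\calX$ already carries a $\rho^{-s/2}$ weight on its $s$-th block, which combines with the $\rho^s$ from $\|M^{[s]}\|_2$-type bounds to keep the geometric series summable. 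Pairing with the $\xi_k$-weights in $\|\Delta h\|_\calH$ (where $\xi_k^2 = \rho^{-k+2}$ for $k \ge 2$) via Cauchy–Schwarz, the net effect is a geometric sum $\sum_k (\text{const})\,\rho^{k/2}$ that evaluates to $O((1-\rho)^{-1})$ up to $\kappa$ powers. The upshot should be $\|\Delta s_t\|_2, \|\Delta u_t\|_2 \le O(W\kappa^{c}(1-\rho)^{-1})\|\Delta h\|_\calH$ for an appropriate constant power $c$ of $\kappa$ (I expect $c$ small, absorbed into the $O(\cdot)$ together with the $\kappa$'s already hidden, consistent with the stated bound carrying a single explicit $\kappa$).

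Finally, I would combine this with the Lipschitz hypothesis on $c_t$: since $\|\nabla_s c_t(s,u)\|, \|\nabla_u c_t(s,u)\| \le L_0 D_\calX$ whenever $\|s\|_2, \|u\|_2 \le D_\calX$ — and by~\cref{lemma:appendix_olc_dx} the relevant states and controls indeed satisfy this bound — we get $|c_t(s,u) - c_t(\tilde s, \tilde u)| \le L_0 D_\calX(\|\Delta s\|_2 + \|\Delta u\|_2)$, hence $|f_t(h) - f_t(\tilde h)| \le L_0 D_\calX \cdot O(W\kappa(1-\rho)^{-1}) \|\Delta h\|_\calH$, which is the claimed $L \le O(L_0 D_\calX W \kappa (1-\rho)^{-1})$. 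The main obstacle I anticipate is the bookkeeping in the second step: matching the $\rho$-exponents between the weighted $\calX$-norm on each history coordinate, the $\xi_k$-weights on $\calH$, and the $\|\wt F^j\|_2 \le \kappa^2\rho^j$ decay so that every geometric sum closes at the right power of $(1-\rho)^{-1}$ and no stray $\rho^{-1}$ factors appear — this is exactly the kind of delicate exponent-chasing the authors flag as the reason for their particular choice of weights $\xi = (1,1,1,\rho^{-1/2},\dots)$.
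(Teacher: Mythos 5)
Your proposal follows essentially the same route as the paper's proof: reduce $|f_t(h)-f_t(\tilde h)|$ via the gradient bound on $c_t$ to $\|s_t-\tilde s_t\|_2$ and $\|u_t-\tilde u_t\|_2$, express these as linear functions of the history difference, and close the geometric sums by pairing the $\rho^{-s/2}$ weights in $\|\cdot\|_\calX$ and the $\xi_k$ weights in $\|\cdot\|_\calH$ against the $\rho^s$ and $\rho^k$ decays via Cauchy--Schwarz. The one point worth pinning down when you execute the bookkeeping: the coefficient of $M_k^{[s]}-\wt M_k^{[s]}$ in $s_t-\tilde s_t$ is exactly the corresponding coordinate of $h_t-\tilde h_t$ (no \emph{extra} powers of $\wt F$ beyond those already inside the history norm), so the Cauchy--Schwarz splits into a decision factor equal to $\|h_t-\tilde h_t\|_\calH$ with no stray $\kappa$'s and a disturbance factor $\sqrt{\sum_k \xi_{t-k}^{-2}\sum_s \rho^s W^2}=O(W(1-\rho)^{-1})$, which is where the entire $(1-\rho)^{-1}$ and (via $\|K\|_2\le\kappa$ in the control) the single $\kappa$ come from.
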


\begin{proof}
  Let $(M_0, \dots, M_t)$ and $(\wt{M}_0, \dots, \wt{M}_t)$ be two sequences of
  decisions, where $M_k$ and $\wt{M}_k \in \calX$. Let $h_t$ and $\tilde{h}_t$
  be the corresponding histories, and $(s_t, u_t)$ and $(\tilde{s}_t,
  \tilde{u}_t)$ be the corresponding state-control pairs at the end of round
  $t$. We have
  \begin{align*}
    \left\vert f_t(h_t) - f_t(\tilde{h}_t) \right\vert
    &= \left\vert c_t(s_t, u_t) - c_t(\tilde{s}_t, \tilde{u}_t) \right\vert \\
    &\leq L_0 D_\calX \max \{ \left\| s_t - \tilde{s}_t \right\|_2, \left\| u_t - \tilde{u}_t \right\|_2 \},
  \end{align*}
  where the last inequality follows from our assumptions about the functions
  $c_t$ and~\cref{lemma:appendix_olc_dx}. It suffices to bound the two norms on
  the right-hand side in terms of $\| h_t - \tilde{h}_t \|_\calH$. For $k = 0,
  \dots, t-1$, define $Z_k^{[s]} = \wt{F}^{t-k-1} G (M_k^{[s]} -
  \wt{M}_k^{[s]})$. Using~\cref{eq:appendix_olc_st}, we have
  \begin{align}
    \left\| s_t - \tilde{s}_t \right\|_2
    &= \left\| \sum_{k=0}^{t-1} \sum_{s=1}^{k+1} Z_k^{[s]} w_{k-s} \right\|_2 \nonumber \\ 
    &\leq \sum_{k=0}^{t-1} \sum_{s=1}^{k+1} \left\| Z_k^{[s]} w_{k-s} \right\|_2 \nonumber \\
    &= \sum_{k=0}^{t-1} \sum_{s=1}^{k+1} \left\| \rho^{-\frac{s}{2}} Z_k^{[s]} \rho^{\frac{s}{2}} w_{k-s} \right\|_2 \nonumber \\
    &\leq \sum_{k=0}^{t-1} \sum_{s=1}^{k+1} \left\| \rho^{-\frac{s}{2}} Z_k^{[s]} \right\|_2 \left\| \rho^{\frac{s}{2}} w_{k-s} \right\|_2 \nonumber \\
    &= \sum_{k=0}^{t-1} \sum_{s=1}^{k+1} \xi_{1+t-1-k} \left\| \rho^{-\frac{s}{2}} Z_k^{[s]} \right\|_2 \xi_{1+t-1-k}^{-1} \left\| \rho^{\frac{s}{2}} w_{k-s} \right\|_2 \nonumber \\
    &\leq \sqrt{\sum_{k=0}^{t-1} \sum_{s=1}^{k+1} \xi_{t-k}^2 \left\| \rho^{-\frac{s}{2}} Z_k^{[s]} \right\|_2^2 } \sqrt{\sum_{k=0}^{t-1} \sum_{s=1}^{k+1} \xi_{t-k}^{-2} \left\| \rho^{\frac{s}{2}} w_{k-s} \right\|_2^2 } \label{eq:appendix_olc_bound_l_1} \\
    &= \underbrace{\sqrt{\sum_{k=0}^{t-1} \xi_{t-k}^2 \sum_{s=1}^{k+1} \left\| \rho^{-\frac{s}{2}} Z_k^{[s]} \right\|_2^2 }}_{(a)} \underbrace{\sqrt{\sum_{k=0}^{t-1} \xi_{t-k}^{-2} \sum_{s=1}^{k+1} \left\| \rho^{\frac{s}{2}} w_{k-s} \right\|_2^2 }}_{(b)}, \nonumber
  \end{align}
  where~\cref{eq:appendix_olc_bound_l_1} follows from the Cauchy-Schwarz
  inequality. The specific choice of weighted norms on $\calX$ and $\calH$ allow
  us to bound the terms (a) and (b) in terms of $\| h_t - \tilde{h}_t \|_\calH$.
  We can bound the term (a) using the definition of $Z_k^{[s]}$, $\| \cdot
  \|_\calX$, and $\| \cdot \|_\calH$ as
  \begin{align}
    \sqrt{\sum_{k=0}^{t-1} \xi_{t-k}^2 \sum_{s=1}^{k+1} \left\| \rho^{-\frac{s}{2}} Z_k^{[s]} \right\|_2^2 }
    &= \sqrt{\sum_{k=0}^{t-1} \xi_{t-k}^2 \sum_{s=1}^{k+1} \rho^{-s} \left\| \wt{F}^{t-k-1} G (M_k^{[s]} - \wt{M}_k^{[s]}) \right\|_2^2 } \nonumber \\
    &\leq \sqrt{\sum_{k=0}^{t-1} \xi_{t-k}^2 \sum_{s=1}^{k+1} \rho^{-s} \left\| \wt{F}^{t-k-1} G (M_k^{[s]} - \wt{M}_k^{[s]}) \right\|_F^2 } \label{eq:appendix_olc_bound_l_3} \\
    &\leq \| h_t - \tilde{h}_t \|_\calH, \label{eq:appendix_olc_bound_l_4}
  \end{align}
  where~\cref{eq:appendix_olc_bound_l_3} follows from part 1
  of~\cref{lemma:matrix_norms} and~\cref{eq:appendix_olc_bound_l_4} follows from
  the definitions of $\| \cdot \|_\calX$ and $\| \cdot \|_\calH$. Using $\| w_t
  \|_2 \leq W$ for all rounds $t$, we can bound the term (b) as
  \begin{align}
    \sqrt{\sum_{k=0}^{t-1} \xi_{t-k}^{-2} \sum_{s=1}^{k+1} \left\| \rho^{\frac{s}{2}} w_{k-s} \right\|_2^2 }
    &\leq W \sqrt{\sum_{k=0}^{t-1} \xi_{t-k}^{-2} \sum_{s=1}^{k+1} \rho^s } \nonumber \\
    &\leq W \sqrt{\sum_{k=0}^{t-1} \xi_{t-k}^{-2} \frac{\rho (1 - \rho^{k+1})}{1 - \rho}} \nonumber \\
    &\leq W (1 - \rho)^{-1}, \label{eq:appendix_olc_bound_l_2}
  \end{align}
  where~\cref{eq:appendix_olc_bound_l_2} follows from the definition of
  $(\xi_k)$~(\cref{eq:appendix_olc_xi}).
  Substituting~\cref{eq:appendix_olc_bound_l_2,eq:appendix_olc_bound_l_4}
  in~\cref{eq:appendix_olc_bound_l_1}, we have
  \begin{equation*}
    \left\| s_t - \tilde{s}_t \right\|_2 \leq W (1 - \rho)^{-1} \| h_t - \tilde{h}_t \|_\calH.
  \end{equation*}

  Similarly,
  \begin{align*}
    \left\| u_t - \tilde{u}_t \right\|
    &= \left\| K (s_t - \tilde{s}_t) + \sum_{s=1}^{t+1} (M_t^{[s]} - \wt{M}_t^{[s]}) w_{t-s} \right\|_2 \\
    &\leq O \left( W \kappa (1 - \rho)^{-1} \left\| h_t - \tilde{h}_t \right\|_\calH \right),
  \end{align*}
  where the last inequality follows from our assumption that $\| K \|_2 \leq
  \kappa$ and the above inequality for $\| s_t - \tilde{s}_t \|_2$. This
  completes the proof.
\end{proof}

\begin{lemma}\label{lemma:appendix_olc_Lcirc}
  The Lipschitz constant of $\circfn{f}_t$ can be bounded above as
  \begin{equation*}
    \circfn{L} \leq O \left( L_0 D_\calX W \kappa^5 (1 - \rho)^{-\frac32} \right),
  \end{equation*}
  where $D_\calX$ is defined in~\cref{lemma:appendix_olc_dx}.
\end{lemma}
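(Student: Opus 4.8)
The plan is to feed \cref{thm:lcirc} the bounds on $\|A^k\|$ already established and then combine with the bound on $L$ from \cref{lemma:appendix_olc_L}. Since the online linear control instance follows linear sequence dynamics with the $\xi$-weighted $2$-norm, \cref{thm:lcirc} gives $\circfn{L} \leq L \bigl( \sum_{k=0}^\infty \|A^k\|^2 \bigr)^{1/2}$, so it suffices to bound the series $\sum_{k=0}^\infty \|A^k\|^2$ and then to substitute the Lipschitz bound on $f_t$.

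First I would bound the series. The $k=0$ term is $\|A^0\|^2 = 1$, and an argument analogous to the proof of \cref{lemma:appendix_olc_Ak} (writing out $A(h)$ explicitly for $h$ of unit norm and using $\|G\|_2 \leq \kappa$, $\|\wt{F}\|_2 \leq \kappa^2 \rho$ together with the weights $\xi$) shows $\|A\| \leq O(\kappa^2)$, so the $k=1$ term is $O(\kappa^4)$. For $k \geq 2$, \cref{lemma:appendix_olc_Ak} gives $\|A^k\| \leq O(\kappa^4 \rho^{k/2})$, hence
\begin{equation*}
  \sum_{k=2}^\infty \|A^k\|^2 \leq O(\kappa^8) \sum_{k=2}^\infty \rho^k \leq O\bigl( \kappa^8 (1-\rho)^{-1} \bigr).
\end{equation*}
Adding the three contributions, $\sum_{k=0}^\infty \|A^k\|^2 \leq O\bigl(\kappa^8 (1-\rho)^{-1}\bigr)$, and therefore $\bigl( \sum_{k=0}^\infty \|A^k\|^2 \bigr)^{1/2} \leq O\bigl(\kappa^4 (1-\rho)^{-1/2}\bigr)$.

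Finally I would substitute the bound $L \leq O\bigl(L_0 D_\calX W \kappa (1-\rho)^{-1}\bigr)$ from \cref{lemma:appendix_olc_L}, yielding
\begin{equation*}
  \circfn{L} \leq L \cdot O\bigl( \kappa^4 (1-\rho)^{-1/2} \bigr) \leq O\bigl( L_0 D_\calX W \kappa^5 (1-\rho)^{-3/2} \bigr),
\end{equation*}
which is the claimed bound. The only mild subtlety is that \cref{lemma:appendix_olc_Ak} is stated only for $k \geq 2$, so the $k = 0, 1$ terms of the series must be handled separately; but since they contribute only a $\rho$-independent polynomial in $\kappa$, they are dominated by the geometric tail and do not affect the final dependence on $(1-\rho)$. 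I expect no real obstacle here — it is a routine combination of \cref{thm:lcirc}, \cref{lemma:appendix_olc_Ak}, and \cref{lemma:appendix_olc_L}.
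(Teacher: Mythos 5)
Your proposal is correct and follows essentially the same route as the paper: apply \cref{thm:lcirc} for linear sequence dynamics, bound $\bigl(\sum_{k=0}^\infty \|A^k\|^2\bigr)^{1/2}$ via \cref{lemma:appendix_olc_Ak}, and substitute the bound on $L$ from \cref{lemma:appendix_olc_L}. Your explicit handling of the $k=0,1$ terms (which \cref{lemma:appendix_olc_Ak} does not cover) is a small point of extra care that the paper's one-line proof glosses over, but it does not change the argument.
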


\begin{proof}
  Using ~\cref{lemma:appendix_olc_Ak} that bounds $\| A^k \|$, we have
  \begin{equation*}
    \sqrt{ \sum_{k=0}^\infty \left\| A^k \right\|^2 } \leq O \left( \kappa^4 (1 - \rho)^{-\frac12} \right).
  \end{equation*}
  Using~\cref{thm:lcirc} that bounds $\circfn{L}$ in terms of $L$ and the above,
  we have
  \begin{equation*}
    \circfn{L} \leq O \left( L \kappa^4 (1 - \rho)^{-\frac12} \right) \leq O \left( L_0 D_\calX W \kappa^5 (1 - \rho)^{-\frac32} \right),
  \end{equation*}
  where the last inequality follows from~\cref{lemma:appendix_olc_L}.
\end{proof}

Now we restate and prove~\cref{thm:regret_upper_bound_olc}.

\thmregretupperboundolc*

\begin{proof}
  Using~\cref{thm:regret_upper_bound} and the above lemmas, we can upper bound
  the policy regret of~\cref{alg:ftrl} for the online linear control problem by
  \begin{align*}
    & O \left( \sqrt{ \frac{D}{\alpha} T L \circfn{L} H_2} \right) \\
    &= O \left( \sqrt{ d \kappa^8 (1-\rho)^{-1} \  T \  \left( L_0 W^2 \kappa^9 (1-\rho)^{-3} \right)^2 \kappa^4 (1-\rho)^{-\frac12} \  \kappa^4 (1-\rho)^{-\frac32} } \right) \\
    &= O \left( L_0 W^2 \sqrt{T} d^\frac12 \kappa^{17} (1-\rho)^{-4.5} \right).
  \end{align*}
  This completes the proof.
\end{proof}

% --------------------------------

\subsection{Existing Regret Bound}\label{subsec:appendix_olc_existing}

The upper bound on policy regret for the online linear control problem in
existing work is given in~\citet[Theorem 5.1]{agarwalBHKS2019online}. The
theorem statement only shows the dependence on $\tilde{L}, W$, and $T$. The
dependence on $d, \kappa$, and $\rho$ can be found in the details of the proof.
Below we give a detailed accounting of all of these terms in their regret bound.

To simplify notation let $\gamma = 1 - \rho$.~\citet{agarwalBHKS2019online}
define
\begin{equation*}
  H = \frac{\kappa^2}{\gamma} \log(T)
  \quad
  \text{ and }
  \quad
  C = \frac{W (\kappa^2 + H \kappa_B \kappa^2 a)}{\gamma (1 - \kappa^2(1-\gamma)^{H+1})} + \frac{\kappa_B \kappa^3 W}{\gamma}.
\end{equation*}
The value of $a$ is not specified in Theorem 5.1. However, from Theorem 5.3 and
the definition of $\calM$ in Algorithm 1 their paper, we can infer that $a =
\kappa_B \kappa^3$.

The final regret bound is obtained by summing Equations 5.1, 5.3, and 5.4. Given
the definition of $H$ above, we have that
\begin{equation*}
  (1 - \gamma)^{H+1} \leq \exp(- \kappa^2 \log T) = T^{-\kappa^2}.
\end{equation*}
So, the dominant term in the regret bound is Equation 5.4, which is
\begin{equation*}
  O \left( L_0 W C d^{\frac{3}{2}} \kappa_B^2 \kappa^6 H^{2.5} \gamma^{-1} \sqrt{T} \right).
\end{equation*}

Substituting the values of $H$ and $C$ from above and collecting terms, we have
that the upper bound on policy regret in existing work~\citep[Theorem
5.1]{agarwalBHKS2019online} is
\begin{align*}
  & O \left( L_0 W d^{\frac{3}{2}} \sqrt{T} \log(T)^{2.5} \kappa_B^2 \kappa^{11} \gamma^{-3.5} C \right) \\
  &= O \left( L_0 W d^{\frac{3}{2}} \sqrt{T} \log(T)^{2.5} \kappa_B^2 \kappa^{11} \gamma^{-3.5} \left( \frac{W (\kappa^2 + H \kappa_B \kappa^2 a)}{\gamma (1 - \kappa^2(1-\gamma)^{H+1})} + \frac{\kappa_B \kappa^3 W}{\gamma} \right) \right) \\
  &= O \left( L_0 W d^{\frac{3}{2}} \sqrt{T} \log(T)^{2.5} \kappa_B^2 \kappa^{11} \gamma^{-3.5} \left( \frac{W \kappa^2}{\gamma (1 - \kappa^2(1-\gamma)^{H+1})} + \frac{W \kappa_B^2 \kappa^7 \log(T)}{\gamma^2 (1 - \kappa^2(1-\gamma)^{H+1})} + \frac{\kappa_B \kappa^3 W}{\gamma} \right) \right) \\
  &= O \left( L_0 W^2 d^{\frac{3}{2}} \sqrt{T} \log(T)^{2.5} \kappa_B^2 \kappa^{13} \gamma^{-4.5}(1 - \kappa^2(1-\gamma)^{H+1})^{-1} \right) \\
  &\quad + O \left( L_0 W^2 d^{\frac{3}{2}} \sqrt{T} \log(T)^{3.5} \kappa_B^4 \kappa^{18} \gamma^{-5.5} (1 - \kappa^2(1-\gamma)^{H+1})^{-1} \right) \\
  &\quad + O \left( L_0 W^2 d^{\frac{3}{2}} \sqrt{T} \log(T)^{2.5} \kappa_B^3 \kappa^{14} \gamma^{-4.5} \right) \\ 
  &=O \left( L_0 W^2 d^{\frac{3}{2}} \sqrt{T} \log(T)^{3.5} \kappa_B^4 \kappa^{18} \gamma^{-5.5} \right).
\end{align*}
Above we used that $\lim_{T \to \infty} (1 - \kappa^2(1-\gamma)^{H+1})^{-1} = 1$
to simplify the expressions. Therefore, the upper bound on policy regret for the
online linear control problem in existing work is
\begin{equation}\label{eq:olc_existing_regret_bound}
  O \left( L_0 W^2 d^{\frac{3}{2}} \sqrt{T} \log(T)^{3.5} \kappa_B^4 \kappa^{18} \gamma^{-5.5} \right).
\end{equation}

\section{Online Performative Prediction}
\label{sec:appendix_opp}

% --------------------------------

Before formulating the online performative prediction problem in our OCO with
unbounded memory framework, we state the definition of $1$-Wasserstein distance
that we use in our regret analysis. Informally, the $1$-Wasserstein distance is
a measure of the distance between two probability measures.

\begin{definition}[1-Wasserstein Distance]
  Let $(\calZ, d)$ be a metric space. Let $\mathbb{P}(\calZ)$ denote the set of
  Radon probability measures $\nu$ on $\calZ$ with finite first moment. That is,
  there exists $z' \in \calZ$ such that $\E_{z \sim \nu} [d(z, z')] < \infty$.
  The $1$-Wasserstein distance between two probability measures $\nu, \nu' \in
  \mathbb{P}(\calZ)$ is defined as
  \begin{equation*}
    W_1(\nu, \nu') = \sup \{ \E_{z \sim \nu} [f(z)] - \E_{z \sim \nu'} [f(z)]  \},
  \end{equation*}
  where the supremum is taken over all $1$-Lipschitz continuous functions $f :
  \calZ \to \R$.
\end{definition}

\subsection{Formulation as OCO with Unbounded Memory}\label{subsec:appendix_opp_formulation}

Now we formulate the online performative prediction problem in our framework by
defining the decision space $\calX$, the history space $\calH$, and the linear
operators $A : \calH \to \calH$ and $B : \calW \to \calH$. Then, we define the
functions $f_t : \calH \to \R$ in terms of $l_t$ and finally, prove an upper
bound on the policy regret. For notational convenience, let $(y_k)$ denote the
sequence $(y_0, y_1, \dots)$.

Let $\rho \in (0, 1)$. Let the decision space $\calX \subseteq \R^d$ be closed
and convex with $\| \cdot \|_\calX = \| \cdot \|_2$. Let the history space
$\calH$ be the $\ell^1$-direct sum of countably infinte number of copies of
$\calX$. Define the linear operators $A : \calH \to \calH$ and $B : \calX \to
\calH$ as
\begin{equation*}
  A((y_0, y_1, \dots)) = (0, \rho y_0, \rho y_1, \dots)
  \quad
  \text{ and }
  \quad
  B(x) = (x, 0, \dots).
\end{equation*}
Note that the problem is an OCO with $\rho$-discounted infinite memory problem
and follows linear sequence dynamics with the $1$-norm~(\cref{def:sequence}).

Given a sequence of decisions $(x_k)_{k=1}^t$, the history is $h_t = (x_t, \rho x_{t-1}, \dots, \rho^{t-1} x_1, 0, \dots)$ and the data distribution $p_t = p_t(h_t)$ satisfies:
\begin{equation}\label{eq:opp_dist_pt}
  z \sim p_t \text{ iff } z \sim \sum_{k=1}^{t-1} (1-\rho) \rho^{k-1} (\xi + F x_{t-k}) + \rho^t p_1. 
\end{equation}
This follows from the recursive definition of $p_t$ and parametric assumption about $\calD(x)$. Define the functions $f_t : \calH \to [0,1]$ by
\begin{equation*}
  f_t(h_t) = \E_{z \sim p_t} [ l_t(x_t, z) ]. 
\end{equation*}

With the above formulation and definition of $f_t$, the original goal of
minimizing the difference between the algorithm's total loss and the total loss
of the best fixed decision is equivalent to minimizing the policy regret,
\begin{equation*}
  \sum_{t=1}^T f_t(h_t) - \min_{x \in \calX} \sum_{t=1}^T \circfn{f}_t(x).
\end{equation*}

% --------------------------------

\subsection{Regret Analysis}\label{subsec:appendix_opp_regret_analysis}

\begin{lemma}\label{lemma:appendix_opp_Ak}
  The operator norm $\| A^s \|$ is bounded above as
  \begin{equation*}
    \| A^s \| \leq O \left( \rho^s \right).
  \end{equation*}
\end{lemma}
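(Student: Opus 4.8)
The plan is to compute $\|A^s\|$ directly from the definition of the operator $A$ and the $1$-norm on $\calH$, without invoking any auxiliary lemma. First I would recall that in this formulation $\calH$ is the (unweighted) $\ell^1$-direct sum of countably many copies of $\calX = (\R^d, \|\cdot\|_2)$, so that every $y = (y_0, y_1, \dots) \in \calH$ satisfies $\|y\|_\calH = \sum_{k \geq 0} \|y_k\|_2$, and that $A$ acts by $A((y_0, y_1, \dots)) = (0, \rho y_0, \rho y_1, \dots)$, i.e.\ a right shift followed by multiplication by $\rho$. A one-line induction on $s$ then gives the explicit formula for $A^s$: the sequence $A^s((y_0, y_1, \dots))$ has its first $s$ coordinates equal to $0$ and its coordinate in slot $s+k$ equal to $\rho^s y_k$ for each $k \geq 0$.

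Next I would evaluate the norm. Since the weights in the $1$-norm are all equal to $1$, for any $y = (y_0, y_1, \dots) \in \calH$ we have
\begin{equation*}
  \left\| A^s((y_0, y_1, \dots)) \right\|_\calH = \sum_{k \geq 0} \left\| \rho^s y_k \right\|_2 = \rho^s \sum_{k \geq 0} \left\| y_k \right\|_2 = \rho^s \left\| (y_0, y_1, \dots) \right\|_\calH.
\end{equation*}
Taking the supremum over all $y \in \calH$ with $\|y\|_\calH \leq 1$ yields $\|A^s\| = \rho^s$, which is in particular $O(\rho^s)$ as claimed.

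I do not anticipate any real obstacle: the argument is a direct computation. The only things to be careful about are the bookkeeping of the index shift (the block $\rho^s y_k$ occupies coordinate $s+k$) and the observation that, unlike the online linear control formulation, here there are no exponentially growing weights $\xi_k$ to track, so the bound is in fact an exact identity rather than merely an inequality.
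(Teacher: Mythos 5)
Your proof is correct and takes essentially the same approach as the paper: both compute the action of $A^s$ as a shift-by-$s$ composed with multiplication by $\rho^s$, evaluate the $\ell^1$-norm of the image, and conclude $\|A^s\| = \rho^s$ exactly. The only cosmetic difference is that the paper parameterizes the unit-norm input as $(x_0, \rho x_1, \rho^2 x_2, \dots)$ while you work with an arbitrary $y \in \calH$, which changes nothing in the computation.
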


\begin{proof}
  Recall the definition of $\calH$ and $\| \cdot \|_\calH$. Let
  \begin{equation*}
    (y_0, y_1, \dots) = (x_0, \rho x_1, \rho^2 x_2, \dots)
  \end{equation*}
  be an element of $\calH$ with unit norm, i.e.,
  \begin{equation*}
    \sum_{k=0}^\infty \| y_k \| = 1.
  \end{equation*}
  From the definition of the operator $A$, we have
  \begin{equation*}
    A^s((y_0, y_1, \dots)) = (0, \dots, 0, \rho^s x_0, \rho^{s+1} x_1, \dots).
  \end{equation*}
  Now we bound $\| A^s \|$ as follows. By definition of $A^s$ and $\| \cdot
  \|_\calH$, we have
  \begin{equation*}
    \| A^s((y_0, y_1, \dots)) \|
    = \sum_{k=0}^\infty \rho^{s+k} \| x_k \|
    = \rho^s \sum_{k=0}^\infty \rho^k \| x_k \|
    = \rho^s \sum_{k=0}^\infty \| y_k \|
    = \rho^s. \qedhere
  \end{equation*}
\end{proof}

\begin{lemma}\label{lemma:appendix_opp_H2}
  The $1$-effective memory capacity is bounded above as
  \begin{equation*}
    H_2 \leq O \left( (1 - \rho)^{-2} \right).
  \end{equation*}
\end{lemma}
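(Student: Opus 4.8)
The plan is to turn the definition of the effective memory capacity into a convergent power series by substituting the operator-norm estimate already proved in \cref{lemma:appendix_opp_Ak}. Since this online performative prediction instance follows linear sequence dynamics with the $1$-norm, the relevant quantity in \cref{thm:regret_upper_bound} is the $1$-effective memory capacity $H_1 = \sum_{k=0}^\infty k \|A^k\|$, so it suffices to bound $H_1$; the displayed bound on $H_2$ then follows immediately because $H_p$ is nonincreasing in $p$, hence $H_2 \leq H_1$.

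First I would invoke \cref{lemma:appendix_opp_Ak}, which gives $\|A^k\| \leq O(\rho^k)$ for every $k \geq 0$. Plugging this into \cref{def:Hp} yields $H_1 \leq O\!\left( \sum_{k=0}^\infty k \rho^k \right)$. Second, I would evaluate (or simply upper bound) this series: since $\rho \in (0,1)$, differentiating the geometric series $\sum_{k \geq 0} \rho^k = (1-\rho)^{-1}$ and multiplying by $\rho$ gives the closed form $\sum_{k \geq 0} k \rho^k = \rho (1-\rho)^{-2}$, which in particular converges. Therefore $H_1 \leq O\!\left( \rho (1-\rho)^{-2} \right) \leq O\!\left( (1-\rho)^{-2} \right)$, using $\rho < 1$, and consequently $H_2 \leq H_1 \leq O\!\left( (1-\rho)^{-2} \right)$.

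This argument is entirely routine and I do not expect any real obstacle; the only mild point of care is the justification of the closed form (and convergence) of $\sum_k k \rho^k$, which is standard. The computation parallels \cref{lemma:appendix_olc_H2} for online linear control, the difference being that here the operator norms $\|A^k\|$ decay purely geometrically, with no polynomial-in-$k$ prefactor, and we work with the $1$-norm ($p = 1$) rather than the $2$-norm.
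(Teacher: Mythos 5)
Your proof is correct and takes essentially the same route as the paper's: both substitute the bound $\|A^k\| \leq O(\rho^k)$ from \cref{lemma:appendix_opp_Ak} into $\sum_{k=0}^\infty k \|A^k\|$ and evaluate the resulting arithmetico-geometric series as $\rho(1-\rho)^{-2} \leq O((1-\rho)^{-2})$. Your additional observation that $H_2 \leq H_1$ because $H_p$ is nonincreasing in $p$ also cleanly reconciles the lemma's prose (which speaks of the $1$-effective memory capacity) with the $H_2$ appearing in the displayed bound, a mismatch the paper's own proof leaves unaddressed.
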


\begin{proof}
  Using~\cref{lemma:appendix_opp_Ak} to bound $\| A^k \|$, we have
  \begin{equation*}
    H_1 = \sum_{k=0}^\infty k \| A^k \| = \sum_{k=0}^\infty k \rho^k \leq O \left( (1 - \rho)^{-2} \right) . \qedhere
  \end{equation*}
\end{proof}

\begin{lemma}\label{lemma:appendix_opp_D}
  Suppose $R : \calX \to \R$ is defined by $R(x) = \frac12 \| x \|_\calX^2$.
  Then, it is $1$-strongly-convex and $D = \max_{x, \tilde{x} \in \calX} |R(x) -
  R(\tilde{x})| \leq D_\calX^2$.
\end{lemma}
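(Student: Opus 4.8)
The plan is to verify the two assertions separately; both follow immediately from the definitions of $R$ and $\calX$, mirroring the argument already used for~\cref{lemma:appendix_olc_D}.

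First, for strong convexity: since $\| \cdot \|_\calX = \| \cdot \|_2$ on $\calX \subseteq \R^d$, the function $R(x) = \frac12 \| x \|_2^2$ has Hessian equal to the $d \times d$ identity matrix, hence it is $1$-strongly-convex with respect to $\| \cdot \|_\calX$ by definition (equivalently, $R(y) - R(x) - \langle \nabla R(x), y - x \rangle = \frac12 \| y - x \|_2^2$ for all $x, y \in \calX$). This needs no further work.

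Second, for the bound on $D$: since $R \geq 0$, for any $x, \tilde{x} \in \calX$ we have $|R(x) - R(\tilde{x})| \leq \max\{R(x), R(\tilde{x})\} \leq \max_{x \in \calX} \frac12 \| x \|_\calX^2$. Using $\| \cdot \|_\calX = \| \cdot \|_2$ and the assumption $\| x \|_2 \leq D_\calX$ for all $x \in \calX$, this is at most $\frac12 D_\calX^2 \leq D_\calX^2$, and therefore $D \leq D_\calX^2$ as claimed.

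There is no real obstacle here. The only subtlety worth flagging is to state the strong-convexity claim with respect to the correct norm $\| \cdot \|_\calX = \| \cdot \|_2$, which is exactly the norm entering~\cref{alg:ftrl} and~\cref{thm:regret_upper_bound} when this lemma is applied downstream in the online performative prediction regret bound.
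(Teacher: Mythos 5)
Your proof is correct and follows essentially the same route as the paper's (which is even terser, simply noting that $R$ is $1$-strongly-convex by definition and that $\| x \|_\calX \leq D_\calX$ gives $D \leq D_\calX^2$). Your extra details — the Hessian/quadratic-identity justification of strong convexity and the chain $|R(x) - R(\tilde{x})| \leq \max\{R(x),R(\tilde{x})\} \leq \tfrac12 D_\calX^2 \leq D_\calX^2$ — are accurate elaborations of the same argument.
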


\begin{proof}
  Note that $R$ is $1$-strongly-convex by definition. By the assumption that $\|
  x \|_\calX \leq D_\calX$ for all $x \in \calX$, we have that $D \leq
  D_\calX^2$.
\end{proof}

\begin{lemma}\label{lemma:appendix_opp_L}
  The Lipschitz constant of $f_t$ can be bounded above as
  \begin{equation*}
    L \leq O \left( L_0 \frac{1-\rho}{\rho} \| F \|_2 \right).
  \end{equation*}
\end{lemma}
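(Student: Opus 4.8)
The plan is to bound $|f_t(h) - f_t(\tilde h)|$ for arbitrary $h = (y_k)_{k \geq 0}$ and $\tilde h = (\tilde y_k)_{k \geq 0}$ in $\calH$ by splitting it into a contribution from the change in the decision argument of $l_t$ and a contribution from the change in the induced data distribution, controlling the latter via the $1$-Wasserstein distance. Recovering the decisions from the history as $x_t = y_0$ and $x_{t-k} = y_k/\rho^k$ for $k \geq 1$ — so that $f_t(h) = \E_{z\sim p_t(h)}[l_t(y_0,z)]$ with $p_t(h)$ given by~\cref{eq:opp_dist_pt} — the triangle inequality gives
\[
  |f_t(h) - f_t(\tilde h)|
  \leq \E_{z\sim p_t(h)}\bigl[|l_t(y_0,z) - l_t(\tilde y_0,z)|\bigr]
  + \Bigl| \E_{z\sim p_t(h)}[l_t(\tilde y_0,z)] - \E_{z\sim p_t(\tilde h)}[l_t(\tilde y_0,z)] \Bigr|.
\]
The first summand is at most $L_0\|y_0 - \tilde y_0\|_2$ by $L_0$-Lipschitz continuity of $l_t$ in its first argument; since $z\mapsto l_t(\tilde y_0, z)$ is $L_0$-Lipschitz, the second is at most $L_0\,W_1(p_t(h), p_t(\tilde h))$ by the dual characterization of $W_1$ recalled above.

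It remains to bound $W_1(p_t(h), p_t(\tilde h))$, which I regard as the main step. First I would use that $W_1$ is convex under mixtures (immediate from the dual form): since $p_t(h)$ and $p_t(\tilde h)$ are mixtures with identical weights $(1-\rho)\rho^{k-1}$ on $\calD(x_{t-k})$, resp.\ $\calD(\tilde x_{t-k})$, plus a common $p_1$-component, this yields $W_1(p_t(h),p_t(\tilde h)) \leq \sum_{k=1}^{t-1}(1-\rho)\rho^{k-1}\,W_1(\calD(x_{t-k}),\calD(\tilde x_{t-k}))$. Second, since $\calD(x)$ is the law of $\xi + Fx$, for any $1$-Lipschitz $g$ one has $g(\xi+Fx) - g(\xi+F\tilde x) \leq \|F(x-\tilde x)\|_2 \leq \|F\|_2\|x-\tilde x\|_2$ pointwise in $\xi$, so $W_1(\calD(x),\calD(\tilde x)) \leq \|F\|_2\|x-\tilde x\|_2$. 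Substituting $x_{t-k} - \tilde x_{t-k} = \rho^{-k}(y_k-\tilde y_k)$ makes the weights collapse, since $(1-\rho)\rho^{k-1}\cdot\rho^{-k} = \frac{1-\rho}{\rho}$, giving $W_1(p_t(h),p_t(\tilde h)) \leq \|F\|_2\,\frac{1-\rho}{\rho}\sum_{k=1}^{t-1}\|y_k-\tilde y_k\|_2$.

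Combining, and using $\|h-\tilde h\|_\calH = \sum_{k \geq 0}\|y_k-\tilde y_k\|_2$ (the $\ell^1$-direct sum norm),
\[
  |f_t(h) - f_t(\tilde h)|
  \leq L_0\|y_0-\tilde y_0\|_2 + L_0\|F\|_2\frac{1-\rho}{\rho}\sum_{k=1}^{t-1}\|y_k-\tilde y_k\|_2
  \leq L_0\Bigl(1 + \frac{1-\rho}{\rho}\|F\|_2\Bigr)\|h-\tilde h\|_\calH ,
\]
so $L \leq O\bigl(L_0\,\frac{1-\rho}{\rho}\|F\|_2\bigr)$, the distribution term being the dominant one. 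I expect the main obstacle to be precisely this Wasserstein estimate: one has to pick the right coupling for the location-scale family and verify that, after rescaling the mixture weights by $\rho^{-k}$ to convert the $x_{t-k}$ back into the history coordinates $y_k$, the weights stay bounded by the constant $\frac{1-\rho}{\rho}$ rather than growing in $k$ — this is exactly where the $\rho$-discounted dynamics and the unweighted $1$-norm on $\calH$ play off each other. (A minor aside: $f_t$ is well defined on all of $\calH$ because $\sum_k (1-\rho)\rho^{k-1}\cdot\rho^{-k}\|y_k\|_2 = \frac{1-\rho}{\rho}\sum_k\|y_k\|_2 < \infty$, so $p_t(h)$ has finite first moment.)
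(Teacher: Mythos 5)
Your proposal is correct and follows essentially the same route as the paper: the same triangle-inequality split into a decision-argument term and a distribution term bounded by $L_0 W_1(p_t,\tilde p_t)$, followed by the same mixture-convexity and location-scale estimate $W_1(\calD(x),\calD(\tilde x)) \leq \|F\|_2\|x-\tilde x\|_2$, with the weights collapsing to $\tfrac{1-\rho}{\rho}$ against the $\ell^1$ history norm. The only (shared) caveat is that absorbing the additive $L_0\|y_0-\tilde y_0\|_2$ term into $O(L_0\tfrac{1-\rho}{\rho}\|F\|_2)$ implicitly assumes that quantity is $\Omega(1)$, an imprecision also present in the paper's own statement.
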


\begin{proof}
Let $(x_1, \dots, x_t)$ and $(\tilde{x}_1, \dots, \tilde{x}_t)$ be two sequences
of decisions, where $x_k, \tilde{x}_k \in \calX$. Let $h_t$ and $\tilde{h}_t$ be
the corresponding histories, and $p_t$ and $\tilde{p}_t$ be the corresponding
distributions at the end of round $t$. We have
\begin{align*}
  & \left\vert f_t(h_t) - f_t(\tilde{h}_t) \right\vert \\
  &= \left\vert \E_{z \sim p_t} \left[ l_t(x_t, z) \right] - \E_{z \sim \tilde{p}_t} \left[ l_t(\tilde{x}_t, z) \right] \right\vert \\
  &= \left\vert \E_{z \sim p_t} \left[ l_t(x_t, z) \right] - \E_{z \sim p_t} \left[ l_t(\tilde{x}_t, z) \right] + \E_{z \sim p_t} \left[ l_t(\tilde{x}_t, z) \right] - \E_{z \sim \tilde{p}_t} \left[ l_t(\tilde{x}_t, z) \right] \right\vert \\
  &\leq L_0 \| x_t - \tilde{x}_t \|_2 + L_0 W_1 (p_t, \tilde{p}_t),
\end{align*}
where the last inequality follows from the assumptions about the functions $l_t$
and the definition of the Wasserstein distance $W_1$. By definition of
$p_t$~(\cref{eq:opp_dist_pt}), we have
\begin{align*}
  W_1(p_t, \tilde{p}_t)
  &\leq \sum_{k=1}^{t-1} \frac{1-\rho}{\rho} \rho^k \| F \|_2 \| x_{t-k} - \tilde{x}_{t-k} \|_2 \\
  &\leq \frac{1-\rho}{\rho} \| F \|_2 \| h_t - \tilde{h}_t \|_\calH,
\end{align*}
where the last inequality follows from the definition of $\| \cdot \|_\calH$.
Therefore, $L \leq L_0 \frac{1-\rho}{\rho} \| F \|_2$.
\end{proof}

\begin{lemma}\label{lemma:appendix_opp_Lcirc}
  The Lipschitz constant of $f_t$ can be bounded above as
  \begin{equation*}
    \circfn{L} \leq O \left( L_0 \frac{1}{\rho} \| F \|_2 \right).
  \end{equation*}
\end{lemma}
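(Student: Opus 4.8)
The plan is to combine the general Lipschitz‑transfer bound of \cref{thm:lcirc} with the operator‑norm estimate of \cref{lemma:appendix_opp_Ak} and the Lipschitz bound on $f_t$ from \cref{lemma:appendix_opp_L}. Recall that the online performative prediction instance follows linear sequence dynamics with the (unweighted) $1$-norm, so the relevant case of \cref{thm:lcirc} is $p = 1$, giving
\begin{equation*}
  \circfn{L} \;\leq\; L \left( \sum_{k=0}^\infty \| A^k \|^{1} \right)^{\!1} \;=\; L \sum_{k=0}^\infty \| A^k \|.
\end{equation*}

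Next I would plug in \cref{lemma:appendix_opp_Ak}, which states $\| A^k \| \leq O(\rho^k)$, and evaluate the resulting geometric series:
\begin{equation*}
  \sum_{k=0}^\infty \| A^k \| \;\leq\; O\!\left( \sum_{k=0}^\infty \rho^k \right) \;=\; O\!\left( (1-\rho)^{-1} \right).
\end{equation*}
Finally, substituting the bound $L \leq O\!\left( L_0 \frac{1-\rho}{\rho} \| F \|_2 \right)$ from \cref{lemma:appendix_opp_L} yields
\begin{equation*}
  \circfn{L} \;\leq\; O\!\left( L_0 \frac{1-\rho}{\rho} \| F \|_2 \cdot (1-\rho)^{-1} \right) \;=\; O\!\left( L_0 \frac{1}{\rho} \| F \|_2 \right),
\end{equation*}
which is the claimed bound.

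There is no real obstacle here — the proof is a three‑line chain of substitutions. The only point worth flagging is the cancellation of the $(1-\rho)$ factor: the Lipschitz constant $L$ of the history losses carries an extra factor $(1-\rho)$ (coming from the mixing weight $(1-\rho)\rho^{k-1}$ in the distributional dynamics), and this exactly cancels the $(1-\rho)^{-1}$ arising from $\sum_k \|A^k\|$, leaving the clean $1/\rho$ dependence. I would make sure the $O(\cdot)$ constants are tracked consistently so that this cancellation is transparent.
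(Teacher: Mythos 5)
Your proposal is correct and follows exactly the paper's own argument: apply \cref{thm:lcirc} with the $1$-norm, bound $\sum_k \|A^k\|$ by $(1-\rho)^{-1}$ via \cref{lemma:appendix_opp_Ak}, and substitute the bound on $L$ from \cref{lemma:appendix_opp_L} so that the $(1-\rho)$ factors cancel. No differences worth noting.
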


\begin{proof}
  Using~\cref{lemma:appendix_opp_Ak} that bounds $\| A^k \|$, we have
  \begin{equation*}
    \sum_{k=0}^\infty \| A^k \| = (1-\rho)^{-1}.
  \end{equation*}
  Using~\cref{thm:lcirc} that bounds $\circfn{L}$ in terms of $L$ and the above,
  we have
  \begin{equation*}
    \circfn{L} \leq O \left( L (1-\rho)^{-1} \right) = O \left( L_0 \frac{1}{\rho} \| F \|_2 \right),
  \end{equation*}
  where the last equality follows from~\cref{lemma:appendix_opp_L}.
\end{proof}

Now we restate and prove~\cref{thm:regret_upper_bound_opp}.

\thmregretupperboundopp*

\begin{proof}
  Using~\cref{thm:regret_upper_bound} and the above lemmas, we can upper bound
  the policy regret of~\cref{alg:ftrl} for the online performative prediction
  problem by
  \begin{equation*}
    O \left( \sqrt{ \frac{D}{\alpha} T L \circfn{L} H_1 } \right)
    = O \left( D_\calX L_0 \| F \|_2 (1-\rho)^{-\frac12} \rho^{-1} \sqrt{T} \right).
  \end{equation*}
  This completes the proof.
\end{proof}

We note that the upper bound can be improved by defining a weighted norm on
$\calH$ similar to the approach in~\cref{sec:appendix_olc}. However, here we
present the looser anaysis for simplicity of exposition.

% --------------------------------
\section{Implementation Details for~\cref{alg:ftrl}}
\label{sec:appendix_implementation}

% --------------------------------

In this section we discuss how to implement~\cref{alg:ftrl} efficiently.

\paragraph{Dimensionality of $\calX$.}
First, note that the decisions $x \in \calX$ could be high-dimensional, e.g., an
unbounded sequence of matrices as in the online linear control problem, but this
is external to our framework and is application dependent. Our framework can be
applied to $\calX$ or to a lower-dimensional decision space $\calX'$. However,
the choice of $\calX'$ and analyzing the difference
\begin{equation*}
  \min_{x' \in \calX'} \sum_{t=1}^{T} \circfn{f}_t(x') - \min_{x \in \calX} \sum_{t=1}^{T} \circfn{f}_t(x)
\end{equation*}
is application dependent. For example, for the online linear control problem one
could consider a restricted class of disturbance-action controllers that operate
on a constant number of past disturbances as opposed to all the past
disturbances, and then analyze the difference between these two policy classes.
See, for example,~\citet[Lemma 5.2]{agarwalBHKS2019online}. 

\paragraph{Computational cost of each iteration of~\cref{alg:ftrl}.}
Now we discuss how to implement each iteration of~\cref{alg:ftrl} efficiently.
We are interested in the computational cost of computing the decision $x_{t+1}$
as a function of $t$. (Given the above discussion about the dimensionality of
$\calX$, we ignore the fact that the dimensionality of the decisions themselves
could depend on $t$.) Therefore, for the purposes of this section we (i) use
$O(\cdot)$ notation to hide absolute constants and problem parameters excluding
$t$ and $T$; (ii) invoke the operators $A$ and $B$ by calling oracles
$\calO_A(\cdot)$ and $\calO_B(\cdot)$; and (iii) evaluate the functions $f_t$ by
calling oracles $\calO_f(t, \cdot)$. Recall from Assumption~\ref{ass:feedback}
that we assume the learner knows the operators $A$ and $B$, and observes $f_t$
at the end of each round $t$. So, the oracles $\calO_A, \calO_B$, and $\calO_f$
are readily available.

\cref{alg:ftrl} chooses the decision $x_{t+1}$ as
\begin{equation*}
  x_{t+1}
  \in \argmin_{x \in \calX} \sum_{s=1}^t \circfn{f}_s(x) + \frac{R(x)}{\eta}
  = \argmin_{x \in \calX} \underbrace{\sum_{s=1}^t f_s \left( \sum_{k=0}^{s-1} A^k B x \right)}_{= F_t(x)} + \frac{R(x)}{\eta}.
\end{equation*}
Since $F_t(x)$ is a sum of $f_1, \dots, f_t$, evaluating $F_t(x)$ requires
$\Theta(t)$ oracle calls to $\calO_f$. However, this issue is present in FTRL
for OCO and OCO with finite memory as well and is not specific to our framework.
To deal with this issue, one could consider mini-batching
algorithms~\citep{dekelTA2012online,altschulerT2018online,chenYLK2020minimax}
such as~\cref{alg:minibatch_ftrl}.

A na\"ive implementation to evaluate $F_t(x)$ could require $O(t^3)$ oracle
calls to $\calO_A$: for each $s \in [t]$, constructing the argument
$\sum_{k=0}^{s-1} A^k B x$ for $f_s$ could require $k$ oracle calls to $\calO_A$
to compute $A^k B x$, for a total of $O(s^2)$ oracle calls. However, $F_t(x)$
can be evaluated with just $O(t)$ oracle calls to $\calO_A$ by constructing the
arguments incrementally. For $t \geq 0$, define $\Gamma_t : \calX \to \calH$ as
\begin{align*}
  \Gamma_0(x) &= B x \\
  \Gamma_t(x) &= A \left( \Gamma_{t-1}(x) \right) \quad \text{ for } t \geq 1.
\end{align*}
Note that $\Gamma_t(B x) = A^t B x$. Also, for $t \geq 1$, define $\Phi_t :
\calX \to \calH$ as
\begin{align*}
  \Phi_1(x) &= \Gamma_0(x) \\
  \Phi_t(x) &= \Phi_{t-1}(x) + \Gamma_{t-1}(x) \quad \text{ for } t \geq 2.
\end{align*}
Note that $\Phi_s(x) = \sum_{k=0}^{s-1} A^k B x$ is the argument for $f_s$.
These can be constructed incrementally as follows.
\begin{enumerate}
  \item Construct $\Gamma_0(x)$ using one oracle call to $\calO_B$.
  \item For $s = 1$,
  \begin{enumerate}
    \item Construct $\Phi_1(x) = \Gamma_0(x)$.
    \item Construct $\Gamma_1(x)$ from $\Gamma_0(x)$ using one oracle call to
    $\calO_A$. 
  \end{enumerate}
  \item For $s \geq 2$,
  \begin{enumerate}
    \item Construct $\Phi_s(x)$ by adding $\Phi_{s-1}(x)$ and $\Gamma_{s-1}(x)$.
    This can be done in $O(1)$ time. Recall from our earlier discussion that
    $O(\cdot)$ hides absolute constants and problem parameters excluding $t$ and
    $T$.
    \item Construct $\Gamma_s(x)$ from $\Gamma_{s-1}(x)$ using one oracle call
    to $\calO_A$.
  \end{enumerate}
\end{enumerate}
By incrementally constructing $\Phi_s(x)$ as above, we can evaluate $F_t(x)$ in
$O(t)$ time with $O(1)$ oracle calls to $\calO_B$, $O(t)$ oracle calls to
$\calO_A$, and $O(t)$ oracle calls to $\calO_f$.

\paragraph{Memory usage of~\cref{alg:ftrl}.}
We end with a brief discussion of the memory usage of~\cref{alg:ftrl}.  We are
interested in the memory usage of computing the decision $x_{t+1}$ as a function
of $t$. (Given the discussion about the dimensionality of $\calX$ at the start
of this section, we ignore the fact that the dimensionality of the decisions
themselves could depend on $t$.) For each $t \in [T]$, the memory usage could be
as low as $O(1)$ (if, for example, $\calX \subseteq \R^d$, and $A, B \in \R^{d
\times d}$, which implies that $\Phi_t(x)$ is a $d$-dimensional vector) or as
high as $O(t)$ (if, for example, $\Phi_t(x)$ is a $t$-length sequence of
$d$-dimensional vectors).  However, the memory usage is already $\Omega(t)$ to
store the functions $f_1, \dots, f_t$. Therefore,~\cref{alg:ftrl} only incurs a
constant factor overhead.

\section{An Algorithm with A Low Number of Switches: Mini-Batch FTRL}
\label{sec:appendix_minibatchftrl}

% --------------------------------

In this section we present an algorithm~(\cref{alg:minibatch_ftrl}) for OCO with
unbounded memory that provides the same upper bound on policy regret
as~\cref{alg:ftrl} while guaranteeting a small number of
switches.~\cref{alg:minibatch_ftrl} combines FTRL on the functions
$\circfn{f}_t$ with a mini-batching approach. First, it divides rounds into
batches of size $S$, where $S$ is a parameter. Second, at the start of batch $b
\in \{1, \dots, \ceil{\nicefrac{T}{S}} \}$, it performs FTRL on the functions
$\{ g_1, \dots, g_b \}$, where $g_i$ is the average of the functions
$\circfn{f}_t$ in batch $i$. Then, it uses this decision for the entirety of the
current batch. By design,~\cref{alg:minibatch_ftrl} switches decisions at most
$O(\nicefrac{T}{S})$ times.  This algorithm is insipired by similar algorithms
for online learning and
OCO~\citep{dekelTA2012online,altschulerT2018online,chenYLK2020minimax}.

\begin{algorithm}[ht]
  \caption{\texttt{Mini-Batch FTRL}}
  \label{alg:minibatch_ftrl}
  \DontPrintSemicolon
  \SetKwInOut{Input}{Input}
  \SetKwInOut{Output}{Output}

  \Input{ Time horizon $T$, step size $\eta$, $\alpha$-strongly-convex regularizer $R : \calX \to \R$, batch size $S$.}

  Initialize history $h_0 = 0$. \\

  \For{$t = 1, 2, \dots, T$}{
    \If{$t \mod S = 1$}{
      Let $N_t = \{ 1, \dots, \ceil{\frac{t}{S}} \}$ denote the number of batches so far. \\
      For $b \in N_t$, let $T_b = \{ (b-1)S + 1, \dots, b S \}$ denote the rounds in batch $b$. \\
      For $b \in N_t$, let $g_b = \frac{1}{S} \sum_{s \in T_b} \circfn{f}_s$. denote the average of the functions in batch $b$. \\
      Learner chooses $x_t \in \argmin_{x \in \calX} \sum_{b \in N_t} g_b(x) + \frac{R(x)}{\eta}$.
    }
    \Else{
      Learner chooses $x_t = x_{t-1}$.
    }
    Set $h_t = A h_{t-1} + B x_t$. \\
    Learner suffers loss $f_t(h_t)$ and observes $f_t$.
  }
\end{algorithm}

\begin{restatable}{theorem}{thmregretupperboundminibatch}\label{thm:regret_upper_bound_minibatch}
  Consider an online convex optimization with unbounded memory problem specified
  by $(\calX, \calH, A, B)$. Let the regularizer $R : \calX \to \R$ be
  $\alpha$-strongly-convex and satisfy $|R(x) - R(\tilde{x})| \leq D$ for all
  $x, \tilde{x} \in \calX$.~\cref{alg:minibatch_ftrl} with batch size $S$ and
  step-size $\eta$ satisfies
  \begin{equation*}
    R_T(\texttt{Mini-Batch FTRL}) \leq \frac{S D}{\eta} + \eta \frac{T \circfn{L}^2}{\alpha} + \eta \frac{T L \circfn{L} H_1}{S \alpha}.
  \end{equation*}
  If $\eta = \sqrt{\frac{\alpha S D}{T \circfn{L} \left( \frac{L H_1}{S} +
  \circfn{L} \right)}}$, then
  \begin{equation*}
    R_T(\texttt{Mini-Batch FTRL}) \leq O \left( \sqrt{\frac{D}{\alpha} T \left( L \circfn{L} H_1 + S \circfn{L}^2 \right)} \right).
  \end{equation*}
\end{restatable}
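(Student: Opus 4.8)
The plan is to follow the template of the proof of \cref{thm:regret_upper_bound}: fix $x^* \in \argmin_{x \in \calX}\sum_{t=1}^T \circfn{f}_t(x)$ and decompose
$$R_T(\texttt{Mini-Batch FTRL}) = \underbrace{\sum_{t=1}^T \big(f_t(h_t) - \circfn{f}_t(x_t)\big)}_{(a)} \;+\; \underbrace{\sum_{t=1}^T \big(\circfn{f}_t(x_t) - \circfn{f}_t(x^*)\big)}_{(b)}.$$
Let $N$ be the number of batches and, for cleanliness, assume $S$ divides $T$ (as the paper does elsewhere; the general case only changes constants), so $N = T/S$. Let $\bar{x}_b$ denote the single decision \cref{alg:minibatch_ftrl} plays throughout batch $b$, and $g_b = \frac{1}{S}\sum_{s \in T_b}\circfn{f}_s$. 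Relative to \cref{thm:regret_upper_bound}, two new features must be used: in (b), the decision is constant within a batch, so the FTRL regret is paid once per batch, not once per round; and in (a), the decision changes only $N-1$ times, which is exactly what turns $H_1$ into $H_1/S$.

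For term (b): since $\sum_{t=1}^T \circfn{f}_t = S\sum_{b=1}^N g_b$, the benchmark $x^*$ also minimizes $\sum_b g_b$, so $(b) = S\big(\sum_{b=1}^N g_b(\bar{x}_b) - \min_{x}\sum_{b}g_b(x)\big)$. The $g_b$ are convex and $\circfn{L}$-Lipschitz (averages of $\circfn{L}$-Lipschitz convex functions), and $\bar{x}_1,\dots,\bar{x}_N$ are the FTRL iterates on the observed averaged functions $g_1,\dots,g_N$ with regularizer $R/\eta$; \cref{thm:oco_ftrl} then yields both the step bound $\|\bar{x}_{b+1}-\bar{x}_b\|_\calX \le \eta\circfn{L}/\alpha$ and $\sum_b g_b(\bar{x}_b) - \min_x\sum_b g_b(x) \le \frac{D}{\eta} + \eta\frac{N\circfn{L}^2}{\alpha}$. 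Multiplying by $S$ and using $SN = T$ gives $(b) \le \frac{SD}{\eta} + \eta\frac{T\circfn{L}^2}{\alpha}$.

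For term (a): exactly as at the start of the proof of \cref{lemma:dist_actual_ideal_history}, Lipschitzness of $f_t$, $\|B\| \le 1$, and the definition of $h_t$ give $|f_t(h_t) - \circfn{f}_t(x_t)| \le L\sum_{k=0}^{t-1}\|A^k\|\,\|x_{t-k}-x_t\|_\calX$. The subtlety is that the per-round estimate $\|x_{t-k}-x_t\|_\calX \le \lceil k/S\rceil\,\eta\circfn{L}/\alpha$ is too weak: its worst case is essentially $k/S+1$, which loses the $1/S$ saving. Instead I would sum over $t$ first and amortize over batch boundaries: $\|x_{t-k}-x_t\|_\calX \le \sum_{j=t-k}^{t-1}\|x_{j+1}-x_j\|_\calX$, and each gap $\|x_{j+1}-x_j\|_\calX$ is reused for at most $k$ values of $t$, so $\sum_{t=k+1}^T\|x_{t-k}-x_t\|_\calX \le k\sum_{j=1}^{T-1}\|x_{j+1}-x_j\|_\calX$; but $x_{j+1}\ne x_j$ only at the $N-1$ batch boundaries, where the jump is at most $\eta\circfn{L}/\alpha$ by the step bound, so $\sum_{j=1}^{T-1}\|x_{j+1}-x_j\|_\calX \le N\eta\circfn{L}/\alpha$. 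Swapping the order of summation in (a),
$$\sum_{t=1}^T \big|f_t(h_t) - \circfn{f}_t(x_t)\big| \;\le\; L\sum_{k=0}^{T-1}\|A^k\|\sum_{t=k+1}^T\|x_{t-k}-x_t\|_\calX \;\le\; \frac{\eta L\circfn{L}N}{\alpha}\sum_{k=0}^{\infty}k\|A^k\| \;=\; \frac{\eta L\circfn{L}N H_1}{\alpha} \;=\; \frac{\eta T L\circfn{L}H_1}{S\alpha},$$
using \cref{def:Hp} for $\sum_k k\|A^k\| = H_1$ and $N = T/S$.

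Adding (a) and (b) gives the first displayed bound. For the second, substitute the stated $\eta$ and simplify by balancing the two variable terms against $SD/\eta$ via AM--GM (i.e. $\frac{c_1}{\eta} + \eta c_2 = 2\sqrt{c_1 c_2}$ at $\eta = \sqrt{c_1/c_2}$ with $c_1 = SD$, $c_2 = \frac{T\circfn{L}^2}{\alpha} + \frac{TL\circfn{L}H_1}{S\alpha}$), obtaining $R_T \le O\big(\sqrt{\frac{D}{\alpha}T(L\circfn{L}H_1 + S\circfn{L}^2)}\big)$. The main obstacle is precisely the amortization in term (a): a per-round distortion estimate in the style of \cref{lemma:dist_actual_ideal_history} is not strong enough, and one must bound the \emph{total} distortion $\sum_t|f_t(h_t)-\circfn{f}_t(x_t)|$ directly, charging each of the $N$ batch-boundary jumps of the decision sequence to the $O(k)$ rounds whose length-$k$ memory window straddles it. The remaining points — the exact indexing of which batches are available at the start of a batch in \cref{alg:minibatch_ftrl}, and the case where $S$ does not divide $T$ — affect only constants.
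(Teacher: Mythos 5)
Your proof is correct and reaches the stated bounds, with the same decomposition into the distortion term (a) and the FTRL term (b) and the same treatment of (b) (FTRL on the averaged, still $\circfn{L}$-Lipschitz, functions $g_b$, paying the regularizer cost $D/\eta$ once per the $N=T/S$ effective rounds and rescaling by $S$). Where you genuinely diverge from the paper is term (a). The paper first argues that the per-round distortion is (in the relevant upper-bound sense) largest in the last batch, reduces $\sum_{t=1}^T$ to $\frac{T}{S}\sum_{t\in T_N}$, and then evaluates the last batch's contribution by reindexing into a triple sum over offsets, batch indices, and within-batch positions whose explicit expansion sums to $H_1$. You instead swap the order of summation over $k$ and $t$, telescope $\|x_{t-k}-x_t\|_\calX$ into per-round jumps, note that only the $N-1$ batch boundaries contribute a nonzero jump of size at most $\eta\circfn{L}/\alpha$, and charge each jump to the at most $k$ rounds whose length-$k$ window straddles it, giving $\sum_{t}\|x_{t-k}-x_t\|_\calX \le k N \eta\circfn{L}/\alpha$ and hence $\eta T L\circfn{L}H_1/(S\alpha)$ directly. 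The two routes land on identical constants; your amortization is arguably cleaner, since it avoids the paper's batch-to-batch monotonicity claim and the bookkeeping of the triple-sum expansion, and your diagnosis that a per-round estimate in the style of \cref{lemma:dist_actual_ideal_history} must be replaced by a bound on the \emph{total} distortion is exactly the point that both proofs turn on. (The paper also notes that this amortized argument does not extend cleanly to $H_p$ for $p>1$ under linear sequence dynamics; the same limitation applies to your charging scheme, which relies on additivity of the jumps, so nothing is lost there.) The indexing quirk in \cref{alg:minibatch_ftrl} and the non-divisibility of $T$ by $S$ are, as you say, cosmetic.
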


Setting the batch size to be $S = \nicefrac{L H_1}{\circfn{L}}$ we obtain the
same upper bound on policy regret as~\cref{alg:ftrl} while guaranteeing that the
decisions $x_t$ switch at most $\nicefrac{T \circfn{L}}{L H_1}$ times.

\begin{corollary}
  Consider an online convex optimization with unbounded memory problem specified
  by $(\calX, \calH, A, B)$. Let the regularizer $R : \calX \to \R$ be
  $\alpha$-strongly-convex and satisfy $|R(x) - R(\tilde{x})| \leq D$ for all
  $x, \tilde{x} \in \calX$.~\cref{alg:minibatch_ftrl} with batch size $S =
  \frac{L H_1}{\circfn{L}}$ and step-size $\eta = \sqrt{\frac{\alpha S D}{T
  \circfn{L} \left( \frac{L H_1}{S} + \circfn{L} \right)}}$ satisfies
  \begin{equation*}
    R_T(\texttt{Mini-Batch FTRL}) \leq O \left( \sqrt{\frac{D}{\alpha} T L \circfn{L} H_1} \right).
  \end{equation*}
  Furthermore, the decisions $x_t$ switch at most $\frac{T \circfn{L}}{L H_1}$
  times.
\end{corollary}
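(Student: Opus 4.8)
The plan is to mirror the analysis of \cref{thm:regret_upper_bound}, decomposing the policy regret into a history-mismatch term and a benchmark term, but now accounting for the fact that the decisions are piecewise constant on batches of length $S$. Write $N = \lceil T/S \rceil$ for the number of batches, let $b(t) = \lceil t/S \rceil$ be the index of the batch containing round $t$, and let $y_b$ denote the single decision used throughout batch $b$, so $x_t = y_{b(t)}$. For simplicity I would assume $S \mid T$ (the displayed bound implicitly does, and the general case only adds lower-order slack from the last short batch and from $\lceil T/S\rceil$-versus-$T/S$ rounding). Each batch-averaged loss $g_b = \frac1S\sum_{s \in T_b}\circfn f_s$ is then a convex combination of convex $\circfn L$-Lipschitz functions, hence convex and $\circfn L$-Lipschitz, and $(y_b)_{b=1}^N$ is exactly the sequence of FTRL iterates produced by running the update of \cref{eq:oco_ftrl} on the losses $g_1,g_2,\dots$ with regularizer $R/\eta$. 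Fixing $x^* \in \argmin_{x\in\calX}\sum_t \circfn f_t(x)$, I decompose
\begin{equation*}
  R_T(\texttt{Mini-Batch FTRL}) = \underbrace{\sum_{t=1}^T \left( f_t(h_t) - \circfn f_t(x_t) \right)}_{(a)} + \underbrace{\sum_{t=1}^T \left( \circfn f_t(x_t) - \circfn f_t(x^*) \right)}_{(b)}.
\end{equation*}

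For term $(b)$ I would group by batches: since $x_t = y_b$ for every $t \in T_b$ and $\frac1S\sum_{t\in T_b}\circfn f_t = g_b$, we get $(b) = S\sum_{b=1}^N (g_b(y_b) - g_b(x^*))$, which is $S$ times the static regret (\cref{eq:oco_regret}) of FTRL over $N$ rounds against the $\circfn L$-Lipschitz losses $g_1,\dots,g_N$. Applying \cref{thm:oco_ftrl} gives $\sum_b (g_b(y_b) - g_b(x^*)) \le \frac{D}{\eta} + \eta\frac{N\circfn L^2}{\alpha}$, and multiplying by $S$ (with $SN = T$) yields $(b) \le \frac{SD}{\eta} + \eta\frac{T\circfn L^2}{\alpha}$. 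The same invocation of \cref{thm:oco_ftrl} also gives the per-batch step bound $\|y_{b+1} - y_b\|_\calX \le \eta\frac{\circfn L}{\alpha}$, which is the ingredient for term $(a)$.

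For term $(a)$ I would begin exactly as in \cref{lemma:dist_actual_ideal_history}: by \cref{def:circfn}, Lipschitzness of $f_t$, and Assumption~\ref{ass:norm_B},
\begin{equation*}
  \left| f_t(h_t) - \circfn f_t(x_t) \right| \le L \left\| \sum_{k=0}^{t-1} A^k B (x_{t-k} - x_t) \right\| \le L \sum_{k=0}^{t-1} \| A^k \| \, \| x_{t-k} - x_t \|_\calX,
\end{equation*}
and then bound $\| x_{t-k} - x_t \|_\calX = \| y_{b(t-k)} - y_{b(t)} \|_\calX \le \big( b(t) - b(t-k) \big)\,\eta\frac{\circfn L}{\alpha}$ by the triangle inequality over batch steps. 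The crucial point — and the step I expect to be the main obstacle — is that the naive per-round estimate $b(t) - b(t-k) \le \lceil k/S\rceil$ is too lossy (it already equals $1$ for $k=1$, which would contribute a spurious $\Theta(H_1)$ rather than $H_1/S$); instead one must sum over $t$ first. Swapping the order of summation,
\begin{equation*}
  \sum_{t=1}^T \left| f_t(h_t) - \circfn f_t(x_t) \right| \le \frac{\eta L \circfn L}{\alpha} \sum_{k=0}^{T-1} \| A^k \| \sum_{t=k+1}^{T} \big( b(t) - b(t-k) \big),
\end{equation*}
and the inner sum telescopes: $\sum_{t=k+1}^{T} \big( b(t) - b(t-k) \big) = \sum_{t=T-k+1}^{T} b(t) - \sum_{t=1}^{k} b(t) \le k\lceil T/S\rceil - k \le kT/S$, using $1 \le b(t) \le \lceil T/S\rceil$. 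Plugging this back in and recalling $H_1 = \sum_{k\ge 0} k\|A^k\|$ (\cref{def:Hp}, with the $k=0$ term vanishing) gives $(a) \le \eta\frac{T L \circfn L H_1}{S\alpha}$. For the linear-sequence-dynamics case I would run the identical argument with the $\xi$-weighted $p$-norm, invoking \cref{lemma:sequence_norm_inequality} exactly as in \cref{lemma:dist_actual_ideal_history}, which replaces $H_1$ by $H_p$ throughout.

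Combining $(a)$ and $(b)$ gives the first displayed bound. For the second, I would rewrite it as $\frac{SD}{\eta} + \eta\frac{T\circfn L}{\alpha}\big(\frac{LH_1}{S} + \circfn L\big)$ and balance the two terms; this gives precisely $\eta = \sqrt{\alpha S D / \big(T\circfn L(LH_1/S + \circfn L)\big)}$ and value $2\sqrt{\frac{D}{\alpha}T\circfn L(LH_1 + S\circfn L)} = O\big(\sqrt{\frac{D}{\alpha}T(L\circfn L H_1 + S\circfn L^2)}\big)$. Finally, the switching guarantee in the corollary is immediate since the decision changes only at the $\lceil T/S\rceil$ batch boundaries, so it remains only to substitute $S = LH_1/\circfn L$ into the optimized bound, which collapses both summands to $O\big(\sqrt{\frac{D}{\alpha}TL\circfn L H_1}\big)$. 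Apart from the telescoping estimate for $(a)$, every step is either bookkeeping or a direct appeal to the already-established FTRL facts.
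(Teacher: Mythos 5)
Your proof is correct, and your handling of the history-mismatch term (a) takes a genuinely different route from the paper. The decomposition into (a) and (b), the treatment of (b) as $S$ times the static FTRL regret on the batch-averaged losses $g_b$, the per-batch step bound $\|y_{b+1}-y_b\|\leq \eta\circfn{L}/\alpha$, and the final substitution $S=L H_1/\circfn{L}$ all match the paper. For term (a), however, the paper's proof of \cref{thm:regret_upper_bound_minibatch} first argues that the per-round mismatch $\|h_t-\sum_k A^kBx_t\|$ is monotone across batches at a fixed within-batch offset, reduces the sum over all $T$ rounds to $\frac{T}{S}$ times the sum over the \emph{last} batch, and then bounds that sum by expanding a triple sum $\sum_{o}\sum_{n}\sum_{s} n\|A^{(n-1)S+s+o}\|$ and regrouping it into $H_1$. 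You instead swap the order of summation globally and telescope $\sum_{t=k+1}^{T}\bigl(b(t)-b(t-k)\bigr)\leq kT/S$, which lands directly on $\eta T L\circfn{L}H_1/(S\alpha)$. Your argument is shorter, avoids the somewhat delicate monotonicity claim and the triple-sum bookkeeping, and correctly identifies (as you note) that the naive bound $b(t)-b(t-k)\leq\lceil k/S\rceil$ would be too lossy. One caveat: your closing remark that the identical argument carries over to linear sequence dynamics with the $\xi$-weighted $p$-norm, replacing $H_1$ by $H_p$, does not go through for $p>1$ — the round-$t$ mismatch is then an $\ell^p$ aggregate over $k$, so you cannot swap the sums and telescope in $t$; the paper explicitly notes that a general $H_p$ bound for \texttt{Mini-Batch FTRL} is unclear and only recovers it for OCO with finite memory using the special structure of $A_{\text{finite},m}$. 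This does not affect the corollary, which is stated only with $H_1$.
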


Intuitively, in the OCO with unbounded memory framework each decision $x_t$ is
penalized not just in round $t$ but in future rounds as well. Therefore, instead
of immediately changing the decision, it is prudent to stick to it for a while,
collect more data, and then switch decisions. For the OCO with finite memory
problem, the constant memory length $m$ provides a natural measure of how long
decisions penalized for and when one should switch decisions. In the general
case, this is measured by the quantity $\nicefrac{L H_1}{\circfn{L}}$. Note that
this simplifies to $m$ for OCO with finite memory for all $p$-norms.

\begin{proof}[Proof of~\cref{thm:regret_upper_bound_minibatch}]
  For simplicity, assume that $T$ is a multiple of $S$. Otherwise, the same
  proof works after replacing $\frac{T}{S}$ with $\ceil{\frac{T}{S}}$. Let $x^*
  \in \argmin_{x \in \calX} \sum_{t=1}^T \circfn{f}_t(x)$. Note that we
  can write the regret as
  \begin{align*}
    R_T(\texttt{Mini-Batch FTRL})
    &= \sum_{t=1}^T f_t(h_t) - \min_{x \in \calX} \sum_{t=1}^T \circfn{f}_t(x) \\
    &= \underbrace{\sum_{t=1}^T f_t(h_t) - \circfn{f}_t(x_t)}_{(a)} + \underbrace{\sum_{t=1}^T \circfn{f}_t(x_t) - \circfn{f}_t(x^*)}_{(b)}.
  \end{align*}
  We can bound the term (b) using~\cref{thm:oco_ftrl} for
  mini-batches~\citep{dekelTA2012online,altschulerT2018online,chenYLK2020minimax}
  by
  \begin{equation*}
    \frac{S D}{\eta} + \eta \frac{T \circfn{L}^2}{\alpha}.
  \end{equation*}
  It remains to bound term (a). Let $N = \nicefrac{T}{S}$ denote the number of
  batches and $T_n = \{ (n - 1) S + 1, \dots, n S \}$ denote the rounds in batch
  $n \in [N]$. We can write
  \begin{align*}
    \sum_{t=1}^T f_t(h_t) - \circfn{f}_t(x_t)
    &= \sum_{t=1}^T f_t\left( \sum_{k=0}^{t-1} A^k B x_{t-k} \right) - f_t \left( \sum_{k=0}^{t-1} A^k B x_t \right) & \text{ by~\cref{def:circfn}}\\
    &\leq L \sum_{t=1}^T \left\| \sum_{k=0}^{t-1} A^k B x_{t-k} - \sum_{k=0}^{t-1} A^k B x_t  \right\| & \text{ by Assumption~\ref{ass:lipschitz}} \\
    &\leq \frac{T}{S} L \underbrace{\sum_{t \in T_N} \left\| \sum_{k=0}^{t-1} A^k B x_{t-k} - \sum_{k=0}^{t-1} A^k B x_t  \right\|}_{(c)},
  \end{align*}
  where the last inequality follows because of the following. Consider rounds
  $t_1 = b_1 S + r$ and $t_2 = b_2 S + r$ for $b_1 < b_2$ and $r \in [S]$. Then,
  $\| h_{t_1} - \sum_{k=0}^{t_1 - 1} A^k B x_{t_1} \| \leq \| h_{t_2} -
  \sum_{k=0}^{t_2 - 1} A^k B x_{t_2} \|$ because the latter sums over more terms
  in its history and decisions in consecutive batches have distance bounded
  above by $\nicefrac{\eta \circfn{L}}{\alpha}$~(\cref{thm:oco_ftrl}).
  Therefore, it suffices to show that term (c) is upper bounded by
  $\nicefrac{\eta \circfn{L} H_1}{\alpha}$. We have
  \begin{align*}
    \sum_{t \in T_N} \left\| \sum_{k=0}^{t-1} A^k B x_{t-k} - \sum_{k=0}^{t-1} A^k B x_t  \right\|
    &\leq \sum_{t \in T_N} \sum_{k=0}^{t-1} \left\| A^k B x_{t-k} - A^k B x_t  \right\| \\
    &\leq \sum_{t \in T_N} \sum_{k=0}^{t-1} \| A^k \| \| B \| \| x_{t-k} - x_t \| \\
    &\leq \sum_{t \in T_N} \sum_{k=0}^{t-1} \| A^k \| \| x_{t-k} - x_t \| & \text{ by Assumption~\ref{ass:norm_B}}.
    \intertext{Since the same decision $x_n$ is chosen in all rounds of batch $n$, we can reindex and rewrite}
    \sum_{t \in T_N} \left\| \sum_{k=0}^{t-1} A^k B x_{t-k} - \sum_{k=0}^{t-1} A^k B x_t  \right\|
    &\leq \sum_{t \in T_N} \sum_{k=0}^{t-1} \| A^k \| \| x_{t-k} - x_t \| \\
    &\leq \sum_{o=0}^{S-1} \sum_{n=1}^{N-1} \sum_{s=1}^{S} \| A^{(N-n-1)S+s+o} \| \| x_N - x_n \| \\
    &\leq \eta \frac{\circfn{L}}{\alpha} \sum_{o=0}^{S-1} \sum_{n=1}^{N-1} \sum_{s=1}^{S} (N-n) \| A^{(N-n-1)S+s+o} \| \\
    &= \eta \frac{\circfn{L}}{\alpha} \sum_{o=0}^{S-1} \sum_{n=1}^{N-1} \sum_{s=1}^{S} n \| A^{(n-1)S+s+o} \|,
  \end{align*}
  where the last inequality follows from bounding the distance between decision
  in consecutive batches~\cref{thm:oco_ftrl} and the triangle inequality.
  Expanding the triple sum yields
  \begin{align*}
    & \sum_{o=0}^{S-1} \sum_{n=1}^{N-1} \sum_{s=1}^{S} n \| A^{(n-1)S+s+o} \| \\
    &\leq \| A \| + \dots + \| A^S \| + 2 \| A^{S+1} \| + \dots + 2 \| A^{2S} \| + 3 \| A^{2S+1} \| + \dots + 3 \| A^{3S} \| + \dots \\
    &\quad + \| A^2 \| + \dots + \| A^{S+1} \| + 2 \| A^{S+2} \| + \dots + 2 \| A^{2S+1} \| + 3 \| A^{2S+2} \| + \dots + 3 \| A^{3S+1} \| + \dots \\
    &\quad \vdots \\
    &\quad + \| A^S \| + \dots + \| A^{2S-1} \| + 2 \| A^{2S} \| + \dots + 2 \| A^{3S-1} \| + 3 \| A^{3S} \| + \dots + 3 \| A^{4S-1} \| + \dots,
  \end{align*}
  where each line above corresponds to a value of $o \in \{ 0, \dots, S-1 \}$.
  Adding up these terms yields $H_1$. This completes the proof.
\end{proof}

Note that~\cref{thm:regret_upper_bound_minibatch} only provides an upper bound
on the policy regret for the general case. Unlike~\cref{alg:ftrl}, it is unclear
how to obtain a stronger bound depending on $H_p$ for the case of linear
sequence dynamics with the $\xi$-weighted $p$-norm for $p > 1$. The above proof
can be specialized for this special case, similar to the proofs
of~\cref{thm:lcirc,lemma:dist_actual_ideal_history}, to obtain
\begin{equation*}
  \sum_{t \in T_N} \left\| \sum_{k=0}^{t-1} A^k B x_{t-k} - \sum_{k=0}^{t-1} A^k B x_t  \right\|
  \leq \eta \frac{\circfn{L}}{\alpha} \sum_{o=0}^{S-1} \left( \sum_{n=1}^{N-1} \sum_{s=1}^{S} \left( n \| A^{(n-1)S+s+o} \| \right)^p \right)^{\frac{1}{p}} 
\end{equation*}
and
\begin{align*}
  & \sum_{o=0}^{S-1} \left( \sum_{n=1}^{N-1} \sum_{s=1}^{S} \left( n \| A^{(n-1)S+s+o} \| \right)^p \right)^{\frac{1}{p}} \\
  &\leq \left( \| A \|^p + \dots + \| A^S \|^p + 2^p \| A^{S+1} \|^p + \dots 2^p \| A^{2S} \|^p + 3^p \| A^{2S+1} \|^p + \dots \right)^{\frac{1}{p}} \\
  &\quad + \left( \| A^2 \|^p + \dots + \| A^{S+1} \|^p + 2^p \| A^{S+2} \|^p + \dots 2^p \| A^{2S+1} \|^p + 3^p \| A^{2S+2} \|^p + \dots \right)^{\frac{1}{p}} \\
  &\quad \vdots \\
  &\quad \left( \| A^S \|^p + \dots + \| A^{2S-1} \|^p + 2^p \| A^{2S} \|^p + \dots 2^p \| A^{3S-1} \|^p + 3^p \| A^{3S} \|^p + \dots \right)^{\frac{1}{p}}.
\end{align*}
The above expression cannot be easily simplified to $O(H_p)$. However, for the
special case of OCO with finite memory, which follows linear sequence dynamics
with the $2$-norm, we can do so by leveraging the special structure of the
linear operator $A_{\text{finite},m}$.

\begin{restatable}{theorem}{thmregretupperboundminibatchfinite}\label{thm:regret_upper_bound_minibatch_finite}
  Consider an online convex optimization with finite memory problem with
  constant memory length $m$ specified by $(\calX, \calH = \calX^m,
  A_{\text{finite},m}, B_{\text{finite},m})$. Let the regularizer $R : \calX \to
  \R$ be $\alpha$-strongly-convex and satisfy $|R(x) - R(\tilde{x})| \leq D$ for
  all $x, \tilde{x} \in \calX$.~\cref{alg:minibatch_ftrl} with batch size $m$
  and step-size $\eta = \sqrt{\frac{\alpha m D}{T \circfn{L} \left( L m^\frac12
  + \circfn{L} \right)}}$ satisfies
  \begin{equation*}
    R_T(\texttt{Mini-Batch FTRL}) \leq O \left( \sqrt{\frac{D}{\alpha} T L \circfn{L} m^{\frac{3}{2}}} \right) \leq O \left( m \sqrt{\frac{D}{\alpha} T L^2} \right).
  \end{equation*}
  Furthermore, the decisions $x_t$ switch at most $\frac{T}{m}$ times.
\end{restatable}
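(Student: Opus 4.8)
The plan is to run the proof of \cref{thm:regret_upper_bound_minibatch} but to exploit the fact that the OCO with finite memory problem follows linear sequence dynamics with the $2$-norm, so that the sharpened analysis sketched just before this theorem applies with $p=2$ and batch size $S=m$. As there, write the policy regret as $(a)+(b)$, where $(a)=\sum_{t} f_t(h_t)-\circfn{f}_t(x_t)$ is the idealization error and $(b)=\sum_{t}\circfn{f}_t(x_t)-\circfn{f}_t(x^*)$, and bound $(b)$ by applying \cref{thm:oco_ftrl} to the mini-batched functions $g_b$ (batches of size $m$), which gives $(b)\le \frac{mD}{\eta}+\eta\frac{T\circfn{L}^2}{\alpha}$. (As usual I would first assume $T$ is a multiple of $m$ and otherwise replace $T/m$ by $\ceil{T/m}$.)

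The heart of the argument is bounding $(a)$. By the refined inequality for linear sequence dynamics with the $p$-norm (with $p=2$, $S=m$, $N=\ceil{T/m}$), one has $(a)\le \frac{T}{m}\,L\cdot\eta\frac{\circfn{L}}{\alpha}\sum_{o=0}^{m-1}\bigl(\sum_{n=1}^{N-1}\sum_{s=1}^{m}(n\,\|A_{\text{finite},m}^{(n-1)m+s+o}\|)^2\bigr)^{1/2}$. Now use that $\|A_{\text{finite},m}^{k}\|=1$ for $k\le m$ and $=0$ otherwise. Since $(n-1)m+s+o\ge m+1>m$ for every $n\ge 2$, only the block $n=1$ survives the sum; and for $n=1$ the exponent $s+o$ satisfies $\|A_{\text{finite},m}^{s+o}\|=1$ exactly when $s\le m-o$. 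Hence for a fixed $o$ the inner $\ell^2$-norm equals $\sqrt{m-o}$, and $\sum_{o=0}^{m-1}\sqrt{m-o}=\sum_{j=1}^{m}\sqrt{j}=O(m^{3/2})$. Substituting back, $(a)\le O\bigl(\eta\,\frac{T L\circfn{L}\sqrt{m}}{\alpha}\bigr)$; note this is a factor $\sqrt m$ better than what the generic $H_1$-bound of \cref{thm:regret_upper_bound_minibatch} would give for this instance, since $H_1=\Theta(m^2)$ here.

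Combining the two bounds, $R_T\le \frac{mD}{\eta}+O\bigl(\eta\,\frac{T\circfn{L}(L\sqrt m+\circfn{L})}{\alpha}\bigr)$, and the stated choice $\eta=\sqrt{\frac{\alpha m D}{T\circfn{L}(Lm^{1/2}+\circfn{L})}}$ balances the two pieces to give $R_T\le O\bigl(\sqrt{\frac{D}{\alpha}T\circfn{L}(Lm^{3/2}+m\circfn{L})}\bigr)$. Finally, \cref{thm:lcirc} specialized to $A_{\text{finite},m}$ with $p=2$ gives $\circfn{L}\le L\sqrt m$; using this twice — first to absorb $m\circfn{L}\le Lm^{3/2}$, then to replace $\circfn{L}$ by $L\sqrt m$ — yields the two claimed forms $O\bigl(\sqrt{\frac{D}{\alpha}TL\circfn{L}m^{3/2}}\bigr)\le O\bigl(m\sqrt{\frac{D}{\alpha}TL^2}\bigr)$, matching \cref{thm:regret_upper_bound_finite}. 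The switch bound is immediate from the design of \cref{alg:minibatch_ftrl}, which changes its decision only at the start of each of the $\ceil{T/m}$ batches.

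The only genuinely delicate step is the combinatorial collapse of the triple sum over $(o,n,s)$ for $A_{\text{finite},m}$ at batch size exactly $m$, together with the estimate $\sum_{j=1}^m\sqrt{j}=\Theta(m^{3/2})$ (more generally $\Theta(m^{(p+1)/p})$ for a general $p$-norm); everything else is a direct specialization of results already established. A minor point worth double-checking is that the mini-batch version of \cref{thm:oco_ftrl} contributes the factor $\frac{mD}{\eta}$ rather than $\frac{D}{\eta}$, which is what forces the extra $m$ inside the square root in the choice of step size.
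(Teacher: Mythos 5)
Your proposal is correct and follows essentially the same route as the paper: specialize the refined triple-sum bound (stated just before the theorem for linear sequence dynamics with the $p$-norm) to $p=2$ and $S=m$, use $\|A_{\text{finite},m}^k\|\in\{0,1\}$ to collapse the sum to $\sqrt{m}+\sqrt{m-1}+\dots+\sqrt{1}=O(m^{3/2})$, and then balance with the mini-batch FTRL bound via the stated step size. The paper's proof is just a terser version of the same argument; your explicit accounting of which $(o,n,s)$ terms survive, and the observation that this beats the generic $H_1$ bound by a $\sqrt{m}$ factor, are accurate.
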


\begin{proof}
  Given the proof of~\cref{thm:regret_upper_bound_minibatch} and the above
  discussion, it suffices to show that
  \begin{equation*}
  \sum_{o=0}^{S-1} \left( \sum_{n=1}^{N-1} \sum_{s=1}^{S} \left( n \| A^{(n-1)S+s+o} \| \right)^2 \right)^{\frac{1}{2}} \leq H_2 = m^\frac32.
  \end{equation*}
  Recall that $\| A_{\text{finite}}^k \| = 1$ if $k \leq m$ and $0$ otherwise.
  Using this and $S = m$, we have that the above sum is at most $\sqrt{m} +
  \sqrt{m-1} + \dots + \sqrt{1} = O \left( m^\frac32 \right)$. This completes
  the proof.
\end{proof}

% --------------------------------

\section{Experiments}
\label{sec:appendix_experiments}

In this section we present some simple simulation experiments.\footnote{https://github.com/raunakkmr/oco-with-memory-code.}

% --------------------------------

\paragraph{Problem Setup.}
We consider the problem of online linear control with a constant input
controller class $\Pi = \{ \pi_u : \pi(s) = u \in \calU \}$. Let $T$ denote the
time horizon. Let $\calS = \R^d$ and $\calU = \{ u \in \R^d : \| u \|_2 \leq 1
\}$ denote the state and control spaces. Let $s_t$ and $u_t$ denote the state
and control at time $t$ with $s_0$ being the initial state. The system evolves
according to linear dynamics $s_{t+1} = F s_t + G u_t + w_t$, where $F, G \in
\R^{d \times d}$ are system matrices and $w_t \in \R^d$ is a disturbance. The
loss function in round $t$ is simply $c_t(s_t, u_t) = c_t(s_t) = \sum_{j=1}^d
s_{t,j}$, where $s_{t,j}$ denotes the $j$-th coordinate of $s_t$. The goal is to
choose a sequence of control inputs $u_0, \dots, u_{T-1} \in \calU$ to minimize
the regret
\begin{equation*}
  \sum_{t=0}^{T-1} c_t(s_t, u_t) - \min_{u \in \calU} \sum_{t=0}^{T-1} c_t(s_t^u, u),
\end{equation*}
where $s_t^u$ denotes the state in round $t$ upon choosing control input $u$ in
each round. Note that the state in round $t$ can be written as
\begin{equation*}
  s_t = \sum_{k=1}^t F^k G u_{t-k} + \sum_{k=1}^t F^k w_{t-k}.
\end{equation*}
Therefore, we can formulate this problem as an OCO with unbounded memory problem
by setting $\calX = \calU, \calH = \{ y \in \R^d : y = \sum_{k=0}^t F^k G u
\text{ for some } u \in \calU \text{ and } t \in \N \}, A(h) = F h, B(x) = G x$,
and $f_t(h_t) = c_t(\sum_{k=1}^{t} F^k G u_{t-k} + \sum_{k=1}^{t} F^k w_{t-k})$.
Note that $\calH, A$, and $B$ are all finite-dimensional.

\paragraph{Data.}
We set the time horizon $T = 750$ and dimension $d = 2$. We sample the
disturbances $\{ w_t \}$ from a standard normal distribution. We set the system
matrix $G$ to be the identity and the system matrix $F$ to be a diagonal plus
upper triangular matrix with the diagonal entries equal to $\rho$ and the upper
triangular entries equal to $\alpha$. We run simulations with various values of
$\rho$ and $\alpha$.

\paragraph{Implementation.}
We use the \texttt{cvxpy}
library~\citep{diamondB2016cvxpy,agrawalVDB2018rewriting} for
implementing~\cref{alg:ftrl}. We use step-sizes according
to~\cref{thm:regret_upper_bound,thm:regret_upper_bound_finite}. We run the
experiments on a standard laptop.

\paragraph{Results.}
We compare the regret with respect to the optimal control input of OCO with
unbounded memory and OCO with finite memory for various memory lengths $m$
in~\cref{fig:results_olc_rho_0.90} for $\rho = 0.90$
and~\cref{fig:results_olc_rho_0.95} for $\rho = 0.95$. There are a few important
takeaways.
\begin{enumerate}

  \item OCO with unbounded memory either performs as well as or better than OCO
  with finite memory, and it does so at comparable computational
  cost~(\cref{sec:appendix_implementation}). In fact, the regret curve for OCO
  with unbounded memory reaches an asymptote whereas this is not the case for
  OCO with finite memory for a variety of memory lengths.
  
  \item Knowledge of the spectral radius of $F$, $\rho$, is not sufficient to
  tune the memory length $m$ for OCO with finite memory. This is illustrated by
  comparing~\cref{fig:d_2_rho_0.9_ut_0.05,fig:d_2_rho_0.9_ut_0.07,fig:d_2_rho_0.9_ut_0.10,fig:d_2_rho_0.9_ut_0.12}.
  Even though small memory lengths perform well when the upper triangular value
  is small, they perform poorly when the upper triangular value is large. In
  contrast, OCO with unbounded memory performs well in all cases.

  \item For a fixed memory length, OCO with unbounded memory eventually performs better than OCO with finite memory. This is illustrated by
  comparing~\cref{fig:d_2_rho_0.9_ut_0.05,fig:d_2_rho_0.9_ut_0.07,fig:d_2_rho_0.9_ut_0.10,fig:d_2_rho_0.9_ut_0.12}.

  \item As we increase the memory length, the performance of OCO with finite
  memory eventually approaches that of OCO with unbounded memory. However, an
  advantage of OCO with unbounded memory is that it does not require tuning the
  memory length. For example, when $\rho = 0.90$ and the upper triangular entry
  of $F = 0.10$, OCO with finite memory with $m = 4$ performs comparably to $m =
  8$ and $m = 16$~(\cref{fig:d_2_rho_0.9_ut_0.10}). However, when the upper
  triangular entry of $F = 0.12$, then it performs much
  worse~(\cref{fig:d_2_rho_0.9_ut_0.12}).  However, OCO with unbounded memory
  performs well in all cases without the need for tuning an additional
  hyperparameter in the form of memory length.

\end{enumerate}

\begin{figure}[hb]
  \centering
  \begin{subfigure}[b]{0.4\textwidth}
      \centering
      \includegraphics[width=\textwidth]{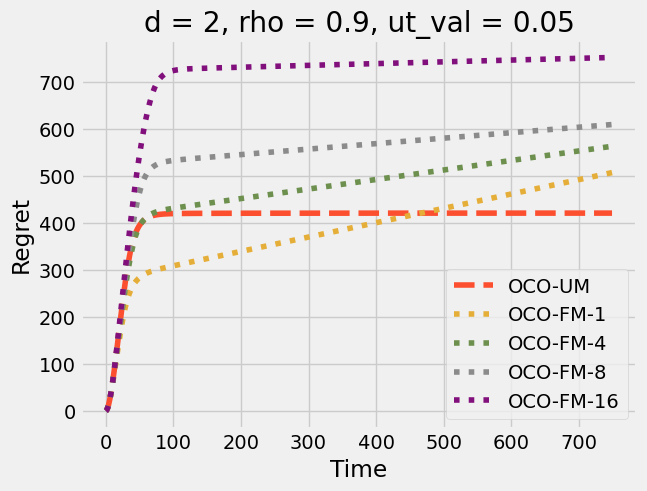}
      \caption{}
      \label{fig:d_2_rho_0.9_ut_0.05}
  \end{subfigure}
  \hfill
  \begin{subfigure}[b]{0.4\textwidth}
      \centering
      \includegraphics[width=\textwidth]{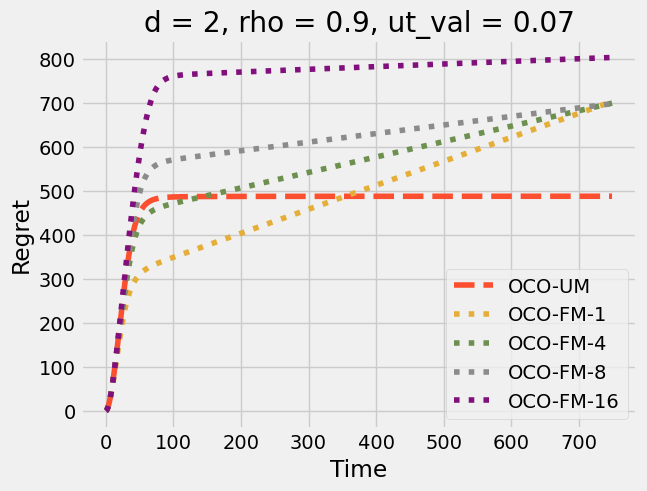}
      \caption{}
      \label{fig:d_2_rho_0.9_ut_0.07}
  \end{subfigure}
  \hfill
  \begin{subfigure}[b]{0.4\textwidth}
      \centering
      \includegraphics[width=\textwidth]{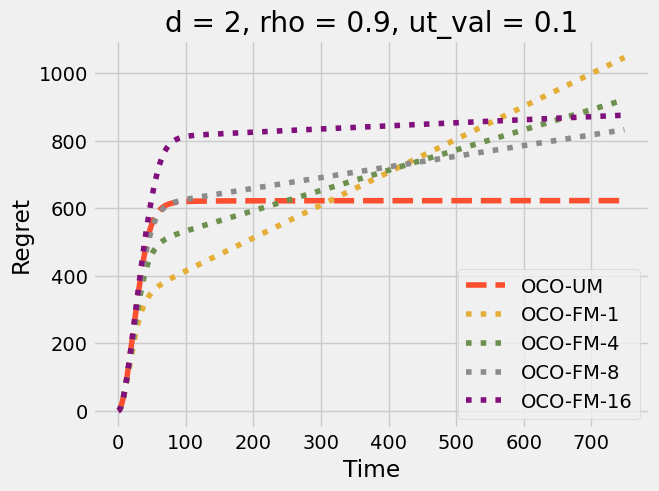}
      \caption{}
      \label{fig:d_2_rho_0.9_ut_0.10}
  \end{subfigure}
  \hfill
  \begin{subfigure}[b]{0.4\textwidth}
      \centering
      \includegraphics[width=\textwidth]{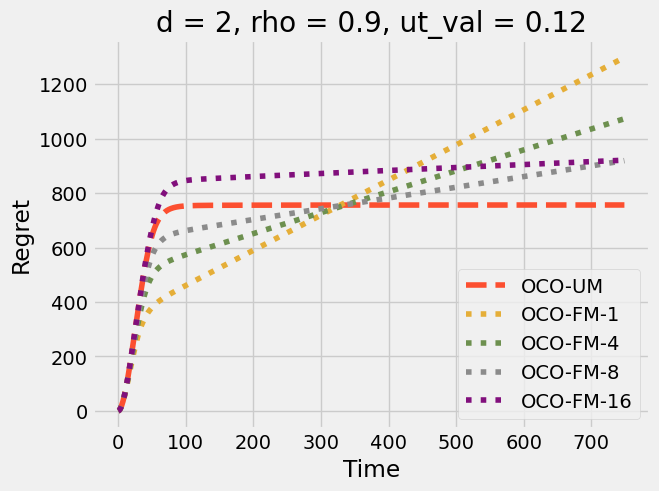}
      \caption{}
      \label{fig:d_2_rho_0.9_ut_0.12}
  \end{subfigure}
  \caption{Regret plot for $\rho = 0.90$. The label \texttt{OCO-UM} refers to formulating the problem as an OCO with unbounded memory problem. The \texttt{OCO-FM-m} refers to formulating the problem as an OCO with finite memory problem with constant memory length $m$. The titles of the plots indicate the values of the dimension, the diagonal entries of $F$, and the upper triangular entries of $F$.}
  \label{fig:results_olc_rho_0.90}
\end{figure}

\begin{figure}[hb]
  \centering
  \begin{subfigure}[b]{0.4\textwidth}
      \centering
      \includegraphics[width=\textwidth]{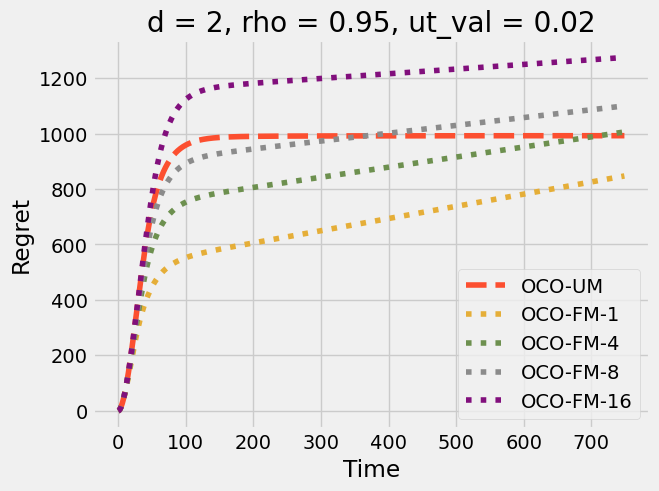}
      \caption{}
      \label{fig:d_2_rho_0.95_ut_0.02}
  \end{subfigure}
  \hfill
  \begin{subfigure}[b]{0.4\textwidth}
      \centering
      \includegraphics[width=\textwidth]{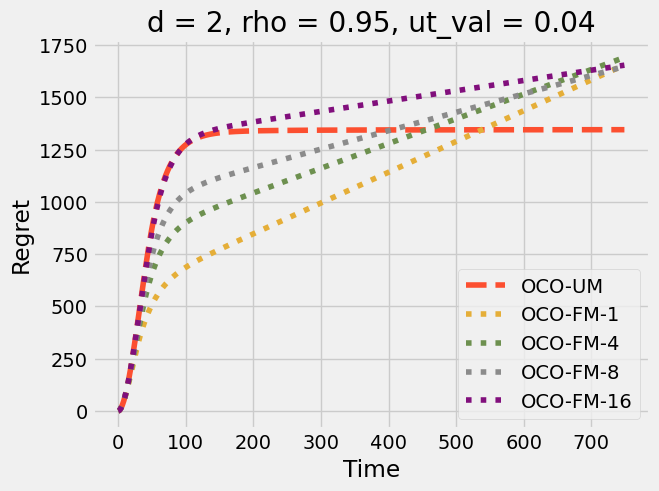}
      \caption{}
      \label{fig:d_2_rho_0.95_ut_0.04}
  \end{subfigure}
  \hfill
  \begin{subfigure}[b]{0.4\textwidth}
      \centering
      \includegraphics[width=\textwidth]{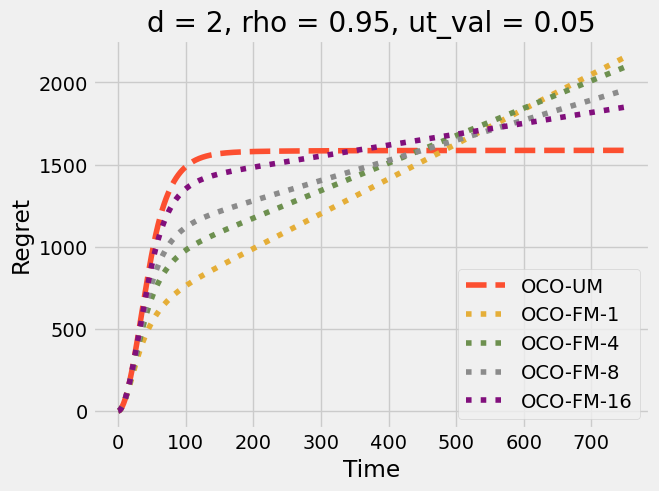}
      \caption{}
      \label{fig:d_2_rho_0.95_ut_0.05}
  \end{subfigure}
  \hfill
  \begin{subfigure}[b]{0.4\textwidth}
      \centering
      \includegraphics[width=\textwidth]{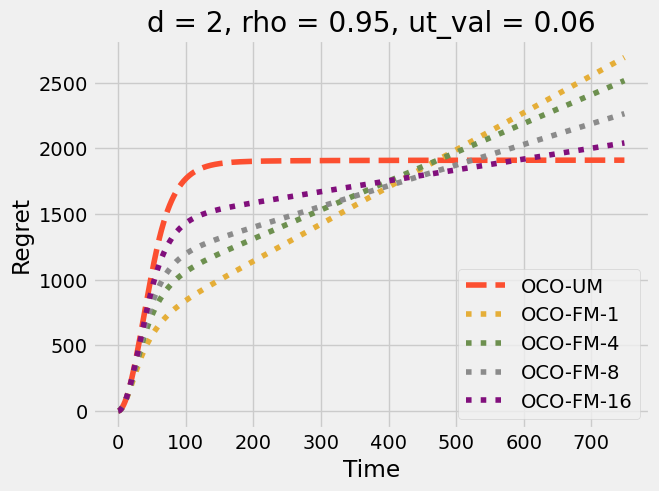}
      \caption{}
      \label{fig:d_2_rho_0.95_ut_0.06}
  \end{subfigure}
  \hfill
  \begin{subfigure}[b]{0.4\textwidth}
      \centering
      \includegraphics[width=\textwidth]{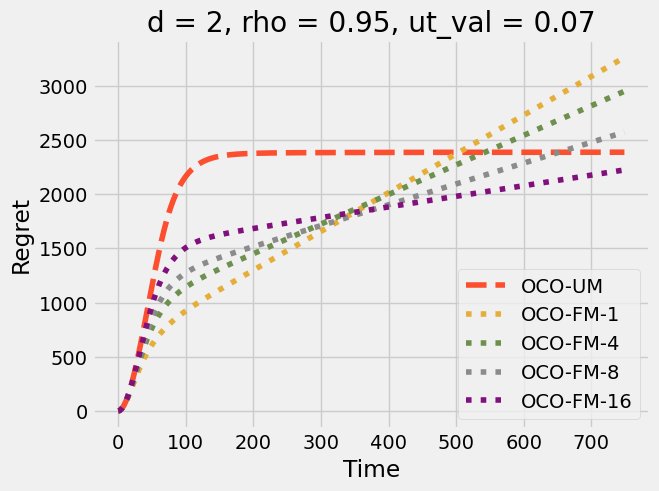}
      \caption{}
      \label{fig:d_2_rho_0.95_ut_0.07}
  \end{subfigure}
  \caption{Regret plot for $\rho = 0.95$. The label \texttt{OCO-UM} refers to formulating the problem as an OCO with unbounded memory problem. The \texttt{OCO-FM-m} refers to formulating the problem as an OCO with finite memory problem with constant memory length $m$. The titles of the plots indicate the values of the dimension, the diagonal entries of $F$, and the upper triangular entries of $F$.}
  \label{fig:results_olc_rho_0.95}
\end{figure}

% --------------------------------

% ------------------------------------------

%%%%%%%%%%%%%%%%%%%%%%%%%%%%%%%%%%%%%%%%%%%%%%%%%%%%%%%%%%%%

\end{document}